\newcommand{\BlackBox}{\rule{1.5ex}{1.5ex}}  % end of proof
    \renewenvironment{proof}{\par\noindent{\bf Proof\ }}{\hfill\BlackBox\\[2mm]}
    \newenvironment{proof}{\par\noindent{\bf Proof\ }}{\hfill\BlackBox\\[2mm]}
\newtheorem{theorem}{Theorem}
\newtheorem{lemma}{Lemma} 
\newtheorem{remark}{Remark}
\newtheorem{corollary}{Corollary}
\newtheorem{definition}{Definition}
\def\EE{\ensuremath{{\mathbb E}}}
\def\NN{\ensuremath{{\mathbb N}}}
\def\RR{\ensuremath{{\mathbb R}}}
\def\ZZ{\ensuremath{{\mathbb Z}}}
\def\Fcal{\ensuremath{{\mathcal{F}}}}
\def\Xcal{\ensuremath{{\mathcal{X}}}}
\def\Ycal{\ensuremath{{\mathcal{Y}}}}
\def\Zcal{\ensuremath{{\mathcal{Z}}}}
\newcommand{\indep}{\perp\!\!\!\perp}
\newcolumntype{C}[1]{>{\centering\arraybackslash}p{#1}}
\title{Hard-Negative Sampling for Contrastive Learning: Optimal Representation Geometry and Neural- vs Dimensional-Collapse}
\author{Ruijie Jiang$^{*, \dagger}$, Thuan Nguyen$^{*, \circ}$, Shuchin Aeron$^\dagger$, Prakash Ishwar$^\ddagger$ \thanks{ $^*$ - Equal Contribution. \\ $\dagger$: Department of Electrical Engineering, Tufts University. Emails: Ruijie.Jiang@tufts.edu, shuchin@ece.tufts.edu.\\ $\circ$: Department of Engineering, Engineering Technology, East Tennessee State University. Email: nguyent11@etsu.edu. \\
       $\ddagger$: Department of Electrical and Computer Engineering, Boston University. Email: pi@bu.edu.}}
\date{\today}
\providecommand{\keywords}[1]{\textbf{\textit{Keywords: }} #1}
\def\thanks#1{\protected@xdef\@thanks{\@thanks
        \protect\footnotetext{#1}}}
\begin{document}

\maketitle

\begin{abstract}
For a widely-studied data model and general loss and sample-hardening functions we prove that the losses of Supervised Contrastive Learning (SCL), Hard-SCL (HSCL), and Unsupervised Contrastive Learning (UCL) are minimized by representations that exhibit Neural-Collapse (NC), i.e., the class means form an Equiangular Tight Frame (ETF) and data from the same class are mapped to the same representation. We also prove that for any representation mapping, the HSCL and Hard-UCL (HUCL) losses are lower bounded by the corresponding SCL and UCL losses. In contrast to existing literature, our theoretical results for SCL do not require class-conditional independence of augmented views and work for a general loss function class that includes the widely used InfoNCE loss function. Moreover, our proofs are simpler, compact, and transparent. Similar to existing literature, our theoretical claims also hold for the practical scenario where batching is used for optimization. We empirically demonstrate, for the first time, that Adam optimization (with batching) of HSCL and HUCL losses with random initialization and suitable hardness levels can indeed converge to the NC-geometry if we incorporate unit-ball or unit-sphere feature normalization. Without incorporating hard-negatives or feature normalization, however, the representations learned via Adam suffer from Dimensional-Collapse (DC) and fail to attain the NC-geometry. These results exemplify the role of hard-negative sampling in contrastive representation learning and we conclude with several open theoretical problems for future work. The code can be found at \url{https://github.com/rjiang03/HCL/tree/main}
\end{abstract}

\keywords{Contrastive Learning, Hard-Negative Sampling, Neural-Collapse.}

\section{Introduction}
Contrastive representation learning (CL) methods learn a mapping that embeds data into a Euclidean space such that \emph{similar} examples retain close proximity to each other and \emph{dissimilar} examples are pushed apart. CL, and in particular unsupervised CL, has gained prominence in the last decade with notable success in Natural Language Processing (NLP), Computer Vision (CV), time-series, and other applications. Recent surveys \cite{balestriero2023cookbook}, \cite{rethmeier2023primer} and the references therein provide a comprehensive view of these applications.

The characteristics and utility of the learned representation depend on the joint distribution of similar (positive samples) and dissimilar data points (negative samples) and the downstream learning task. In this paper we are not interested in the downstream analysis, but in characterizing the geometric structure and other properties of global minima of the contrastive learning loss under a general latent data model. In this context, we focus on understanding the impact and utility of hard-negative sampling in the UCL and SCL settings. When carefully designed, hard-negative sampling improves downstream classification performance of representations learned via CL as demonstrated in \cite{robinson2020contrastive}, \cite{jiang2023hard}, and \cite{long2023hard}. While it is known that hard-negative sampling can be performed implicitly by adjusting what is referred to as the \emph{temperature parameter} in the CL loss, in this paper we set this parameter to unity and explicitly model hard-negative sampling through a general ``hardening'' function that can tilt the sampling distribution to generate negative samples that are more similar to (and therefore harder to distinguish from) the positive and anchor samples. We also numerically study the impact of feature normalization on the learned representation geometry. 

\subsection{Main contributions}  Our \emph{main theoretical contributions} are Theorems \ref{thm:hardCLvsCL}, \ref{thm:genSCLlosslb}, and \ref{thm:genUCLlosslb}, which, under a widely-studied latent data model, hold for any convex, argument-wise non-decreasing contrastive loss function, any non-negative and argument-wise non-decreasing hardening function to generate hard-negative samples, and norm-bounded representations of dimension at least $C-1$ where $C$ is the number of classes in the dataset. 

% This lowers the subscripts. This has to come only after the first math content has been typeset.
\fontdimen16\textfont2=4pt
\fontdimen17\textfont2=4pt

Theorem \ref{thm:hardCLvsCL} establishes that the HSCL loss dominates the SCL loss and similarly the HUCL loss dominates the UCL loss. This is accomplished by creatively utilizing the Harris comparison inequality.
In this context we note that Theorem~3.1 in \cite{wu2020conditional} is a somewhat similar result for the UCL setting, but for a special loss function and it does not address hard-negatives directly and in the generality that we do.

Theorem~\ref{thm:genSCLlosslb} is a novel result which states that the globally optimal representation geometry for both SCL and HSCL corresponds to Neural-Collapse (NC) (see Definition~\ref{def:neuralcollapse}) with the same optimal loss value. In contrast to existing works (see related works section \ref{sec:main_related_work}), we show that achieving NC in SCL and HSCL does not require class-conditional independence of the positive samples. 

Similarly, Theorem~\ref{thm:genUCLlosslb} establishes the optimality of NC-geometry for UCL if the representation dimension is sufficiently large compared to the number of latent classes which, in turn, is implicitly determined by the joint distribution of the positive examples that corresponds to the augmentation mechanism. 

A comprehensive set of experimental results on one synthetic and three real datasets are detailed in Sec.~\ref{sec:experiments} and Appendix~\ref{sec:additionalexp}. These include experiments that study effects of two initialization methods, three different feature normalization methods,  three different batch sizes, two very different families of hardening functions, and two different CL sampling strategies.
Empirical results show that when using the Adam optimizer with random initialization, the matrix of class means for SCL is badly conditioned and effectively low-rank, i.e., it exhibits Dimensional-Collapse (DC). In contrast, the use of hard-negatives at appropriate hardness levels mitigates DC and enables convergence to the global optima. A similar phenomenon is observed in the unsupervised settings. We also show that feature normalization is critical for mitigating DC in these settings. Results are qualitatively similar across different datasets, a range of batch sizes, hardening functions, and CL sampling strategies. 

\section{Related work}
\label{sec:main_related_work}

\subsection{Supervised Contrastive Learning}
The theoretical results for our SCL setting, where in contrast to \cite{graf2021dissecting} and \cite{khosla2020supervised} we make use of label information in positive \emph{as well as} negative sampling, are novel. The debiased SCL loss in \cite{chuang2020debiased} corresponds to our SCL loss, but no analysis pertaining to optimal representation geometry was considered in \cite{chuang2020debiased}. 
\cite{feeney2023sincere} considers using label information for negative sampling in SCL with InfoNCE loss, calling it the SINCERE loss. Our theoretical results prove that NC is the optimal geometry for the SINCERE loss. Furthermore, our results and analysis also apply to hard-negative sampling as well, a scenario not considered thus far for SCL.

We would like to note that our theoretical set-up for UCL under the sampling mechanism of Figure \ref{fig:simclr}, can be seen to be aligned with the SCL analysis that makes use of label information \emph{only} for positive samples as done in \cite{khosla2020supervised} and \cite{graf2021dissecting}. Therefore, our theoretical results provide an alternative proof of optimality of NC based on simple, compact, and transparent probabilistic arguments complementing the proof of a similar result in \cite{graf2021dissecting}. We note that similarly to \cite{graf2021dissecting}, our arguments also hold for the case when one approximates the loss using batches.

We want to point out that in all key papers that conduct a theoretical analysis of contrastive learning e.g., \cite{arora2019theoretical, graf2021dissecting, robinson2021contrastive}, the positive samples are assumed to be conditionally i.i.d., conditioned on the label. However, this conditional independence may not hold in practice when using augmentation mechanisms typically considered in CL settings e.g., \cite{oh2016deep, oord2018representation,wu2018unsupervised, chen2020simple}.

Unlike the recent work of \cite{zhou2022optimization} which shows that the \emph{optimization landscape} of supervised learning with least-squares loss is benign, i.e. all critical points other than the global optima are strict saddle points, in Sec.~\ref{sec:experiments} we demonstrate that the optimization landscape of SCL is more complicated. Specifically, not only may the global optima  not be reached by SGD-like methods with random initialization, but also the local optima exhibit the Dimensional-Collapse (DC) phenomenon. However, our experiments demonstrate that these issues are remedied via HSCL whose global optimization landscape may be better. Here we note that \cite{yaras2022neural} show that with unit-sphere normalization, Riemannian gradient descent methods can achieve the global optima for SCL, underscoring the importance of optimization methods and constraints for training in CL.

\subsection{Unsupervised Contrastive Learning} 

\cite{wang2020understanding} argue that asymptotically (in the number of negative samples) the InfoNCE loss for UCL optimizes for a trade-off between the alignment of positive pairs while ensuring uniformity of features on the hypersphere. However, a non-asymptotic and global analysis of the optimal solution is still lacking.  In contrast, for UCL in Theorem \ref{thm:genUCLlosslb}, we show that as long as the embedding dimension is larger than the number of \emph{latent} classes, which in turn is determined by the distribution of the similar samples, the optimal representations in UCL exhibit NC-geometry.

Our results also complement several recent papers, e.g., \cite{parulekar2023infonce}, \cite{wen2021toward}, \cite{wang2021chaos}, that study the role of augmentations in UCL. 
Similar to theoretical works analyzing UCL, e.g. \cite{arora2019theoretical} and \cite{lee2021predicting}, our results also assume conditional independence of positive pairs given the label. This assumption may or may not be satisfied in practice.

We demonstrate that a recent result, viz., Theorem 4 in \cite{jing2021understanding}, that attempts to explain DC in UCL is limited in that under a suitable initialization, the UCL loss trained with Adam does not exhibit DC (see Sec.~\ref{sec:experiments}). Furthermore, we demonstrate empirically, for the first time, that HUCL mitigates DC in UCL at moderate hardness levels. For CL (without hard-negative sampling), \cite{ziyin2022shapes} characterize local solutions that correspond to DC but leave open the analysis of training dynamics leading to collapsed solutions.

A geometrical analysis of HUCL is carried out in \cite{robinson2020contrastive}, but the optimal solutions are only characterized asymptotically (in the number of negative samples) and for the case when hardness also goes to infinity, the analysis seems to require knowledge of supports of class conditional distributions. In contrast, we show that the geometry of the optimal solution for HUCL depends on the hardness level and is, in general, different compared to UCL due to the possibility of class collision.

\section{Contrastive Learning Framework}\label{sec:contrastive_learning}

\subsection{Mathematical model}\label{sec:CLprelim}
\noindent{\textit{Notation:}} $k, C \in \NN, k > 1, C > 1, \Ycal := \{1,\ldots, C\}, \Zcal \subseteq \RR^{d_\Zcal}$. For $i, j \in \ZZ$, $i < j$, $i:j := i, i+1, \ldots, j$, and $a_{i:j} := a_i, a_{i+1}, \ldots, a_j$. If $i > j$, $i:j$ and $a_{i:j}$ are ``null''.

Let $f:\Xcal \rightarrow \Zcal$ denote a (deterministic) representation mapping from {data} space $\Xcal$ to representation space $\Zcal \subseteq \RR^{d_\Zcal}$. Let $\Fcal$ denote a family of such representation mappings. 
Contrastive Learning (CL) selects a representation from the family by minimizing an expected loss function that penalizes ``misalignment'' between the representation of an \textit{anchor} sample $z = f(x)$ and the representation of a \textit{positive} sample $z^+ = f(x^+)$ and simultaneously penalizes ``alignment'' between $z$ and the representations of $k$ \textit{negative} samples $z^-_i := f(x^-_i), i = 1:k$. 

We consider a CL loss function $\ell_{k}$ of the following general form.

\begin{definition}[Generalized Contrastive Loss]\label{def:genCLloss}
\begin{align}
         \ell_{k}(z,z^+,z^-_{1:k}) := \psi_k(z^\top  (z^-_1  -   z^+),\ldots,z^\top (z^-_k  -  z^+)) \label{eq:genCLlossfun}
\end{align}
where $\psi_k: \RR^k \rightarrow \RR$ is a convex function that is also argument-wise non-decreasing (i.e., non-decreasing with respect to each argument when the other arguments are held fixed) throughout $\RR^k$.     
\end{definition}

This subsumes and generalizes popular CL loss functions such as InfoNCE and triplet-loss with sphere-normalized representations. InfoNCE corresponds to $\psi_k(t_{1:k}) = \log(\alpha + \sum_{i=1}^k e^{t_i})$ with $\alpha > 0$\footnote{This is the log-sum-exponential function which is convex over $\RR^k$ for all $\alpha \geq 0$ and strictly convex over $\RR^k$ if $\alpha > 0$.} 
and $\psi_k(t_{1:k}) = \sum_{i=1}^k \max\{t_i + \alpha,\; 0\}$, $\alpha > 0$, is the triplet-loss with sphere-normalized representations. However, some CL losses such as the spectral contrastive loss of \cite{haochen2021provable} are not of this form. 

The CL loss is the expected value of the CL loss function:
\begin{align}
     L^{(k)}_{CL}(f) := \EE_{(x,x^+,x^-_{1:k}) \sim p_{CL} }[\ell_{k}(f(x),f(x^+),f(x^-_{1}),\ldots,f(x^-_{k}))] \nonumber %\label{eq:genCLloss}
\end{align}
where $p_{CL}(x,x^+,x^-_{1:k})$ is the joint probability distribution of the anchor, positive, and $k$ negative samples and is designed differently within the supervised and unsupervised settings as described below.

\medskip

\noindent \textbf{\textit{Supervised CL (SCL):}} Here, all samples have class labels: a \textit{common} class label $y \in \Ycal$ for the anchor and positive sample and one class label for each negative sample denoted by 
$y^-_{i} \in \Ycal$ for the $i$-th negative sample, $i = 1,\ldots,k$. The joint distribution of all samples and their labels is described in the following equation:
\begin{align}
    p_{SCL}(y,x,x^+,y^-_{1:k},x^-_{1:k}) &:= \lambda_y \, q(x,x^+|y) \prod_{i=1}^k r(y^-_i|y)\, s(x^-_i|y^-_i), \label{eq:SCLjoint1of2}  \\
    r(y^-_i|y) &:= \frac{1(y^-_i\neq y) \lambda_{y^-_i}}{(1-\lambda_y)} \label{eq:SCLjoint2of2}
\end{align}
where $\lambda_y \in (0,1)$ for all $y \in \Ycal$ is the marginal distribution of the anchor's label and $s(x^-|y^-)$ is the conditional probability distribution of any negative sample $x^-$ given its class $y^-$. 

This joint distribution may be interpreted from a sample generation perspective as follows: first, a common class label $y \in \Ycal$ for the anchor and positive sample is sampled from a class marginal probability distribution $\lambda$. Then, the anchor and positive samples are generated by sampling from the conditional distribution $q(x,x^+|y)$. Then, given $x,x^+$ and their common class label $y$, the $k$ negative samples and their labels are generated in a conditionally IID manner. The sampling of $y^-_i, x^-_i$, for each $i$, can be interpreted as first sampling a class label $y^-_i$ \textbf{\textit{different}} from $y$ in a manner consistent with the class marginal probability distribution $\lambda$ (sampling from distribution $r(y^-_i|y)$) and then sampling $x^-_i$ from the conditional probability distribution $s(\cdot|\cdot)$ of negative samples given class $y^-_i$. 
Thus in SCL, the $k$ negative samples are conditionally IID and independent of the anchor and positive sample given the anchor's label.

\textbf{\textit{In the typical supervised setting, the anchor, positive, and negative samples all share the same common conditional probability distribution $s(\cdot|\cdot)$ within each class given their respective labels.}}

We denote the CL loss in the supervised setting by $L^{(k)}_{SCL}(f)$. 
\medskip

%%%%%%%%%%%%%%%%%%%%%%%%%%%%%%%%%%%%%%%%%%%%%%%%%%%%%%%%%%%%%%%%%%%%%%%%%%%%%%%%%%%%%%%%%%%%%%%%%%%%%%%%%%%

%
\noindent \textbf{\textit{Unsupervised CL (UCL):}} 
Here samples do not have labels or rather they are latent (unobserved) and  
the $k$ negative samples are IID and independent of the anchor and positive samples.

Latent labels (classes) in UCL can be interpreted as indexing latent clusters. Suppose that there are $C$ latent classes from which the anchor, positive, and the $k$ negative samples can be drawn from. Then the joint distribution of all samples and their latent labels can be described by the following equation:   
\begin{align}
    p_{UCL}(y,x,x^+,y^-_{1:k},x^-_{1:k}) &:= \lambda_y \, q(x,x^+|y) \prod_{i=1}^k r(y^-_i)\, s(x^-_i|y^-_i), \label{eq:UCLjoint} 
\end{align}
where $\lambda$ is the marginal distribution of the anchor's latent label, $r(\cdot)$ the marginal distribution of the latent labels of negative samples, and  $s(x^-|y^-)$ is the conditional probability distribution of any negative sample $x^-$ given its latent label $y^-$. We have used the same notation as in SCL (and slightly abused it for $r(\cdot)$) in order to make the similarities and differences between the SCL and UCL distribution structures transparent.  

\textbf{\textit{In the typical UCL setting, $r = \lambda$ and the conditional distribution of $x$ given its label $y$ and the conditional distribution of $x^+$ given its label $y$ are both $s(\cdot|y)$ which is the conditional distribution of any negative sample given its label.}}

\noindent \textbf{Independence relationships between anchor, positive and $k$ negatives in the typical UCL setting:}  To preclude potential confusion, we note that the joint distribution given by (\ref{eq:UCLjoint}) and the typical UCL setting described above imply the following independence relationships between samples. The anchor-positive-pair is independent of the $k$ negatives. The $k$ negatives are IID. But the anchor sample and the positive sample are \textit{dependent} through their common shared latent class label. In the typical UCL setting, the anchor and positive samples are conditionally IID given their shared label, but they are not (unconditionally) IID. In fact, their dependence through their shared label is crucial. This remains true even in the balanced scenario where all classes are equally likely, i.e., $\lambda_i = 1/C$ for all $i$. Overall, the $(k+2)$ samples (anchor, positive, $k$ negatives) are not IID.

We also note that the number of latent classes, $C$, in the UCL setting, is an unknown intrinsic property of the underlying sampling mechanisms generating the data and is not a tunable parameter of a neural network model used to learn a mapping from data space to representation space. 

We denote the CL loss in the unsupervised setting by $L^{(k)}_{UCL}(f)$. \\

%%%%%%%%%%%%%%%%%%%%%%%%%%%%%%%%%%%%%%%%%%%%%%%%%%%%%%%%%%%%%%%%%%%%%%%%%%%%%%%%%%%%%%%%%%%%%%%%%%%%%%%%%%%

\begin{figure}[h!]
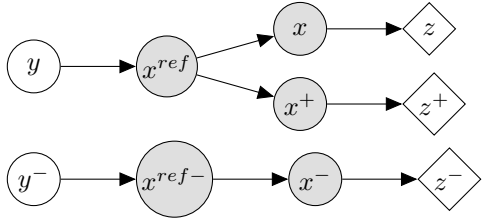

  \begin{center}
    \begin{tabular}{c}
     % \begin{tikzpicture}
\tikz{ %
        \node[obs] (x-ref) {$x^{ref}$} ; %
        \node[latent, left=of x-ref, yshift=-0.0cm] (y) {$y$} ; %
        \edge {y} {x-ref} ; %
        \node[obs, right=of x-ref, yshift=0.5cm] (x) {$x$} ;
        \node[obs, right=of x-ref, yshift=-0.5cm] (x+) {$x^+$} ;
        \edge {x-ref} {x, x+} ;
        \node[det, right=of x, yshift=-0.0cm] (z) {$z$} ;
        \node[det, right=of x+, yshift=-0.0cm] (z+) {$z^+$} ;
        \edge {x} {z} ;
        \edge {x+} {z+} ;

        \node[obs, right=of y, yshift=-1.5cm] (xrefneg) {$x^{ref-}$} ;
        \node[latent, left=of xrefneg, yshift=-0.0cm] (y-neg) {$y^-$} ; %
         \node[obs, right=of xrefneg, yshift=0.0cm] (x-neg) {$x^{-}$} ;
         \node[det, right=of x-neg, yshift=-0.0cm] (z-neg) {$z^-$} ;
         \edge {y-neg} {xrefneg} ;
         \edge {xrefneg} {x-neg} ;
         \edge {x-neg} {z-neg} ;
      }
    \end{tabular}
  \end{center}
  \caption{Graphical model for augmentation and negative sampling for SCL and UCL settings used in practical implementations such as Sim-CLR. }
  \label{fig:simclr}
\end{figure}

\noindent \textbf{\textit{Anchor and positive samples:}}
For the SCL scenario, we will consider sampling mechanisms in which the representations of the anchor $x$ and the positive sample $x^+$ have the same conditional probability distribution $s(\cdot|y)$ given their common label $y$ (see (\ref{eq:SCLjoint1of2})). We will not, however, assume that $x$ and $x^+$ are conditionally independent given $y$. This is compatible with settings where $x$ and $x^+$ are generated via IID augmentations of a common reference sample $x^{ref}$ as in SimCLR \cite{chen2020simple} (see Fig.~\ref{fig:simclr} for the model and Appendix~\ref{sec:thmgenSCLlosslb} for a proof of compatibility of this model).

For UCL, we assume the same mechanism for sampling the positive samples as that for SCL, but the latent label $y$ is unobserved. 
Further, the negative samples are generated independently of the positive pairs using the same mechanism, e.g., IID augmentations of an $x^{ref-}$ chosen independently of $x^{ref}$ (see Fig.~\ref{fig:simclr}). Thus, the anchor, positive, and negative samples will all have the same marginal distribution given by $\sum_{i=1}^C \lambda_i s(\cdot|i)$.

\subsection{Hard-negative sampling}\label{sec:HCL}

Hard-negative sampling aims to generate negative samples whose representations are ``more aligned'' with that of the anchor (making them harder to distinguish from the anchor) compared to a given reference negative sampling distribution (whether unsupervised or supervised). We consider a very general class of ``hardening'' mechanisms that include several classical approaches as special cases. To this end, we define a \textbf{\textit{hardening function}} as follows.  
\begin{definition}[Hardening function]
$\eta: \RR^k \rightarrow \RR$  is a hardening function if it is non-negative and argument-wise non-decreasing throughout $\RR^k$. 
\end{definition}
As an example, $\eta(t_{1:k}) := \prod_{i=1}^k e^{\beta t_i}$, $\beta > 0$, is an exponential tilting hardening function {employed in \cite{robinson2021contrastive} and \cite{jiang2023hard}}.
\medskip
 
\sloppy \textit{\textbf{Hard-negative SCL (HSCL)}}: From (\ref{eq:SCLjoint1of2}) it follows that in SCL, 
$p(x^-|x,x^+,y) = p(x^-|y) = \sum_{y^-\in \Ycal} r(y^-|y)\,s(x^-|y^-) =: p^-_{SCL}(x^-|y)$ 
is the reference negative sampling distribution for one negative sample and $p(x^-_{1:k}|x,x^+,y) = p(x^-_{1:k}|y) = \prod_{i=1}^k p^-_{SCL}(x^-_i|y)$ is the reference negative sampling distribution for $k$ negative samples. Let $\eta$ be a hardening function such that for all $x \in \Xcal$ and all $y \in \Ycal$, 
\begin{align}
    \gamma(x,y,f) := \EE_{x^-_{1:k} \sim\, \mathrm{IID}\, p^-_{SCL}(\cdot|y)}[\eta(f(x)^\top f(x^-_1),\ldots,f(x)^\top f(x^-_k))] \in(0, \infty). \nonumber
%    \label{eq:gammazy}
\end{align}
Then we define the $\eta$-harder negative sampling distribution for SCL as follows.
\begin{definition}[$\eta$-harder negatives for SCL]
\begin{equation}
      p^-_{\text{HSCL}}(x^-_{1:k}|x,x^+,y,f) \displaystyle:= \frac{\eta(f(x)^\top  f(x^-_1)  ,\ldots, f(x)^\top  f(x^-_k))}{\gamma(x,y,f)} \prod_{i=1}^k  p^-_{SCL}(x^-_i|y). \label{eq:p-HSCL}
\end{equation}
\end{definition}
Observe that negative samples which are more aligned with the anchor in the representation space, i.e., $f(x)^\top f(x^-_i)$ is large, are sampled relatively more often in $p^-_{HSCL}$ than in the reference $p^-_{SCL}$ because $\eta$ is argument-wise non-decreasing throughout $\RR^k$.

In HSCL, $x^-_{1:k}$ are conditionally independent of $x^+$ given $x$ and $y$ but they are not conditionally independent of $x$ given $y$ (unlike in SCL). Moreover, $x^-_{1:k}$ may not be conditionally IID given $(x,y)$ if the hardening function is not (multiplicatively) separable. We also note that unlike in SCL, $p^-_{HSCL}$ depends on the representation function $f$.

We denote the joint probability distribution of all samples and their labels in the hard-negative SCL setting by $p_{HSCL}$ and the corresponding CL loss by $L^{(k)}_{HSCL}(f)$.
\medskip

\textbf{\textit{Hard-negative UCL (HUCL):}}  From (\ref{eq:UCLjoint}) it follows that in UCL, $p(x^-|x,x^+,y) = p(x^-) = \sum_{y^-\in \Ycal} r(y^-)\,s(x^-|y^-) =: p^-_{UCL}(x^-)$ is the reference negative sampling distribution for one negative sample and $p(x^-_{1:k}|x,x^+,y) = p(x^-_{1:k}) = \prod_{i=1}^k p^-_{UCL}(x^-_i)$ is the reference negative sampling distribution for $k$ negative samples. Let $\eta$ be a hardening function such that for all $x \in \Xcal$, 
\begin{align}
          \gamma(x,f) := \EE_{x^-_{1:k} \sim\, \mathrm{IID}\, p^-_{UCL}}[\eta(f(x)^\top f(x^-_1),\ldots,f(x)^\top f(x^-_k))] \in(0, \infty). \label{eq:gammaz}
\end{align}
Then we define the $\eta$-harder negative sampling probability distribution for UCL as follows.
\begin{definition}[$\eta$-harder negatives for UCL]
\begin{align}
   p^-_{HUCL}(x^-_{1:k}|x,x^+,y,f)  :=  \frac{\eta(f(x)^\top f(x^-_1),\ldots, f(x)^\top f(x^-_k))}{\gamma(x,f)}   \cdot   \prod_{i=1}^k   p^-_{UCL}(x^-_i).
\end{align}
\end{definition}
Again, observe that negative samples which are more aligned with the anchor in representation space, i.e., $f(x)^\top f(x^-_i)$ is large, are sampled relatively more often in $p^-_{HUCL}$ than in the reference $p^-_{UCL}$ because $\eta$ is argument-wise non-decreasing throughout $\RR^k$.

In HUCL, $x^-_{1:k}$ are conditionally independent of $x^+$ given $x$, but they are not independent of $x$  (unlike in UCL). Moreover, $x^-_{1:k}$ may not be conditionally IID given $x$ if the hardening function is not (multiplicatively) separable. We also note that unlike in UCL, $p^-_{HUCL}$ depends on the representation function $f$.

We denote the joint probability distribution of  all samples and their (latent labels) in the hard-negative UCL setting by $p_{HUCL}$ and the corresponding CL loss by $L^{(k)}_{HUCL}(f)$.

%%%%%%%%%%%%%%%%%%%%%%%%%%%%%%%%%%%%%%%%%%%%%%%%%%%%%%%%%%%%%%%%%%%%%%%%%%%%%%%%%%%%%%%%%%%%%%%%%%%%%%%%%%%

{\section{Theoretical results}\label{sec:theory}
In this section, we present all our theoretical results using the notation and mathematical framework for CL described in the previous section. We will focus on general CL loss in Section~\ref{sec:HCLvsCL} and Section~\ref{sec:SCLtheory} and then extend results  to sample-based empirical and batched empirical losses in Section~\ref{sec:batchempSCLloss}.

\subsection{Hard-negative CL loss is not smaller than CL loss} \label{sec:HCLvsCL}
\begin{theorem}[Hard-negative CL versus CL losses]\label{thm:hardCLvsCL}
Let $\psi_k$ in (\ref{eq:genCLlossfun}) be argument-wise non-decreasing over $\RR^k$ and assume that all expectations associated with  $ L^{(k)}_{UCL}(f)$, $L^{(k)}_{HUCL}(f)$, $L^{(k)}_{SCL}(f)$, $L^{(k)}_{HSCL}(f)$ exist and are finite. Then, for all $f$ and all $k$, $L^{(k)}_{HUCL}(f) \geq  L^{(k)}_{UCL}(f)$ and $L^{(k)}_{HSCL}(f) \geq  L^{(k)}_{SCL}(f)$.
\end{theorem}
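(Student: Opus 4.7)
The plan is to prove both inequalities via a single conditional-covariance argument, applying Harris's (a.k.a.\ FKG for product measures) inequality to $k$ conditionally independent scalar random variables whose joint law governs the reference negative-sampling distribution. I describe the UCL/HUCL case in detail; the SCL/HSCL case is analogous after adding the label $y$ to the conditioning set.

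First, I would rewrite the HUCL loss as an expectation against the \emph{reference} UCL distribution. Using the definition of $p^-_{HUCL}$ in equation \eqref{eq:p-HSCL} (analogously for UCL) and the fact that $p^-_{HUCL}$ modifies only the conditional law of $x^-_{1:k}$ given $(x,x^+,y)$, I get
\begin{align*}
L^{(k)}_{HUCL}(f) \;=\; \EE_{p_{UCL}}\!\left[\frac{\eta\bigl(f(x)^{\!\top}\!f(x^-_1),\ldots,f(x)^{\!\top}\!f(x^-_k)\bigr)}{\gamma(x,f)}\cdot \psi_k\bigl(f(x)^{\!\top}\!(f(x^-_1)-f(x^+)),\ldots\bigr)\right].
\end{align*}
Conditioning on $(x,x^+)$ and introducing the shorthand $T_i := f(x)^{\!\top}\!f(x^-_i)$ and $s := f(x)^{\!\top}\!f(x^+)$, I note that under $p_{UCL}(\cdot\mid x,x^+)$ the variables $T_1,\ldots,T_k$ are (conditionally) independent because the negatives $x^-_{1:k}$ are IID from $p^-_{UCL}$ and $f$ is deterministic; moreover $s$ is a constant given $(x,x^+)$.

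Next, I would invoke Harris's inequality in the following form: if $T_1,\ldots,T_k$ are independent real-valued random variables and $A, B : \RR^k \to \RR$ are both coordinate-wise non-decreasing with $\EE[A(T)^2], \EE[B(T)^2] < \infty$ (or otherwise making the product integrable), then $\EE[A(T)B(T)] \geq \EE[A(T)]\,\EE[B(T)]$. I would apply this with $A(T_{1:k}) := \psi_k(T_1-s,\ldots,T_k-s)$, which is coordinate-wise non-decreasing in $T_{1:k}$ because $\psi_k$ is argument-wise non-decreasing (Definition~\ref{def:genCLloss}), and $B(T_{1:k}) := \eta(T_1,\ldots,T_k)$, which is non-negative and argument-wise non-decreasing by hypothesis. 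Dividing by the positive constant $\gamma(x,f) = \EE[B(T_{1:k})\mid x]$ (which equals $\EE[B\mid x,x^+]$ since the negatives are independent of $x^+$ given $x$) gives the conditional inequality
\begin{align*}
\EE_{p_{UCL}}\!\left[\tfrac{B(T_{1:k})}{\gamma(x,f)}\,A(T_{1:k})\;\Big|\;x,x^+\right] \;\geq\; \EE_{p_{UCL}}\bigl[A(T_{1:k})\mid x,x^+\bigr].
\end{align*}
The left-hand side is exactly the conditional HUCL expectation and the right-hand side is the conditional UCL expectation. Taking the outer expectation over $(x,x^+)$ (whose marginal is identical under $p_{UCL}$ and $p_{HUCL}$) yields $L^{(k)}_{HUCL}(f) \geq L^{(k)}_{UCL}(f)$.

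For the SCL/HSCL case I would condition on $(x,x^+,y)$ instead. Under $p_{SCL}(\cdot\mid x,x^+,y)$ the negatives $x^-_{1:k}$ are IID from $p^-_{SCL}(\cdot\mid y)$, so $T_1,\ldots,T_k$ are again conditionally independent, and the identical Harris argument applies verbatim with $\gamma(x,y,f)$ replacing $\gamma(x,f)$. The main conceptual obstacle is really just the careful bookkeeping: verifying that (i) the $T_i$ are independent under the \emph{reference} law after the correct conditioning (and that $\eta$ need not factorize), (ii) the arguments of $\psi_k$ and $\eta$ share the same monotonicity direction in the $T_i$, and (iii) $(x,x^+)$ (respectively $(x,x^+,y)$) have matching marginals under the reference and hardened joint laws, so that the conditional inequality lifts to the unconditional one. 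Integrability is covered by the hypothesis that all four expected losses are finite, with the standard truncation argument if needed to justify applying Harris.
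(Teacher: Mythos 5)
Your proposal is correct and follows essentially the same route as the paper: both condition on $(x,x^+)$ (respectively $(x,x^+,y)$), exploit conditional IID-ness of the scalars $T_i = f(x)^\top f(x^-_i)$ under the reference negative law, apply Harris's inequality to the pair of coordinate-wise non-decreasing functions $\psi_k(T_1 - s, \ldots, T_k - s)$ and $\eta(T_1,\ldots,T_k)$, divide by $\gamma$, and lift to the unconditional inequality by taking the outer expectation. The paper packages the Harris step as a small corollary about the tilted measure $p_H$, but the underlying argument is identical to yours.
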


We note that convexity of $\psi_k$ is not needed in Theorem~\ref{thm:hardCLvsCL}.
The proof of Theorem~\ref{thm:hardCLvsCL} is based on the generalized (multivariate) association inequality due to Harris, Theorem~2.15 \cite{BLMbook2013} and its corollary which are stated below.

\begin{lemma}[Harris-inequality, Theorem~2.15 in \cite{BLMbook2013}]\label{lem:mvassoineq}
Let $g: \RR^k \rightarrow \RR$ and $h: \RR^k \rightarrow \RR$ be argument-wise non-decreasing throughout $\RR^k$. If $u_{1:k} \sim \,\mathrm{IID}\,p$ then    
\begin{align*}
\EE_{u_{1:k}\sim\,\mathrm{IID}\,p}[g(u_{1:k}) &h(u_{1:k})] \geq             \EE_{u_{1:k}\sim\,\mathrm{IID}\,p}[g(u_{1:k})] \cdot \EE_{u_{1:k}\sim\,\mathrm{IID}\,p}[h(u_{1:k})]
\end{align*}
whenever the expectations exist and are finite.
\end{lemma}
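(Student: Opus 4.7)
The plan is to prove the Harris inequality by induction on the dimension $k$, with a clever symmetrization trick for the one-dimensional base case and an iterated-conditioning argument for the inductive step. The whole proof exploits the fact that the product of two quantities with the same sign is non-negative, combined with the IID assumption which allows free relabeling of coordinates.

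For the base case $k=1$, let $u, u'$ be IID copies from $p$. Since $g$ and $h$ are both non-decreasing on $\RR$, the two factors $(g(u)-g(u'))$ and $(h(u)-h(u'))$ always have the same sign: both are non-negative when $u \geq u'$ and both are non-positive when $u \leq u'$. Hence
\begin{equation*}
\bigl(g(u)-g(u')\bigr)\bigl(h(u)-h(u')\bigr) \geq 0 \quad \text{a.s.}
\end{equation*}
Taking expectations, expanding the product, and using the fact that $u$ and $u'$ are identically distributed so that $\EE[g(u)h(u')] = \EE[g(u)]\EE[h(u)]$ and similarly for the cross term, yields
\begin{equation*}
2\EE[g(u)h(u)] - 2\EE[g(u)]\EE[h(u)] \geq 0,
\end{equation*}
which is the $k=1$ claim.

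For the inductive step, suppose the inequality holds in dimension $k-1$ and consider $g,h : \RR^k \to \RR$ argument-wise non-decreasing. I would condition on the last coordinate $u_k$ and apply Fubini to write
\begin{equation*}
\EE[g(u_{1:k})h(u_{1:k})] = \EE_{u_k}\bigl[\, \EE_{u_{1:k-1}}[g(u_{1:k-1},u_k)\,h(u_{1:k-1},u_k) \mid u_k]\, \bigr].
\end{equation*}
For each fixed $u_k$, the sliced functions $\tilde g(\cdot) := g(\cdot, u_k)$ and $\tilde h(\cdot) := h(\cdot, u_k)$ are argument-wise non-decreasing on $\RR^{k-1}$, so the inductive hypothesis applied to the IID sample $u_{1:k-1}$ gives
\begin{equation*}
\EE_{u_{1:k-1}}[\tilde g \tilde h \mid u_k] \;\geq\; G(u_k)\, H(u_k),
\end{equation*}
where $G(u_k) := \EE_{u_{1:k-1}}[g(u_{1:k-1},u_k)]$ and $H(u_k) := \EE_{u_{1:k-1}}[h(u_{1:k-1},u_k)]$. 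The key observation is that $G$ and $H$ are themselves non-decreasing functions of the single variable $u_k$: integrating a pointwise-monotone function in some of its arguments against a distribution on those arguments preserves monotonicity in the remaining argument. Applying the $k=1$ case to $G,H$ then yields $\EE_{u_k}[G(u_k)H(u_k)] \geq \EE[G(u_k)]\EE[H(u_k)] = \EE[g(u_{1:k})]\EE[h(u_{1:k})]$, completing the induction.

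The main delicate point, and where I would be most careful, is the preservation of monotonicity under partial integration in the inductive step, namely verifying that $G$ and $H$ remain non-decreasing in $u_k$. This follows from the monotone convergence / dominated convergence machinery once one checks that for $u_k \leq u_k'$, the pointwise inequality $g(u_{1:k-1},u_k) \leq g(u_{1:k-1},u_k')$ holds for all $u_{1:k-1}$ and can be integrated. A secondary concern is keeping all expectations finite throughout the induction, which is handled by the standing assumption in the lemma. Everything else reduces to expanding products and using the IID property to swap dummy variables.
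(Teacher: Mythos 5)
Your proof is correct. The paper itself does not prove this lemma --- it is imported verbatim as Theorem~2.15 of the cited reference \cite{BLMbook2013} --- and your argument (the $k=1$ Chebyshev symmetrization via two IID copies, followed by induction on $k$ through conditioning on one coordinate and noting that partial integration preserves argument-wise monotonicity of the sliced means $G$ and $H$) is precisely the standard proof given in that reference, so there is nothing to reconcile with the paper. The only point worth tightening if you wrote it out in full is the integrability bookkeeping in the inductive step: to apply the $k=1$ case to $G$ and $H$ you should note that $GH$ is bounded above by the integrable conditional expectation $\EE[gh \mid u_k]$, or simply prove the inequality first for bounded monotone functions and pass to the limit by monotone truncation; this is a routine fix, not a gap.
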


\begin{corollary}\label{cor:assoineqtitledver}
Let $\eta: \RR^k \rightarrow \RR$ be non-negative and argument-wise non-decreasing throughout $\RR^k$ such that $\gamma := \EE_{u_{1:k} \sim\,\mathrm{IID}\,p}[\eta(u_{1:k})] \in (0,\infty)$. 
Let $p_H(u_{1:k}) := \frac{\eta(u_{1:k})}{\gamma} \prod_{i=1}^k p(u_i)$. If $g: \RR^k \rightarrow \RR$ is argument-wise non-decreasing throughout $\RR^k$ such that $\EE_{u_{1:k} \sim\,\mathrm{IID}\,p}[g(u_{1:k})]$ exists and is finite, then
\[
\EE_{u_{1:k} \sim p_H}[g(u_{1:k})] \geq \EE_{u_{1:k}\sim\,\mathrm{IID}\,p}[g(u_{1:k})].
\] 
\end{corollary}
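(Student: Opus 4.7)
The plan is to recognize the corollary as an essentially one-line consequence of Lemma~\ref{lem:mvassoineq} once the tilted expectation on the left-hand side is unwound via the definition of $p_H$. First I would write
\[
\EE_{u_{1:k} \sim p_H}[g(u_{1:k})] = \int g(u_{1:k})\, \frac{\eta(u_{1:k})}{\gamma}\, \prod_{i=1}^k p(u_i)\, du_{1:k} = \frac{1}{\gamma}\, \EE_{u_{1:k} \sim\,\mathrm{IID}\, p}\bigl[g(u_{1:k})\, \eta(u_{1:k})\bigr],
\]
so everything reduces to comparing $\EE[g\eta]$ under the IID product measure to $\gamma \cdot \EE[g]$.

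Next, I would invoke Lemma~\ref{lem:mvassoineq} with the pair $(g,\eta)$: both functions are argument-wise non-decreasing throughout $\RR^k$ by hypothesis, so the lemma yields
\[
\EE_{u_{1:k}\sim\,\mathrm{IID}\,p}\bigl[g(u_{1:k})\, \eta(u_{1:k})\bigr] \;\geq\; \EE_{u_{1:k}\sim\,\mathrm{IID}\,p}[g(u_{1:k})] \cdot \EE_{u_{1:k}\sim\,\mathrm{IID}\,p}[\eta(u_{1:k})] \;=\; \gamma \cdot \EE_{u_{1:k}\sim\,\mathrm{IID}\,p}[g(u_{1:k})].
\]
Dividing through by $\gamma \in (0,\infty)$ gives the claimed inequality.

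The only delicate point is making sure Lemma~\ref{lem:mvassoineq} applies, i.e.\ that the expectation $\EE[g\eta]$ exists in an extended sense. Since $\gamma$ is finite and $\EE[g]$ is finite by hypothesis, the right-hand side of Harris is finite; if $\EE[g\eta]$ were $+\infty$, the inequality still holds trivially, and if $g$ has a sign, I would split $g = g^+ - g^-$ or shift by a constant (which preserves argument-wise monotonicity) to reduce to the non-negative case where Harris applies without integrability concerns. This is the only step requiring any care; the rest is algebra, so I do not anticipate a genuine obstacle. The entire argument fits in a few lines and is essentially just ``change of measure + Harris.''
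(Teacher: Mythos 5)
Your proof is correct and follows exactly the paper's argument: rewrite the tilted expectation as $\frac{1}{\gamma}\,\EE_{\mathrm{IID}\,p}[g\,\eta]$ via the definition of $p_H$, apply the Harris inequality (Lemma~\ref{lem:mvassoineq}) to the pair $(g,\eta)$, and cancel $\gamma$. The extra remark you add about integrability of $\EE[g\eta]$ is a reasonable technical caution that the paper leaves implicit in the phrase ``whenever the expectations exist and are finite.''
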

%

%%%%%%%%%%%%%%%%%%%%%%%%%%%%%%%%%%%%%%%%%%%%%%%%%%%%%%%%%%%%%%%%%%%%%%%%%%%%%%%%%%%%%%%%%%%%%%%%%%%%%%%%%%%

%
\begin{proof}\begin{align*}
&\EE_{u_{1:k} \sim p_{H}}[g(u_{1:k})]  \nonumber \\
&= \EE_{u_{1:k}\sim\,\mathrm{IID}\,p}\left[g(u_{1:k}) \frac{\eta(u_{1:k})}{\gamma} \right] \\
&\geq \EE_{u_{1:k}\sim\,\mathrm{IID}\,p}[g(u_{1:k})] \frac{\EE_{u_{1:k}\sim\,\mathrm{IID}\,p}[\eta(u_{1:k})]}{\gamma} \\
&= \EE_{u_{1:k}\sim\,\mathrm{IID}\,p}[g(u_{1:k})]
\end{align*}
where the inequality in the second step follows from the Harris-inequality

(see Lemma~\ref{lem:mvassoineq}). \end{proof}
%

%%%%%%%%%%%%%%%%%%%%%%%%%%%%%%%%%%%%%%%%%%%%%%%%%%%%%%%%%%%%%%%%%%%%%%%%%%%%%%%%%%%%%%%%%%%%%%%%%%%%%%%%%%%

\noindent \textit{Proof of Theorem~\ref{thm:hardCLvsCL}}. The proof essentially follows from Corollary~\ref{cor:assoineqtitledver} by defining $u_i := f(x)^\top f(x^-_i)$ for $i=1:k$, defining $g_{x,x^+}(u_{1:k}) := \psi_k(u_1 - f(x)^\top f(x^+), \ldots, u_k - f(x)^\top f(x^+))$, noting that $u_{1:k}$ are conditionally IID given $(x,x^+)$ in the UCL setting and conditionally IID given $(x,x^+,y)$ in the SCL setting, and verifying that the conditions of Corollary~\ref{cor:assoineqtitledver} hold. 

For clarity, we provide a detailed proof of the inequality $L^{(k)}_{HSCL}(f) \geq L^{(k)}_{SCL}(f)$. The detailed proof of the inequality $L^{(k)}_{HUCL}(f) \geq L^{(k)}_{UCL}(f)$ parallels that for the (more intricate) supervised setting and is omitted. 

{\small 
\begin{align}
&L^{(k)}_{HSCL}(f) = \nonumber \\
&= \EE_{(x,x^+,y)}\Big[\EE_{x^-_{1:k}\sim p^-_{HSCL}(x^-_{1:k}|x,x^+,y,f)}\Big[\psi_k(f(x)^\top (f(x^-_1) -  f(x^+),\ldots,f(x)^\top(f(x^-_k) - f(x^+)))\Big]\Big] \nonumber \\
&= \EE_{(x,x^+,y)}\bigg[\EE_{x^-_{1:k}\sim\,\mathrm{IID}\,p^-_{SCL}(\cdot|y)}\bigg[ 
\psi_k(f(x)^\top (f(x^-_1) -  f(x^+)),\ldots,f(x)^\top(f(x^-_k) - f(x^+))) \ \cdot \nonumber \\
&\hspace{55ex}  \frac{\eta(f(x)^\top f(x^-_1),\ldots,f(x)^\top f(x^-_k))}{\gamma(x,y,f)}\bigg]\bigg] \label{eq:HSCL-SCLineqproof1} \\
&\geq \EE_{(x,x^+,y)}\Bigg[\EE_{x^-_{1:k}\sim\,\mathrm{IID}\,p^-_{SCL}(\cdot|y)}\Big[\psi_k(f(x)^\top (f(x^-_1) -  f(x^+)),\ldots,f(x)^\top(f(x^-_k) - f(x^+)))\Big] \ \cdot \nonumber \\
&\hspace{35ex} \frac{\EE_{x^-_{1:k}\sim\,\mathrm{IID}\,p^-_{SCL}(\cdot|y)}\Big[\eta(f(x)^\top f(x^-_1),\ldots,f(x)^\top f(x^-_k))\Big]}{\gamma(x,y,f)}\Bigg] \label{eq:HSCL-SCLineqproof2} \\
&= \EE_{(x,x^+,y)}\Big[\EE_{x^-_{1:k}\sim\,\mathrm{IID}\,p^-_{SCL}(\cdot|y)}\Big[
\psi_k(f(x)^\top (f(x^-_1) -  f(x^+)),\ldots,f(x)^\top(f(x^-_k) - f(x^+)))\Big]\Big] \label{eq:HSCL-SCLineqproof3} \\
&= L^{(k)}_{SCL}(f) \nonumber
\end{align}}where (\ref{eq:HSCL-SCLineqproof1}) follows from (\ref{eq:p-HSCL}) which defines $p^-_{HSCL}$, (\ref{eq:HSCL-SCLineqproof2}) follows from the application of the Harris-inequality (see Lemma~\ref{lem:mvassoineq}) to the inner expectation where $x$ and $x^+$ are held fixed, and (\ref{eq:HSCL-SCLineqproof3}) follows from the definition of $\gamma(x,y,f)$ in (\ref{eq:gammaz}). \hfill \qedsymbol{}

Theorem 1 is used in Theorem 2 to show that both SCL and HSCL achieve NC.

\subsection{Lower bound for SCL loss and Neural-Collapse}\label{sec:SCLtheory}
Consider the SCL model with anchor, positive, and $k$ negative samples generated as described in Sec.~\ref{sec:CLprelim}. Within this setting, we have the following lower bound for the SCL loss and conditions for equality.

\begin{theorem}[Lower bound for SCL loss and conditions for equality with unit-ball representations and equiprobable classes]\label{thm:genSCLlosslb}
In the SCL model with anchor, positive, and negative samples generated as described in 
(\ref{eq:SCLjoint1of2}) and (\ref{eq:SCLjoint2of2}), let
(a) $\lambda_y = \tfrac{1}{C}$ for all $y \in \Ycal$ (equiprobable classes),  
(b) $\Zcal = \{z\in \RR^{d_{\Zcal}}: ||z|| \leq 1\}$ (unit-ball representations), and
(c) the anchor, positive and negative samples have a common conditional probability distribution $s(\cdot|\cdot)$ within each class given their respective labels.
If $\psi_k$ is a convex function that is also argument-wise non-decreasing throughout $\RR^k$,
then for all $f: \Xcal \rightarrow \Zcal$,
\begin{align}
    L^{(k)}_{SCL}(f) \geq  \psi_k\left(\tfrac{-C}{(C-1)},\ldots,\tfrac{-C}{(C-1)}\right). \label{eq:genSCLlosslb}
\end{align}
For a given $f \in \Fcal$ and all $y \in \Ycal$, let
\begin{align}
\mu_y := \EE_{x\sim s(x|y)}[f(x)]. \label{eq:mudefn}
\end{align}
If a given $f \in \Fcal$ satisfies the following additional condition:
\begin{align}
\mbox{Equal inner-product class means: } \forall j, \ell \in \Ycal: j \neq \ell, \quad \mu_j^\top \mu_\ell = \tfrac{-1}{C-1}, \label{eq:equaldotprodcond}
\end{align}
then equality will hold in (\ref{eq:genSCLlosslb}), i.e., additional condition (\ref{eq:equaldotprodcond}) is \textbf{sufficient} for equality in (\ref{eq:genSCLlosslb}).
Additional condition (\ref{eq:equaldotprodcond}) also implies the following properties: 
\begin{enumerate}
\setlength \itemsep{-3pt}
    \item[(i)] zero-sum class means: $\sum_{j\in \Ycal} \mu_j = 0$,  
    \item[(ii)] unit-norm class means: $\forall j \in \Ycal, \|\mu_j\|=1$, 
    \item[(iii)] $d_\Zcal \geq C-1$.
    \item[(iv)] zero within-class variance: for all $j \in \Ycal$ and all $i = 1:k$, $\mathrm{Pr}(f(x) = f(x^+) = \mu_j | y = j) = 1 =  \mathrm{Pr}(f(x^-_i) = \mu_j | y^-_i = j)$, and
    \item[(v)] The support sets of $s(\cdot|y)$ for all $y \in \Ycal$ must be disjoint and the anchor, positive and negative samples must share a common deterministic labeling function defined by the support sets.
    \item[(vi)] equality of HSCL and SCL losses:   
        \[
        L^{(k)}_{HSCL}(f) = L^{(k)}_{SCL}(f) = \psi_k\left(\tfrac{-C}{(C-1)},\ldots,\tfrac{-C}{(C-1)}\right).
        \]
\end{enumerate}
If $\psi_k$ is a \textbf{strictly} convex function that is also argument-wise \textbf{strictly} increasing throughout $\RR^k$, then additional condition (\ref{eq:equaldotprodcond}) is also \textbf{necessary} for equality to hold in (\ref{eq:genSCLlosslb}). 
\end{theorem}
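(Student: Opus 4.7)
The plan is to bound $L^{(k)}_{SCL}(f)$ from below by applying Jensen's inequality once to the outer expectation using convexity of $\psi_k$, then invoking argument-wise monotonicity of $\psi_k$ to replace the resulting scalar arguments with the target value $-C/(C-1)$. The equiangular-means hypothesis is then verified to make every inequality tight, and conversely strict versions of the hypotheses on $\psi_k$ are shown to force the equiangular geometry.

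For the lower bound, define $v_i := f(x)^\top(f(x^-_i) - f(x^+))$ and recall $\mu_y := \EE_{x \sim s(\cdot|y)}[f(x)]$. First, convexity of $\psi_k$ and Jensen's inequality give $L^{(k)}_{SCL}(f) \geq \psi_k(\EE[v_1], \ldots, \EE[v_k])$. The exchangeability of the $k$ negatives (IID given $y$) makes all $\EE[v_i]$ coincide with a common scalar $a = \EE[f(x)^\top f(x^-_1)] - \EE[f(x)^\top f(x^+)]$. The first term is computed by conditioning on $y$ (under which $x^-_1$ is independent of $x$) to obtain $\frac{1}{C(C-1)}\bigl(\|\sum_j \mu_j\|^2 - \sum_j \|\mu_j\|^2\bigr)$, which is at least $-1/(C-1)$ since $\|\mu_y\| \leq 1$ (Jensen plus unit-ball) implies $\sum_j \|\mu_j\|^2 \leq C$. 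The second term satisfies $\EE[f(x)^\top f(x^+)] \leq 1$ by Cauchy-Schwarz and unit-ball. Combining yields $a \geq -C/(C-1)$, and monotonicity of $\psi_k$ gives (\ref{eq:genSCLlosslb}).

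For sufficiency and the listed properties, the equiangular condition together with $\mu_j^\top\mu_\ell = -1/(C-1)$ for $j \neq \ell$ implies $\|\sum_j \mu_j\|^2 = \sum_j\|\mu_j\|^2 - C$, which combined with $\|\mu_j\|\leq 1$ forces $\|\mu_j\|=1$ and $\sum_j \mu_j = \bfzero$, yielding (i) and (ii). Property (iii) follows because the Gram matrix of $\mu_1,\ldots,\mu_{C-1}$ equals $\frac{C}{C-1} I - \frac{1}{C-1}\mathbf{1}\mathbf{1}^\top$, which is positive definite, so these vectors are linearly independent in $\RR^{d_\Zcal}$. Cauchy-Schwarz equality in $1 = \|\mu_y\|^2 = \mu_y^\top \EE[f(x)|y] \leq \EE[\|f(x)\| \mid y] \leq 1$ forces $f(x) = \mu_y$ almost surely given $y$, and similarly for $x^+$ and $x^-_i$, giving (iv); distinctness of the $\mu_j$ then forces the class-conditional supports of $s(\cdot|j)$ to be disjoint with a common deterministic labeling rule, giving (v). Under these conditions every $v_i$ equals $-C/(C-1)$ almost surely under any negative sampling distribution supported on $\{y^-_i \neq y\}$, so both the SCL and HSCL losses collapse to $\psi_k(-C/(C-1),\ldots,-C/(C-1))$; combined with Theorem~\ref{thm:hardCLvsCL} this delivers (vi).

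For necessity under strict convexity and strict monotonicity, equality in Jensen's requires each $v_i$ to be a.s. constant, and strict monotonicity then forces $\EE[v_i] = -C/(C-1)$. Since the two inequalities $\EE[f(x)^\top f(x^-_i)] \geq -1/(C-1)$ and $\EE[f(x)^\top f(x^+)] \leq 1$ sum to exactly $-C/(C-1)$, both must be tight. Tightness of the latter with the unit-ball constraint forces $f(x) = f(x^+)$ a.s.\ with unit norm; tightness of the former pins down $\sum_j \mu_j = \bfzero$ and $\|\mu_j\|=1$, and Cauchy-Schwarz equality then forces $f(x) = \mu_y$ a.s.\ given $y$. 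Finally, the a.s.-constancy of $v_i = \mu_y^\top \mu_{y^-_i} - 1$ across all label pairs $(y, y^-_i)$ with $y \neq y^-_i$ forces the cross inner products $\mu_y^\top \mu_{y^-_i}$ to share a single value, which the earlier marginal computation pins at $-1/(C-1)$. The main delicacy throughout is that $x$ and $x^+$ are \emph{not} assumed conditionally independent given $y$; the resolution is to handle $\EE[f(x)^\top f(x^+)]$ via the crude Cauchy-Schwarz-plus-unit-ball bound, which is already sharp enough for both the lower bound and the equality analysis, circumventing the conditional-independence assumptions used in prior work.
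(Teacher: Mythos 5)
Your proof is correct and follows essentially the same route as the paper: multivariate Jensen applied to $\psi_k$, the Cauchy--Schwarz/unit-ball bound $f(x)^\top f(x^+) \le 1$, the "completing the square'' identity $\sum_{j\neq\ell}\mu_j^\top\mu_\ell = \|\sum_j\mu_j\|^2 - \sum_j\|\mu_j\|^2$ combined with $0\le\|\sum_j\mu_j\|^2$ and $\sum_j\|\mu_j\|^2\le C$, the equality analysis via strict Jensen and argument-wise strict monotonicity yielding zero within-class variance, and the Gram-matrix rank argument for $(iii)$. The only differences are cosmetic orderings---you apply Jensen first and then push $\EE[v_i]$ down to $-C/(C-1)$ via monotonicity, whereas the paper first bounds $f(x)^\top f(x^+)\le 1$ inside the expectation and then applies Jensen; you use the Gram matrix of $\mu_1,\dots,\mu_{C-1}$ (positive definite) rather than of all $C$ means (rank $C-1$); and for $(vi)$ you note that $v_i\equiv-C/(C-1)$ a.s.\ under any negative distribution supported on $\{y_i^-\neq y\}$ rather than arguing $p^-_{HSCL}$ collapses to $p^-_{SCL}$---all equivalent.
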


Theorem~\ref{thm:genSCLlosslb} has a necessary part and a sufficient part. The sufficient part states that if there is an $f$ such that the equal angles condition (\ref{eq:equaldotprodcond}) holds (and we note that this is a property of the family of representation maps and the dataset, but not the loss function), then the lower bound (\ref{eq:genSCLlosslb}) will be achieved for any general $\psi_k$ (convex and argument-wise non-decreasing) and we will have other properties (i)--(vi) as well. The necessary part of Theorem~\ref{thm:genSCLlosslb} (the last sentence) states that the only way we can get equality in the lower bound is to have the equal angles condition (\ref{eq:equaldotprodcond}). 

We note that the results of Theorem~\ref{thm:genSCLlosslb} hold for any loss function that is convex and argument-wise non-decreasing (see Definition~\ref{def:genCLloss}), which includes the triplet loss, and is not confined to just the InfoNCE and similar loss functions. In the ``Empirical SCL loss'' paragraph of Section~\ref{sec:batchempSCLloss}, we discuss how condition (\ref{eq:equaldotprodcond}) can be guaranteed provided the family of representation mappings (defined by a neural network) has sufficiently high capacity. This holds irrespective of the loss function.

\begin{proof}We have 
\begin{align}
    &L^{(k)}_{SCL}(f) =  \EE_{x,x^+,x^-_{1:k}} [\psi_k(f(x)^\top(f(x_1^-) - f(x^+)),\ldots,f(x)^\top(f(x_k^-) - f(x^+)))] \nonumber \\
    &\geq \EE_{x,x^+,x^-_{1:k}} [\psi_k(f(x)^\top f(x_1^-) - 1,\ldots,f(x)^\top f(x_k^-) - 1)] \label{eq:duetostrictincr} \\
    &\geq \psi_k(\EE_{x,x^-_1}[f(x)^\top f(x_1^-)] - 1,\ldots,\EE_{x,x^-_k}[f(x)^\top f(x_k^-)] - 1) \label{eq:duetocvxty1} \\
    &= \psi_k(\EE_{y,y^-_1}[\EE_{x,x^-_1}[f(x)^\top f(x_1^-)|y,y^-_1]] - 1,\ldots,\EE_{y,y^-_k}[\EE_{x,x^-_k}[f(x)^\top f(x_k^-)|y,y^-_k]] - 1) \label{eq:iteratedexp} \\
    &= \psi_k\Big(\sum_{j,\ell \in \Ycal, j \neq \ell} \tfrac{\mu_j^\top \mu_\ell}{C(C-1)} - 1,\ldots, \sum_{j,\ell \in \Ycal, j \neq \ell}  \tfrac{\mu_j^\top \mu_\ell}{C(C-1)}  - 1\Big) \label{eq:duetoSCLdistrib} \\    
    &= \psi_k\Big(\tfrac{||\sum_{j\in \Ycal} \mu_j||^2 - \sum_{j\in \Ycal} \|\mu_j\|^2}{C(C-1)}  - 1,\ldots, \tfrac{||\sum_{j\in \Ycal} \mu_j||^2 - \sum_{j\in \Ycal} \|\mu_j\|^2}{C(C-1)} - 1\Big) \label{eq:completingsquare} \\
    &\geq \psi_k\Big(\tfrac{0 - C}{C(C-1)}  - 1,\ldots, \tfrac{0-C}{C(C-1)} - 1\Big) \label{eq:duetounitballnondecpsi} \\ 
    &= \psi_k\left(\tfrac{-C}{(C-1)},\ldots, \tfrac{-C}{(C-1)} \right) \label{eq:duetoresultzeromeanunitsphere}
\end{align}which is the lower bound in (\ref{eq:genSCLlosslb}). 
Inequality (\ref{eq:duetostrictincr}) is because $\psi_k$ is argument-wise non-decreasing and $f(x)^\top f(x^+) \leq 1$ by the Cauchy-Schwartz inequality since $\|f(x)\|, \|f(x^+)\| \leq 1$ (unit-ball representations).
Inequality (\ref{eq:duetocvxty1}) is Jensen's inequality applied to the convex function $\psi_k$.
Equality (\ref{eq:iteratedexp}) is due to the law of iterated expectations. 
Equality (\ref{eq:duetoSCLdistrib}) follows from (\ref{eq:SCLjoint1of2}), (\ref{eq:SCLjoint2of2}), and the assumption of equiprobable classes.
Equality (\ref{eq:completingsquare}) is because $\sum_{j, \ell \in \Ycal} \mu^\top_j \mu_\ell = \sum_{j, \ell \in \Ycal, j \neq \ell} \mu^\top_j \mu_\ell + \sum_{j \in \Ycal} ||\mu_j||^2$.
Inequality (\ref{eq:duetounitballnondecpsi}) is because  $\psi_k$ is argument-wise non-decreasing and the smallest possible value of $||\sum_{j\in \Ycal} \mu_j||^2$ is zero and the largest possible value of $||\mu_j||^2$ is one (unit-ball representations):
Jensen's inequality for the \textit{strictly} convex function $\|\cdot\|^2$ together with $\|f(x)\|^2 \leq  1$ (unit-ball representations) imply that for all $y \in \Ycal$, we have 
\begin{align}
 \|\mu_y\|^2 = \|\EE_{x\sim s(x|y)}[f(x)]\|^2 \leq \EE_{x \sim s(x|y)} [\|f(x)\|^2] \leq 1. \label{eq:jensensqnorm}
\end{align}
Finally the equality (\ref{eq:duetoresultzeromeanunitsphere}) is because $\tfrac{0-C}{C(C-1)} - 1 = \tfrac{-C}{(C-1)}$.
$(i)$  and $(ii)$ \textit{Proof that additional condition (\ref{eq:equaldotprodcond}) implies zero-sum and unit-norm class means:}
Inequality (\ref{eq:jensensqnorm}) together with condition (\ref{eq:equaldotprodcond}) implies that
{\small
\begin{align*}
 0 \leq \|\sum_{j\in\Ycal} \mu_y\|^2 = \sum_{j,\ell\in\Ycal} \mu_j^\top\mu_\ell = \sum_{j,\ell\in\Ycal, j\neq \ell} \underbrace{\mu_j^\top\mu_\ell}_{= \frac{-1}{C-1}} + \sum_{j\in\Ycal} \underbrace{\|\mu_j\|^2}_{\leq 1} \leq - \frac{C(C-1)}{C-1} +  \sum_{j\in\Ycal} 1 = -C + C = 0.
\end{align*}}Thus $\|\sum_{j\in\Ycal} \mu_y\|^2 =0$ and for all $j \in \Ycal$, $\|\mu_j\|^2=1$.

$(iii)$ Let $M:= [\mu_1,\ldots,\mu_C] \in \RR^{d_\Zcal \times C}$. Then from  
(\ref{eq:equaldotprodcond}) and $(ii)$, the gram matrix $M^\top M = \tfrac{C}{C-1} I_C - \tfrac{1}{C-1} \mathbbm{1}_C \mathbbm{1}_C^\top$ where $I_C$ is the $C \times C$ identity matrix and $\mathbbm{1}_C$ is the $C\times 1$ column vector of all ones. From this it follows that $M^\top M$ has one eigenvalue of zero corresponding to eigenvector $\mathbbm{1}_C$ and $C-1$ eigenvalues all equal to $\tfrac{C}{C-1}$ corresponding to $(C-1)$ orthogonal eigenvectors spanning the orthogonal complement of  $\mathbbm{1}_C$. Thus, $M$
has $C-1$ nonzero singular values all equal to $\sqrt{\frac{C}{C-1}}$ and a rank equal to $C-1 \leq d_\Zcal$.

$(iv)$ \textit{Proof that additional condition (\ref{eq:equaldotprodcond}) implies zero within-class variance:} We just proved that additional condition (\ref{eq:equaldotprodcond}) together with the unit-ball representation constraint implies unit-norm class means. This, together with (\ref{eq:jensensqnorm}) implies that for all $y \in \Ycal$,
\begin{align}
 1 = \|\mu_y\|^2 = \|\EE_{x\sim s(x|y)}[f(x)]\|^2 \leq \EE_{x \sim s(x|y)} [\|f(x)\|^2] \leq 1. \label{eq:unitnormjenseneq}
\end{align}

This implies that we have equality in Jensen's inequality applied to the strictly convex function $g(z) = \|z\|^2$, with $z=f(x)$, and the push-forward conditional probability measure $f_{\#}(s(x|y))$. Equality can occur iff, with probability one given $y$, we have $f(x) = \mu_y$ (since $\|\cdot\|^2$ is strictly convex). 

Since the anchor, positive and negative samples all have a common conditional probability distribution $s(\cdot|\cdot)$ within each class given their respective labels, it follows that for all $j \in \Ycal$ and all $i = 1:k$, $\mathrm{Pr}(f(x) = \mu_j | y = j) = \mathrm{Pr}(f(x^+) = \mu_j | y = j)  = \mathrm{Pr}(f(x^-_i) = \mu_j | y^-_i = j) = 1$. Moreover, since the anchor and positive samples have the same label, for all $j \in \Ycal$, with probability one given $y = j$, we have $f(x) = f(x^+) = \mu_j$.

\textit{Proof that additional condition (\ref{eq:equaldotprodcond}) is sufficient for equality to hold in (\ref{eq:genSCLlosslb}):} From the proofs of $(i), (ii)$, and $(iv)$ above, if additional condition (\ref{eq:equaldotprodcond}) holds, then we showed that with probability one given $y=j$ we have $f(x) = f(x^+) = \mu_j$ (see the para below (\ref{eq:unitnormjenseneq})). This equality of $f(x)$ and $f(x^+)$ is a conditional equality given the class. Since this is true for all classes, it implies equality (with probability one) of $f(x)$ and $f(x^+)$ without conditioning on the class:
\begin{align}
\mathrm{Pr}(f(x) = f(x^+)) = \sum_{j \in \Ycal} \tfrac{\mathrm{Pr}(f(x)=f(x^+)|y=j)}{C} = 1. \label{eq:eqanchposrep}
\end{align}
From (\ref{eq:eqanchposrep}) we get  
\begin{align}
\mathrm{Pr}(f(x)^\top f(x^+) = 1) = \mathrm{Pr}(||f(x)||^2 = 1) =  \sum_{j \in \Ycal} \tfrac{\mathrm{Pr}(||f(x)||^2 = 1 | y=j)}{C} 
= 1, \label{eq:anchposrepdot}
\end{align}
since $f(x) = f(x^+) = \mu_j$ with probability one given $y = j$ and $\|\mu_j\|^2 = 1$. Equality in (\ref{eq:duetostrictincr}) then follows from (\ref{eq:eqanchposrep}) and (\ref{eq:anchposrepdot}). 
Moreover, due to zero within-class variance we will have 
\begin{align}
\mbox{with probability one, for all } i = 1:k,
f(x)^\top f(x^-_i) = \mu_y^\top \mu_{y^-_i} = \tfrac{-1}{C-1}, \label{eq:equaldotprodcondwp1}
\end{align}
and then we will have equality in (\ref{eq:duetocvxty1}) and (\ref{eq:duetounitballnondecpsi}). Therefore additional condition (\ref{eq:equaldotprodcond}) is a sufficient condition for equality to hold in (\ref{eq:genSCLlosslb}).

$(v)$ \textit{Proof that support sets of $s(\cdot|y), y \in \Ycal$ are disjoint:} From part $(iv)$, all samples belonging to the support set of $s(\cdot|y), y \in \Ycal$, are mapped to $\mu_y$ by $f$. From part $(ii)$ and condition (\ref{eq:equaldotprodcond}), distinct labels have distinct representation means: for all $y, y' \in \Ycal$, if $y' \neq y$, then $\mu_y \neq \mu_{y'}$. Therefore the support sets of $s(\cdot|y)$ for all $y\in\Ycal$ must be disjoint. Since the anchor, positive, and negative samples all share a common conditional probability distribution $s(\cdot|\cdot)$ and the same marginal label distribution $\lambda$, it follows that they share a common conditional distribution of label given sample (labeling function). Since the support sets of $s(\cdot|y)$ for all $y\in\Ycal$ are disjoint, the labeling function is deterministic and is defined by the support set to which a sample belongs.

$(vi)$ \textit{Proof of equality of HSCL and SCL losses under additional condition (\ref{eq:equaldotprodcond}):} under the equal inner-product class means condition, with probability one $f(x)^\top f(x^-_i) = \tfrac{-1}{C-1}$ simultaneously for all $i = 1:k$ and $\eta(f(x)^\top f(x^-_1),\ldots, f(x)^\top f(x^-_k)) = \eta(\tfrac{-1}{C-1},\ldots,\tfrac{-1}{C-1})$, a constant. Consequently, for all $x,y$ and the given $f$, we must have $\gamma(x,y,f) = \eta(\tfrac{-1}{C-1},\ldots,\tfrac{-1}{C-1})$ which would imply that (see Equation~\ref{eq:p-HSCL}) $p^-_{HSCL}(x^-_{1:k}|x,x^+,y,f) = \prod_{i=1}^k p^-_{SCL}(x^-_i|y)$ and therefore $L^{(k)}_{HSCL}(f) = L^{(k)}_{SCL}(f) =  \psi_k\big(\tfrac{-C}{(C-1)},\ldots,\tfrac{-C}{(C-1)}\big)$ where the last equality is because additional condition (\ref{eq:equaldotprodcond}) is sufficient for equality to hold in (\ref{eq:genSCLlosslb}).     

\textit{Proof that additional condition (\ref{eq:equaldotprodcond}) is necessary for equality in (\ref{eq:genSCLlosslb}) if $\psi_k$ is strictly convex and argument-wise strictly increasing over $\RR^k$:} If equality holds in (\ref{eq:genSCLlosslb}), then it must also hold in (\ref{eq:duetostrictincr}), (\ref{eq:duetocvxty1}), and (\ref{eq:duetounitballnondecpsi}). If $\psi_k$ is argument-wise strictly increasing, then equality in (\ref{eq:duetounitballnondecpsi}) can only occur if all class means have unit norms. Then, from  (\ref{eq:unitnormjenseneq}) and the reasoning in the paragraph below it, we would have zero within-class variance and equations (\ref{eq:eqanchposrep}) and (\ref{eq:anchposrepdot}). This would imply equality in (\ref{eq:duetostrictincr}). If $\psi_k$ is strictly convex then equality in (\ref{eq:duetocvxty1}), which is Jensen's inequality, can only occur if for all $i = 1:k$, $\mathrm{Pr}(f^\top(x)f(x^-_i) = \beta_i) = 1$ for some constants $\beta_{1:k}$. Since $(x,x^-_i)$ has the same distribution for all $i$, it follows that for all $i$, $\beta_i = \beta$ for some constant $\beta$. Since we have already proved zero within-class variance and the labels of negative samples are always distinct from that of the anchor, it follows that for all $j \neq \ell$, we must have $\mu_j^\top\mu_\ell = \beta$. Since we have equality in (\ref{eq:duetostrictincr}) and $\psi_k$ is argument-wise strictly increasing, we must have $\beta = \tfrac{-1}{C-1}$ which implies that additional condition (\ref{eq:equaldotprodcond}) must hold (it is a necessary condition). \end{proof}

\begin{remark} \label{rem:unitsphconstraint}
We note that Theorem~\ref{thm:genSCLlosslb} also holds if we have unit-sphere representations (a stronger constraint) as opposed to unit-ball representations, i.e., if $\Zcal = \{z \in \RR^{d_{\Zcal}}: \|z\| = 1\}$: the lower bound (\ref{eq:genSCLlosslb}) holds since the unit sphere is a subset of the unit ball and equality can be attained with unit-sphere representations in Theorem~\ref{thm:genSCLlosslb}.
\end{remark}

\begin{remark} \label{rem:unit-ball-relaxation} 
Interestingly, we note that inequality (\ref{eq:duetounitballnondecpsi}) and therefore the lower bound of Theorem~\ref{thm:genSCLlosslb} also holds if we replace the unit-ball constraint on representations $\|f(x)\| \leq 1$ with the weaker requirement  $\frac{1}{C}\sum_{j=1}^C \|\mu_j\|^2 \leq 1$.
\end{remark}

\begin{definition}[ETF]\label{def:ETF}
The equal inner-product, zero-sum, and unit-norm conditions on class means in Theorem~\ref{thm:genSCLlosslb} define a (normalized) Equiangular Tight Frame (ETF) (see \cite{malozemov2009equiangular}).
\end{definition}

%%%%%%%%%%%%%%%%%%%%%%%%%%%%%%%%%%%%%%%%%%%%%%%%%%%%%%%%%%%%%%%%%%%%%%%%%%%%%%%%%%%%%%%%%%%%%%%%%%%%%%%%%%%

\begin{definition}[CL Neural-Collapse (NC)]
\label{def:neuralcollapse}We will say representation map $f(\cdot)$ exhibits CL Neural-Collapse if it has zero within-class variance as in condition $(iv)$ of Theorem~\ref{thm:genSCLlosslb} and the class means in representation space form a normalized ETF as in Definition~\ref{def:ETF}. 
\end{definition}

\begin{remark} \label{rem:NCterm}
The term ``Neural-Collapse'' was originally used for representation mappings implemented by deep classifier neural networks (see \cite{papyan2020prevalence}). However, here we use the term more broadly for any family of representation mappings and within the context of CL instead of classifier training. 
\end{remark}

%%%%%%%%%%%%%%%%%%%%%%%%%%%%%%%%%%%%%%%%%%%%%%%%%%%%%%%%%%%%%%%%%%%%%%%%%%%%%%%%%%%%%%%%%%%%%%%%%%%%%%%%%%%

The following corollary is a partial restatement of Theorem~\ref{thm:genSCLlosslb} in terms of CL Neural-Collapse:
\begin{corollary}\label{cor:varcollapse} Under the conditions of Theorem~\ref{thm:genSCLlosslb}, equality in (\ref{eq:genSCLlosslb}) is attained by any representation map $f$ that exhibits CL Neural-Collapse. Moreover, if $\psi_k$ is strictly convex and argument-wise strictly increasing over $\RR^k$, then equality in (\ref{eq:genSCLlosslb}) is attained by a representation map $f$, if, and only if, it exhibits CL Neural-Collapse. 
\end{corollary}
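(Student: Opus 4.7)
The plan is to recognize that Corollary~\ref{cor:varcollapse} is really a repackaging of Theorem~\ref{thm:genSCLlosslb} into the vocabulary of CL Neural-Collapse, so the proof amounts to matching definitions. In particular, Definition~\ref{def:neuralcollapse} of CL Neural-Collapse bundles together (a) zero within-class variance, which is exactly condition $(iv)$ of Theorem~\ref{thm:genSCLlosslb}, and (b) the class means forming a normalized ETF per Definition~\ref{def:ETF}, which in turn packages the equal inner-product condition (\ref{eq:equaldotprodcond}) together with the zero-sum property $(i)$ and the unit-norm property $(ii)$ of Theorem~\ref{thm:genSCLlosslb}.

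For the sufficient direction, I would assume $f$ exhibits CL Neural-Collapse. This immediately yields condition (\ref{eq:equaldotprodcond}), since the equal inner-product condition with the specific value $-1/(C-1)$ is part of the normalized ETF structure (and is in any case forced by ``equal inner-product + zero-sum + unit-norm'' via the identity $\|\sum_j \mu_j\|^2 = C + C(C-1)\,\mu_j^\top\mu_\ell = 0$). The sufficient part of Theorem~\ref{thm:genSCLlosslb} then gives equality in (\ref{eq:genSCLlosslb}) directly.

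For the necessary direction under the additional strict-convexity and argument-wise strict-monotonicity hypotheses on $\psi_k$, I would invoke the last sentence of Theorem~\ref{thm:genSCLlosslb}: equality in (\ref{eq:genSCLlosslb}) forces condition (\ref{eq:equaldotprodcond}). Applying properties $(i)$, $(ii)$, and $(iv)$ of Theorem~\ref{thm:genSCLlosslb}, I would extract zero-sum class means, unit-norm class means, and zero within-class variance. Assembling these with condition (\ref{eq:equaldotprodcond}) yields a normalized ETF per Definition~\ref{def:ETF}, and together with zero within-class variance this is exactly CL Neural-Collapse per Definition~\ref{def:neuralcollapse}.

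I do not anticipate any substantive obstacle, since all the analytic work has already been carried out inside Theorem~\ref{thm:genSCLlosslb}; the corollary is fundamentally a translation exercise. The only care needed is the bookkeeping to confirm that the three conditions comprising a normalized ETF in Definition~\ref{def:ETF} align term-for-term with condition (\ref{eq:equaldotprodcond}) and properties $(i)$, $(ii)$, so that nothing is silently dropped or added when moving between the two equivalent characterizations.
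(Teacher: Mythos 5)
Your proof is correct and takes exactly the route the paper intends: the paper itself presents the corollary as ``a partial restatement of Theorem~\ref{thm:genSCLlosslb} in terms of CL Neural-Collapse,'' and your argument simply carries out the definitional bookkeeping between condition (\ref{eq:equaldotprodcond}), properties $(i)$, $(ii)$, $(iv)$, Definition~\ref{def:ETF}, and Definition~\ref{def:neuralcollapse}. The aside noting that zero-sum plus unit-norm plus equal inner-products forces the specific value $-1/(C-1)$ is a nice sanity check, though it is not strictly needed since Definition~\ref{def:ETF} already references the conditions of Theorem~\ref{thm:genSCLlosslb} verbatim, including that value.
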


\subsection{Empirical and batched empirical SCL losses} \label{sec:batchempSCLloss}

All results presented till this point pertain to the expected loss and not sample-based empirical loss or batched empirical loss. In this section we demonstrate that results that were established in previous sections for expected loss also hold for the empirical and batched empirical losses. 

\textit{Empirical SCL loss:} Theorem~\ref{thm:genSCLlosslb} also holds for \textbf{\textit{empirical}} SCL loss because simple averages over samples can be expressed as expectations with suitable uniform distributions over the samples.
If the family of representation mappings $\Fcal$ has sufficiently high capacity (e.g., the family of mappings implemented by a sufficiently deep and wide 
feed-forward neural network)  and $\forall y \in \mathcal{Y}$, $s(\cdot|y)$ is a discrete probability mass function (pmf) over a finite set (e.g., uniform pmf over training samples within each class) with support-sets that are disjoint across different classes,
then the equal inner-product condition (\ref{eq:equaldotprodcond}) in  Theorem~\ref{thm:genSCLlosslb} can be satisfied for a suitable $f$ in the family. If either convexity or monotonicity of $\psi_k$ is not strict, e.g., $\psi_k(t_{1:k}) = \sum_{i=1}^k \max\{t_i+\alpha,0\}$, $\alpha > 0$,  then it may be possible for a representation map $f$ to attain the lower bound without exhibiting CL Neural-Collapse. 

\noindent\textit{Batched empirical SCL loss:}
We will now show that representations that exhibit CL Neural-Collapse will also minimize batched empirical SCL loss under the conditions of Theorem~\ref{thm:genSCLlosslb}.
Here, the full data set has balanced classes (equal number of samples in each class) but is partitioned into $B$ \textbf{\textit{disjoint}} nonempty batches of possibly unequal size.  Let $b$ denote the batch index and $n_b$ the number of samples in batch $b$. Let $\EE_{x,x^+,x^-_{1:k}|b}[\cdot]$ denote the empirical SCL loss in batch $b$ and
\[
L_{SCL}^{{(k,b)}}(f) :=  \sum_{b =1}^{B} \frac{1}{n_b} \EE_{x,x^+,x^-_{1:k}|b} \Big[\psi_k(f(x)^\top(f(x_1^-) - f(x^+)),\ldots,f(x)^\top(f(x_k^-) - f(x^+)))\Big]
\]
the overall batched empirical SCL loss. Note that in a given batch the data may not be balanced across classes and therefore we cannot simply use Theorem~\ref{thm:genSCLlosslb}, which assumes balanced classes, to deduce the optimality of CL Neural-Collapse representations.

We lower bound the batched empirical SCL loss as follows:
\begin{align}
    L_{SCL}^{{(k,b)}}(f) & =  \sum_{b =1}^{B} \frac{1}{n_b} \EE_{x,x^+,x^-_{1:k}|b} \Big[\psi_k(f(x)^\top(f(x_1^-) - f(x^+)),\ldots,f(x)^\top(f(x_k^-) - f(x^+)))\Big] \nonumber \\
    &\geq  \sum_{b =1}^{B} \frac{1}{n_b} \EE_{x,x^+,x^-_{1:k}|b} \Big[\psi_k(f(x)^\top f(x_1^-) - 1,\ldots,f(x)^\top f(x_k^-) - 1)\Big] \label{eq:duetostrictincr-b} \\
    &\geq  \psi_k\Bigg(\sum_{b =1}^{B} \frac{1}{n_b}\EE_{x,x^-_1|b}[f(x)^\top f(x_1^-)] - 1,\ldots,\sum_{b =1}^{B}\frac{1}{n_b}\EE_{x,x^-_k}[f(x)^\top f(x_k^-)] - 1\Bigg) \label{eq:duetocvxty1-b} \\
    &=  \psi_k\Big(\EE_{x,x^-_1}[f(x)^\top f(x_1^-)] - 1,\ldots,\EE_{x,x^-_k}[f(x)^\top f(x_k^-)] - 1\Big) \label{eq:duetoiterexp} \\
    &\geq \psi_k\left(\tfrac{-C}{(C-1)},\ldots, \tfrac{-C}{(C-1)} \right)\label{eq:duetoresultzeromeanunitsphere-b} %\nonumber 
\end{align}
where inequality (\ref{eq:duetostrictincr-b}) holds for the same reason as in (\ref{eq:duetostrictincr}), inequality (\ref{eq:duetocvxty1-b}) is Jensen's inequality applied to $\psi_k$ which is convex, and equality (\ref{eq:duetoiterexp}) is due to the law of iterated (empirical) expectation. The right side of (\ref{eq:duetocvxty1-b}) is precisely the right side of (\ref{eq:duetocvxty1}) and therefore (\ref{eq:duetoresultzeromeanunitsphere-b}) follows from (\ref{eq:duetocvxty1}) -- (\ref{eq:duetoresultzeromeanunitsphere}).
From the above analysis it follows that the arguments used to prove Theorem~\ref{thm:genSCLlosslb} can be applied again to prove that the conclusions of Theorem~\ref{thm:genSCLlosslb} and Corollary~\ref{cor:varcollapse} also hold for the batched empirical SCL loss.

\subsection{Lower bound for UCL loss with latent labels and Neural-Collapse}

Consider the UCL model with anchor, positive, and $k$ negative samples generated as described in Sec.~\ref{sec:CLprelim}. Within this setting, we have the following lower bound for the UCL loss and conditions for equality.

\begin{theorem}[Lower bound for UCL loss with latent labels and conditions for equality with unit-ball representations and equiprobable classes]\label{thm:genUCLlosslb}
In the UCL model with anchor, positive, and negative samples generated as described in 
(\ref{eq:UCLjoint}), let
(a) $\lambda_y = \tfrac{1}{C}$ for all $y \in \Ycal$ (equiprobable classes),  
(b) $\Zcal = \{z\in \RR^{d_{\Zcal}}: ||z|| \leq 1\}$ (unit-ball representations), 
(c) the anchor, positive and negative samples have a common conditional probability distribution $s(\cdot|\cdot)$ within each latent class given their respective labels, 
(d) $r = \lambda$ in (\ref{eq:UCLjoint}),
and 
(e) the anchor and positive samples are conditionally independent given their common label, i.e., $q(x,x^+|y)=s(x|y)s(x^+|y)$ in (\ref{eq:UCLjoint}).\footnote{As discussed in Sec.~\ref{sec:main_related_work}, all existing works that conduct a theoretical analysis of UCL make this assumption.}
If $\psi_k$ is a convex function that is also argument-wise non-decreasing throughout $\RR^k$,
then for all $f: \Xcal \rightarrow \Zcal$,
\begin{align}
    L^{(k)}_{UCL}(f) \geq  \frac{1}{C^{k+1}} \sum_{y,y^-_{1:k} \in \Ycal} \psi_k\left(\tfrac{-C\,1(y^-_1 \neq y)}{(C-1)},\ldots,\tfrac{-C\,1(y^-_k \neq y)}{(C-1)}\right) \label{eq:genUCLlosslb}
\end{align}
where $1(\cdot)$ is the indicator function. For a given $f \in \Fcal$ and all $y \in \Ycal$, let $\mu_y$ be as defined in (\ref{eq:mudefn}). If a given $f \in \Fcal$ satisfies additional condition (\ref{eq:equaldotprodcond}), then equality will hold in (\ref{eq:genUCLlosslb}), i.e., additional condition (\ref{eq:equaldotprodcond}) is \textbf{sufficient} for equality in (\ref{eq:genUCLlosslb}).
Additional condition (\ref{eq:equaldotprodcond}) also implies the following properties: 
\begin{enumerate}
\setlength \itemsep{-3pt}
    \item[(i)] zero-sum class means: $\sum_{j\in \Ycal} \mu_j = 0$,  
    \item[(ii)] unit-norm class means: $\forall j \in \Ycal, \|\mu_j\|=1$, 
    \item[(iii)] $d_\Zcal \geq C-1$.
    \item[(iv)] zero within-class variance: for all $j \in \Ycal$ and all $i = 1:k$, $\mathrm{Pr}(f(x) = f(x^+) = \mu_j | y = j) = 1 =  \mathrm{Pr}(f(x^-_i) = \mu_j | y^-_i = j)$, and
    \item[(v)] The support sets of $s(\cdot|y)$ for all $y \in \Ycal$ must be disjoint and the anchor, positive and negative samples must share a common deterministic (latent) labeling function defined by the support sets.
\end{enumerate}
If $\psi_k$ is a \textbf{strictly} convex function that is also argument-wise \textbf{strictly} increasing throughout $\RR^k$, then additional condition (\ref{eq:equaldotprodcond}) is also \textbf{necessary} for equality to hold in (\ref{eq:genUCLlosslb}). 
\end{theorem}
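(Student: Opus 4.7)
The plan is to mirror the structure of the proof of Theorem~\ref{thm:genSCLlosslb}, with one important adaptation for UCL: since a negative sample can now share the anchor's latent label (``class collision''), upper-bounding $f(x)^\top f(x^+)$ by $1$ as a first move would lose information on the collision coordinates and produce a bound too weak to match (\ref{eq:genUCLlosslb}). Instead, I would first condition on the full label tuple $(y, y^-_{1:k})$. Under assumptions (c), (d), (e), the samples $x, x^+, x^-_{1:k}$ are mutually conditionally independent given $(y, y^-_{1:k})$, with $x, x^+ \sim s(\cdot|y)$ and $x^-_i \sim s(\cdot|y^-_i)$, so that
\[
\EE\bigl[f(x)^\top(f(x^-_i) - f(x^+)) \mid y, y^-_{1:k}\bigr] \;=\; \mu_y^\top \mu_{y^-_i} - \|\mu_y\|^2 \;=\; \mu_y^\top(\mu_{y^-_i} - \mu_y).
\]
Applying Jensen's inequality to the convex $\psi_k$ inside this conditioning yields the intermediate bound
\[
L^{(k)}_{UCL}(f) \;\geq\; \frac{1}{C^{k+1}} \sum_{y, y^-_{1:k} \in \Ycal} \psi_k\bigl(\mu_y^\top(\mu_{y^-_1} - \mu_y), \ldots, \mu_y^\top(\mu_{y^-_k} - \mu_y)\bigr) \;=:\; S'(f).
\]
Crucially, whenever $y^-_i = y$ the $i$-th argument equals zero, matching the collision-coordinate value in (\ref{eq:genUCLlosslb}).

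To pass from $S'(f)$ to (\ref{eq:genUCLlosslb}) I would partition the outer sum by the non-collision set $I := \{i : y^-_i \neq y\} \subseteq \{1,\ldots,k\}$. For each fixed $I$, I would apply a second Jensen to the restricted convex function $(a_i)_{i\in I} \mapsto \psi_k((a_i)_{i\in I}, (0)_{i\notin I})$ averaged over $y$ uniform on $\Ycal$ and the $y^-_i$ (for $i \in I$) uniform on $\Ycal \setminus \{y\}$. The resulting $i$-th coordinate inside $\psi_k$ equals
\[
\EE\bigl[\mu_y^\top(\mu_{y^-_i} - \mu_y)\bigr] \;=\; \frac{1}{C(C-1)}\Bigl(\Bigl\|\sum_{y' \in \Ycal} \mu_{y'}\Bigr\|^2 - C \sum_{y' \in \Ycal} \|\mu_{y'}\|^2\Bigr) \;\geq\; -\frac{C}{C-1},
\]
using $\|\sum \mu\|^2 \geq 0$ and the unit-ball bound $\|\mu_{y'}\|^2 \leq 1$; combining with the argument-wise monotonicity of $\psi_k$ shows each per-pattern contribution is at least $\psi_k((-C/(C-1))_{i\in I}, (0)_{i\notin I})$. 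Re-aggregating over $I$ exactly reconstructs the right-hand side of (\ref{eq:genUCLlosslb}).

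The main obstacle is that the pointwise inequality $\mu_y^\top(\mu_{y^-_i} - \mu_y) \geq -C/(C-1)$ can fail (e.g., $C \geq 3$ with two class means antipodal on the unit sphere), so a term-by-term comparison is impossible; averaging across label pairs via the completing-the-square identity $\sum_{j\neq \ell}\mu_j^\top\mu_\ell = \|\sum_j \mu_j\|^2 - \sum_j \|\mu_j\|^2$ is essential. For sufficiency I would check that condition (\ref{eq:equaldotprodcond}) makes all three inequalities tight simultaneously: it forces zero within-class variance by the Jensen-for-$\|\cdot\|^2$ argument reused verbatim from Theorem~\ref{thm:genSCLlosslb}, collapsing the first Jensen; it makes $\mu_y^\top(\mu_{y^-_i} - \mu_y)$ constant over $y \neq y^-_i$, collapsing the second Jensen; and it achieves $\bar\mu = 0$ with $\|\mu_y\| = 1$, making the monotonicity step tight. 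Properties $(i)$--$(v)$ then follow verbatim from the zero-sum, unit-norm, Gram-matrix, and support-set arguments of Theorem~\ref{thm:genSCLlosslb}. For necessity under strictly convex and strictly increasing $\psi_k$, I would trace equality backward: strict monotonicity forces $\|\sum_y \mu_y\|^2 = 0$ and $\sum_y \|\mu_y\|^2 = C$, hence $\bar\mu = 0$ and $\|\mu_y\| = 1$; strict convexity of the second Jensen forces $\mu_y^\top(\mu_{y^-_i} - \mu_y)$ constant across $y \neq y^-_i$, and combining with $\bar\mu = 0$ pins the common off-diagonal inner product to $-1/(C-1)$, i.e., condition (\ref{eq:equaldotprodcond}).
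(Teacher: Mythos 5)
Your proposal is correct and follows essentially the same route as the paper's own proof. The two nested Jensen applications (first conditioned on the full label tuple, then on the collision pattern), the completing-the-square identity, and the final $\|\sum_j\mu_j\|^2\geq 0$, $\|\mu_j\|^2\leq 1$ bounds all match. The only cosmetic difference is bookkeeping: the paper introduces explicit indicator random variables $b_i := 1(y^-_i\neq y)$, conditions on the vector $b_{1:k}$, and then reduces to conditioning on each $b_i$ separately (using $(y,y^-_i)\indep\{b_\ell\}_{\ell\neq i}\mid b_i$), while you condition directly on the non-collision set $I=\{i:b_i=1\}$, which carries the same information. You also fold the paper's intermediate step of replacing $\|\mu_y\|^2$ by its upper bound $1$ (the paper's inequality before the completing-the-square) into a single final bound $\frac{1}{C(C-1)}\bigl(\|\sum_j\mu_j\|^2-C\sum_j\|\mu_j\|^2\bigr)\geq -\frac{C}{C-1}$; both computations land on the same constant. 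Your observation that the pointwise inequality $\mu_y^\top(\mu_{y^-_i}-\mu_y)\geq -C/(C-1)$ can fail and that averaging over $(y,y^-_i)$ via completing the square is essential is accurate and is exactly why the second Jensen step is needed. The sufficiency and necessity discussion reuses the arguments of Theorem~\ref{thm:genSCLlosslb} verbatim, as the paper does.
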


\begin{proof} For $i=1:k$, we define the following indicator random variables $b_i := 1(y_i^- \neq y)$ and note that for all $i=1:k$, $b_i$ is a deterministic function of $(y,y^-_i)$. Since $y \indep \{y^-_{1:k}\}$ and $y^-_{1:k} \sim \mbox{IID Uniform}(\Ycal)$, it follows that $b_{1:k} \sim \mathrm{IID}$ and independent of $y$.
We then have the following sequence of inequalities:
\begin{align}
   &L^{(k)}_{UCL}(f) =  \EE_{x,x^+,x^-_{1:k}} \big[\psi_k\big(f(x)^\top(f(x_1^-) - f(x^+)),\ldots,f(x)^\top(f(x_k^-) - f(x^+)) \big)\big] \nonumber \\ 
   &\geq \EE_{y,y^-_{1:k}} \big[\psi_k(\EE_{x,x^-_1 \sim s(x|y)s(x^-_1|y^-_1)}[f(x)^\top f(x_1^-)\big] - \EE_{x,x^+ \sim s(x|y)s(x^+|y)}[f(x)^\top f(x^+)\big],\ldots, \nonumber \\ 
   &\hspace{10ex} \ldots, \EE_{x,x^-_k \sim s(x|y)s(x^-_k|y^-_k)}\big[f(x)^\top f(x_k^-)\big] - \EE_{x,x^+ \sim s(x|y)s(x^+|y)}\big[f(x)^\top f(x^+)\big] )\big]   \label{eq:duetocvxty2} \\
   &= \EE_{y,y^-_{1:k}} \big[\psi_k\big(\mu_y^\top \mu_{y^-_1} - \mu_y^\top \mu_{y},\ldots,\mu_y^\top \mu_{y^-_k} - \mu_y^\top \mu_{y}\big)\big] \label{eq:dueto2} \\    
   &\geq \EE_{b_{1:k}} \big[\psi_k\big(\EE[\mu_y^\top \mu_{y^-_1} - ||\mu_{y}||^2 \vert b_{1:k} ],\ldots, \EE[\mu_y^\top \mu_{y^-_k} - ||\mu_{y}||^2 \vert b_{1:k}]\big)\big] \label{eq:dueto3} \\
   &= \EE_{b_{1:k}} \big[\psi_k\big(\EE[\mu_y^\top \mu_{y^-_1} - ||\mu_{y}||^2 \vert b_{1} ],\ldots, \EE[\mu_y^\top \mu_{y^-_k} - ||\mu_{y}||^2 \vert b_{k}]\big)\big] \label{eq:dueto4} \\
   &= \EE_{b_{1:k}} \big[\psi_k\big(b_1\EE[\mu_y^\top \mu_{y^-_1} - ||\mu_{y}||^2 \vert b_1 = 1 ],\ldots, b_k\EE[\mu_y^\top \mu_{y^-_k} - ||\mu_{y}||^2 \vert b_k = 1]\big)\big] \label{eq:dueto5} \\
   &\geq \EE_{b_{1:k}} \big[\psi_k\big(b_1\EE[\mu_y^\top \mu_{y^-_1} - 1 \vert b_1 = 1 ],\ldots, b_k\EE[\mu_y^\top \mu_{y^-_k} - 1 \vert b_k = 1]\big)\big] \label{eq:duetostrictincr2} \\
   &= \EE_{b_{1:k}} \big[\psi_k\big(\tfrac{b_1 \sum_{\ell \neq j} (\mu_j^\top\mu_\ell - 1)}{C(C-1)},\ldots, \tfrac{b_k\sum_{\ell\neq j}(\mu_j^\top\mu_\ell - 1)}{C(C-1)}\big)\big] \label{eq:dueto7} \\
   &= \EE_{b_{1:k}} \big[\psi_k\big(\tfrac{b_1 (\|\sum_j \mu_j\|^2-\sum_j\|\mu_j\|^2 -C(C-1))}{C(C-1)},\ldots, \tfrac{b_k (\|\sum_j \mu_j\|^2-\sum_j\|\mu_j\|^2-C(C-1))}{C(C-1)}\big)\big] \label{eq:dueto8} \\
   &\geq \EE_{b_{1:k}} \big[\psi_k\big(\tfrac{b_1 (0 - C -C(C-1))}{C(C-1)},\ldots, \tfrac{b_k (0 - C- C(C-1))}{C(C-1)}\big)\big] \label{eq:dueto9} \\
   &= \EE_{b_{1:k}} \big[\psi_k\big(\tfrac{b_1 (0 - C \cdot C)}{C(C-1)},\ldots, \tfrac{b_k (0 - C \cdot C)}{C(C-1)}\big)\big]  \nonumber \\
   &= \EE_{b_{1:k}} \big[\psi_k\big(\tfrac{-C b_1}{(C-1)},\ldots, \tfrac{-Cb_k}{(C-1)}\big)\big] \label{eq:functionofbis} \\
   &= \tfrac{1}{C^{k+1}} \sum_{y,y^-_{1:k} \in \Ycal} \psi_k\left(\tfrac{-C 1(y^-_1 \neq y)}{(C-1)},\ldots,\tfrac{-C 1(y^-_k \neq y)}{(C-1)}\right) \label{eq:dueto10}
\end{align}
where the validity of each numbered step in the above sequence of inequalities is explained below.

Inequality (\ref{eq:duetocvxty2}) is Jensen's inequality conditioned on $y,y_{1:k}$ applied to the convex function $\psi_k$.
Equality (\ref{eq:dueto2}) holds because for every $i$, we have $x$ and $x^-_i$ are conditionally independent given $y$ and $y^-_i$ (per the UCL model (\ref{eq:UCLjoint})), $x$ and $x^+$ are conditionally independent given their common label (assumption (d) in the theorem statement), and the class means in representation space are as defined in (\ref{eq:mudefn}).   
Inequality (\ref{eq:dueto3}) is Jensen's inequality conditioned on $b_{1:k}$ applied to the convex function $\psi_k$. 
Equality (\ref{eq:dueto4}) holds because for all $i=1:k$, $(y,y^-_i) \indep \{b_\ell, \ell \neq i\} \vert b_i$.
Equality (\ref{eq:dueto5}) holds because $b_i$ only takes values $0$, $1$ and if $b_i = 0$, then $\mu_{y^-_i} = \mu_y$ and $\mu_y^\top \mu_{y^-_i} - \|\mu_y\|^2 = 0$. Therefore the expressions to the right of the equality symbols in (\ref{eq:dueto4}) and (\ref{eq:dueto5}) match when $b_i=0$ and when $b_i=1$.
Inequality (\ref{eq:duetostrictincr2}) is because $\psi_k$ is non-decreasing and $||\mu_y||\leq 1$ for all $y$ because all representations are in the unit closed ball.  
Equality (\ref{eq:dueto7}) holds because for each $i=1:k$, $y,y^-_i \sim \mbox{IID Uniform}(\Ycal)$ and $y \neq y^-_i$ when $b_i = 1$.
Equality (\ref{eq:dueto8}) follows from elementary linear algebraic operations.
Inequality (\ref{eq:dueto9}) holds because $\psi_k$ is argument-wise non-decreasing, the smallest possible value for $\|\sum_j \mu_j\|^2$ is zero and the largest possible value for $\|\mu_j\|^2$, for all $j$, is one. 
Equality (\ref{eq:dueto10}) follows from the definition of the indicator variables in terms of $y,y^-_{1:k}$ and because $y,y^-_{1:k}$ are IID $\mathrm{Uniform}(\Ycal)$.

Similarly to the proof of Theorem~\ref{thm:genSCLlosslb}, if additional condition (\ref{eq:equaldotprodcond}) holds for some $f$, then properties $(i)$--$(v)$ in Theorem~\ref{thm:genUCLlosslb} hold. Moreover, then (\ref{eq:eqanchposrep}), (\ref{eq:anchposrepdot}), and (\ref{eq:equaldotprodcondwp1}) also hold and then we will have equality in (\ref{eq:duetocvxty2}), (\ref{eq:dueto3}), (\ref{eq:duetostrictincr2}) and (\ref{eq:dueto9}). Thus additional condition (\ref{eq:equaldotprodcond}) is sufficient for equality to hold in (\ref{eq:genUCLlosslb}).

\textit{Proof that additional condition (\ref{eq:equaldotprodcond}) is necessary for equality in (\ref{eq:genUCLlosslb}) if $\psi_k$ is strictly convex and argument-wise strictly increasing over $\RR^k$:} If equality holds in (\ref{eq:genUCLlosslb}), then it must also hold in (\ref{eq:duetocvxty2}), (\ref{eq:dueto3}), (\ref{eq:duetostrictincr2}), and (\ref{eq:dueto9}). If $\psi_k$ is argument-wise strictly increasing, then equality in (\ref{eq:dueto9}) can only occur if all class means have unit norms (which would also imply equality in (\ref{eq:duetostrictincr2})). Then, from  (\ref{eq:unitnormjenseneq}) and the reasoning in the paragraph below it, we would have zero within-class variance and (\ref{eq:anchposrepdot}), which would imply that with probability one for all $i$, $f(x)^\top f(x^-_i) = \mu_y^\top \mu_{y^-_i}$ and $f(x)^\top f(x^+) = ||\mu_y||^2$ which would imply equality in (\ref{eq:duetocvxty2}) as well. Equality in (\ref{eq:dueto3}) together with strict convexity of $\psi_k$ and $||\mu_y||^2=1$  would imply that with probability one, for all $i$, given $b_i$ we must have $\mu_y^\top \mu_{y^-_i} = $ some deterministic function of $b_i$ and if $\psi_k$ is also argument-wise strictly increasing, then this function must be such that $\mu_y^\top \mu_{y^-_i} - 1 =  \tfrac{-C b_i}{(C-1)}$ due to (\ref{eq:functionofbis}). This would imply that  for all $i$, we must have $\mu_y^\top \mu_{y^-_i} =  1  - \tfrac{C b_i}{(C-1)} = 1  - \tfrac{C 1(y^-_i \neq y)}{(C-1)}$.
Thus for all $y \neq y^-_i$, i.e., $b_i = 1$ (and this has nonzero probability), $\mu_y^\top \mu_{y^-_i} = 1 - \tfrac{C}{C-1} = \tfrac{-1}{C-1}$ which is the additional condition (\ref{eq:equaldotprodcond}). Thus the additional condition (\ref{eq:equaldotprodcond}) must hold (it is a necessary condition). \end{proof}

Counterparts of Corollary~\ref{cor:varcollapse} and results for empirical SCL loss and batched empirical SCL loss can be stated and derived for UCL. 

A novel perspective offered by Theorem~\ref{thm:genUCLlosslb}, vis-\`{a}-vis existing theoretical works on UCL, is that it relates the structure of the underlying sampling mechanism to the optimal value of the contrastive loss, albeit under certain restrictive assumptions which may not be possible to verify in practice for a given sampling mechanism (especially the latent-class-conditional independence of positive pairs).

One important difference between results for SCL and UCL is that Theorem~\ref{thm:genUCLlosslb} is missing the counterpart of property $(vi)$ in Theorem~\ref{thm:genSCLlosslb}. \textit{\textbf{Unlike in Theorem~\ref{thm:genSCLlosslb}, we cannot assert that if the lower bound is attained, then we will have $L^{(k)}_{HUCL}(f) = L^{(k)}_{UCL}(f)$.}} This is because in the UCL and HUCL settings, the negative sample can come from the same latent class as the anchor (latent class collision) with a positive probability ($\frac{1}{C^2}$). Then for a representation $f$ that exhibits Neural-Collapse, we cannot conclude that with probability one we must have $\eta(f(x)^\top f(x^-_1),\ldots,f(x)^\top f(x^-_k)) = \eta(\beta,\ldots,\beta)$, for some constant $\beta$. \textit{\textbf{Deriving a tight lower bound for HUCL and determining whether it can be attained iff there is Neural-Collapse in UCL (under suitable conditions), are open problems.}}

Neural-Collapse in SCL or UCL requires that the representation space dimension $d_\Zcal \geq C-1$ (see part $(iii)$ of Theorems~\ref{thm:genSCLlosslb} and \ref{thm:genUCLlosslb}). This can be ensured in practical implementations of SCL since labels are available and the number of classes is known. In UCL, however, not just latent labels, but even the number of latent classes is unknown. Thus even if it was possible to attain the global minimum of the empirical UCL loss with an $f$ exhibiting Neural-Collapse, \textit{\textbf{we may not observe Neural-Collapse with an argument-wise strictly increasing and strictly convex $\psi_k$ unless $d_\Zcal$ is chosen to be sufficiently large.}}

In practice, even without knowledge of latent labels, it is possible to design a sampling distribution having a structure that is compatible with (\ref{eq:UCLjoint}) and conditions (c) and (d) of Theorem~\ref{thm:genUCLlosslb}, e.g, via IID augmentations of a reference sample as in SimCLR illustrated in Fig.~\ref{fig:simclr}. However, it is impossible to ensure that the equiprobable latent class condition (a) or the anchor-positive conditional independence condition  (e)  in Theorem~\ref{thm:genUCLlosslb} hold or that the supports of $s(\cdot|\cdot)$ determined by the sample augmentation mechanism will be disjoint across all latent classes. Thus a representation minimizing UCL loss may not exhibit Neural-Collapse even if $\psi_k$ is strictly convex and argument-wise strictly increasing or it might exhibit zero within-class variance, but the class means may not form an ETF (see \cite{nguyen2024neural}). 

A second important difference between theoretical results for SCL and UCL is that unlike in Theorem~\ref{thm:genSCLlosslb}, conditional independence of the anchor and positive samples given their common label is assumed in Theorem~\ref{thm:genUCLlosslb} (condition (d) in the theorem). It is unclear whether the results of Theorem~\ref{thm:genUCLlosslb} for the UCL setting will continue to hold in entirety without conditional independence and we leave this as an open problem. 

Finally, unlike the SCL loss lower bound given by Equation~(\ref{eq:genSCLlosslb}) which is a non-decreasing function of the number of classes $C$ (since $\psi_k$ is argument-wise non-decreasing), the UCL loss lower bound given by Equation~(\ref{eq:genUCLlosslb}) is a non-increasing function of the number of latent classes $C$. We prove this monotonicity property of the UCL loss lower bound in Appendix~\ref{sec: discussion on lower bound SCL and UCL} for the case when the number of negative samples is $k=1$. We also provide empirical evidence of the monotonicity property for $k = 1, 2, 3, 4, 5$.

\section{Practical achievability of global optima} \label{sec:experiments}}

We first verify our theoretical results for UCL, SCL, HUCL, and HSCL using a synthetic data and then investigate the achievability of global-optima on three real-world image datasets.

\noindent\textbf{Datasets:}
\begin{itemize}
    \item \textbf{Synthetic Dataset:} This comprises three classes with 100 data points per class. The points within each class are IID with a 3072-dimensional Gaussian distribution having an identity covariance matrix and a randomly generated mean vector having IID components that are uniformly distributed over $[-1,1]$. The data dimension of 3072 allows for reshaping the vector into a $32 \times 32 \times 3$ tensor which can be processed by ResNet. 
    \item \textbf{Real-world Image Datasets:} This comprises CIFAR10, CIFAR100 \cite{krizhevsky2009learning}, and TinyImageNet \cite{le2015tiny}. 
    These datasets consist of $32\times 32 \times 3$ images across 10 classes (CIFAR10), 100 classes (CIFAR100), and 200 classes (TinyImageNet), respectively. Similar phenomena are observed in all three datasets. We present results for CIFAR100 here and results for CIFAR10 and TinyImageNet in Appendix~\ref{sec:additionalexp}.
\end{itemize}

\noindent\textbf{Anchor-positive pairs:} We explored the following three strategies for constructing anchor-positive pairs:
\begin{itemize}
    \item \textbf{Using Label Information:} For each anchor sample, a positive sample is chosen uniformly from among all samples having the same label as the anchor (including the anchor). All samples that share the same label with the anchor are used to form the positive pairs. Note that the anchor and the positive can be identical. 
    \item \textbf{Additive Gaussian Noise Augmentation Mechanism:} For each (reference) sample, we generate the anchor sample by adding IID zero-mean Gaussian noise of variance $0.01$ to all dimensions of the reference sample. The corresponding positive sample is generated in the same way from the reference sample using noise that is independent of that used to generate the anchor. 
    \item \textbf{SimCLR framework:} We also report additional results with the SimCLR framework in Appendix~\ref{sec:SimCLRexpts} where a positive pair is generated using two independent augmentations from one reference sample. 
\end{itemize}

We note that anchor-positive pair sampling using label information satisfies the assumptions on the sampling mechanism described in Theorem~\ref{thm:genUCLlosslb}. In particular, the positive samples are conditionally IID given their latent class label (but not unconditionally IID). 
In sampling via independent additive Gaussian noise augmentations of a reference sample, if the noise level is not high, the anchor and positive samples are likely to have the same latent class label as the reference, but  they are not guaranteed to be conditionally IID given the label. 
In the SimCLR framework, the assumptions of Theorem~\ref{thm:genUCLlosslb} cannot be guaranteed. 

\noindent\textbf{Negatives:} In all the supervised settings, for a given positive pair, we select $k$ negative samples independently and uniformly at random from all the data within a mini-batch and all their augmentations (if used), but having labels different from that of the positve pair.
In all the unsupervised settings, for a given positive pair, we select $k$ negative samples independently and uniformly at random from all the data within a mini-batch and all their augmentations (if used), including possibly the anchor and/or the positive sample. Thus, we do not use label information to sample negatives in all the unsupervised settings. We call this \textit{random negative sampling}. 

\noindent\textbf{Hard Negatives:} We utilize the InfoNCE loss with the exponential tilting hardening function described in Sec.~\ref{sec:CLprelim}. Within each minibatch, when we sample negatives, we use the hardening function to increase the probability of selecting negative samples that have a higher value of $f(x)^\top f(x^-)$ by a multiplicative factor equal to the hardening function value at $f(x)^\top f(x^-)$. In Appendix~\ref{sec:nc_dc_hard} we report additional results for a family of polynomial hardening functions.

\noindent \textbf{Representation family, normalization, $k$ values:} We used ResNet-50, \cite{ResNet50}, as the family of representation mappings $\Fcal$ and set the representation dimension to $d = C-1$ to observe Neural-Collapse (Definition~\ref{def:ETF}). We normalized representations to be within a unit ball as detailed in {Algorithm 1}, lines 5-12, in Appendix~\ref{sec:additionalexp}. We only report results for $k=256$ negative samples, but observed that results change only slightly for all $k \in [32,512]$. 

\noindent \textbf{Other parameters:} We chose hyper-parameter $\beta$ of the hardening function from the set $\{0,10,30, 50\}$ for synthetic data and the set $\{0,2,5,10,30\}$ for real data and trained each model for $E= 400$ epochs with a batch size of  $B = 512$ using the Adam optimizer at a learning rate of $10^{-3}$. We did not use weight decay. Computations were performed on an NVIDIA A100 32 GB GPU.

\subsection{Results for synthetic data} 
Figure~\ref{fig:synthetic} summarizes the results for synthetic data.
For a representation function $f^*$ that achieves Neural-Collapse, the values of $L^{(256)}_{SCL}(f^*)$ and $L^{(256)}_{HSCL}(f^*)$ across all $\beta$ values and the value of $L^{(256)}_{UCL}(f^*)$ are $0.2014$, $0.2014$, and $0.3935$, respectively. These values are obtained by numerically evaluating the lower bounds in Theorem~\ref{thm:genSCLlosslb} and Theorem~\ref{thm:genUCLlosslb}.

The first row in Figure~\ref{fig:synthetic} shows the result using label information for positive pair construction.

We note that label information is used for sampling positive pairs even in the unsupervised settings to empirically validate the results of Theorem~\ref{thm:genUCLlosslb}. Sampling positive pairs with label information guarantees the conditional independence assumption (e) in Theorem~\ref{thm:genUCLlosslb}.

The values of the minimum loss in different settings are displayed at the top of Fig.~\ref{fig:synthetic}. Our simulation results are consistent with our theoretical results. After 200 training epochs, we observed that $L^{(256)}_{SCL}(f)$ and $L^{(256)}_{HSCL}(f)$ across all $\beta$ values and $L^{(256)}_{UCL}(f)$ converged to their minimum values. 
From the figure, we can visually confirm that the representations exhibit Neural-Collapse in $SCL$, $HSCL$, and $UCL$. However, Neural-Collapse was not observed in HUCL as the class means deviate significantly from the ETF geometry. 

The second row in Figure~\ref{fig:synthetic} shows the results using the additive Gaussian noise augmentation mechanism, where label information is not used in constructing positive examples. 

We note that the conditional independence assumption (e) in Theorem~\ref{thm:genUCLlosslb} is not guaranteed in this method for sampling positive pairs.
In accordance with our theoretical results, NC is observed in $SCL$ and $HSCL$ (recall that the conditional independence assumption is not needed in Theorem~\ref{thm:genSCLlosslb}). However, NC is not observed in $UCL$ and $HUCL$, likely due to the lack of conditional independence. Moreover, the degree of deviation from NC increases progressively with increasing hardness levels.

Contrary to a widely held belief that hard-negative sampling is beneficial in both supervised and unsupervised contrastive learning, results on this simple synthetic dataset suggest that not only may SCL not benefit from hard negatives, but UCL may suffer from it. 
To investigate whether these conclusions hold true more generally or whether there are practical benefits of hard-negatives for SCL and UCL, we turn to real-world datasets next.

%%%%%%%%%%%%%%%%%%%%%%%%%%%%%%%%%%%%%%%%%%%%%%%%%%%%%%%%%%%%%%%%%%%%%%%%%%%%%%%%%%%%%%%%%%%%%%%%%%%%%%%%%%%
\begin{figure*}[h!]
\begin{minipage}{0.10\textwidth}
    \includegraphics[width=\linewidth]{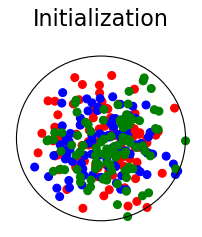}
\end{minipage}%
\hspace{7mm}
\begin{minipage}{0.10\textwidth}
    \includegraphics[width=\linewidth]{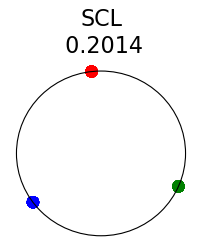}
\end{minipage}%
\begin{minipage}{0.10\textwidth}
    \includegraphics[width=\linewidth]{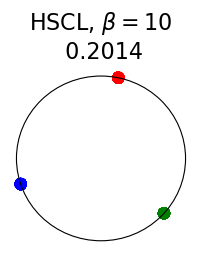}
\end{minipage}%
\begin{minipage}{0.10\textwidth}
    \includegraphics[width=\linewidth]{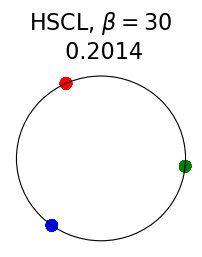}
\end{minipage}%
\begin{minipage}{0.10\textwidth}
    \includegraphics[width=\linewidth]{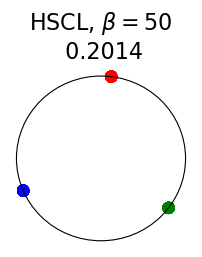}
\end{minipage}%
\hspace{8mm}
\begin{minipage}{0.10\textwidth}
    \includegraphics[width=\linewidth]{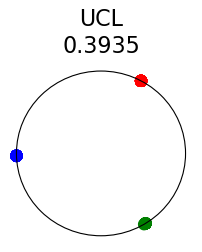}
\end{minipage}%
\begin{minipage}{0.10\textwidth}
    \includegraphics[width=\linewidth]{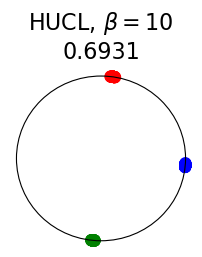}
\end{minipage}%
\begin{minipage}{0.10\textwidth}
    \includegraphics[width=\linewidth]{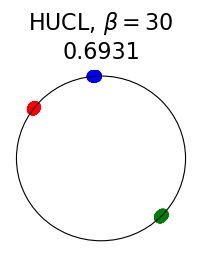}
\end{minipage}%
\begin{minipage}{0.10\textwidth}
    \includegraphics[width=\linewidth]{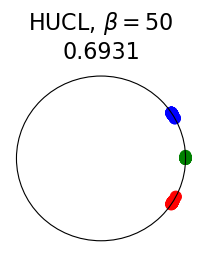}
\end{minipage}
\\
\begin{minipage}{0.10\textwidth}
    \includegraphics[width=\linewidth]{syn_Init.png}
\end{minipage}%
\hspace{7mm}
\begin{minipage}{0.10\textwidth}
    \includegraphics[width=\linewidth]{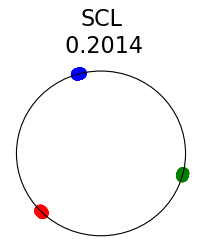}
\end{minipage}%
\begin{minipage}{0.10\textwidth}
    \includegraphics[width=\linewidth]{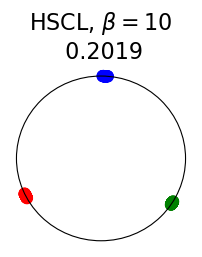}
\end{minipage}%
\begin{minipage}{0.10\textwidth}
    \includegraphics[width=\linewidth]{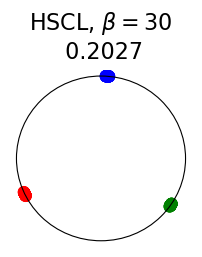}
\end{minipage}%
\begin{minipage}{0.10\textwidth}
    \includegraphics[width=\linewidth]{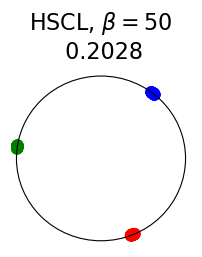}
\end{minipage}%
\hspace{8mm}
\begin{minipage}{0.10\textwidth}
    \includegraphics[width=\linewidth]{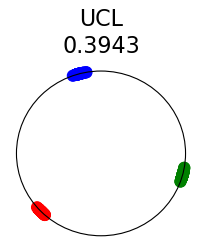}
\end{minipage}%
\begin{minipage}{0.10\textwidth}
    \includegraphics[width=\linewidth]{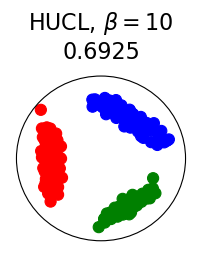}
\end{minipage}%
\begin{minipage}{0.10\textwidth}
    \includegraphics[width=\linewidth]{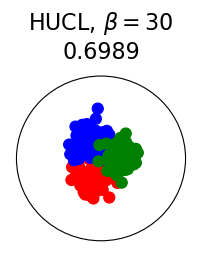}
\end{minipage}%
\begin{minipage}{0.10\textwidth}
    \includegraphics[width=\linewidth]{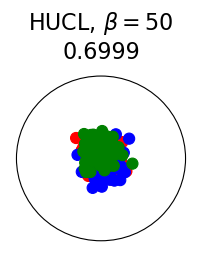}
\end{minipage}
    \caption{Synthetic dataset results using label information (top row figures) or additive Gaussian noise augmentation mechanism (bottom row figures) for generating anchor-positive pairs: Initial two-dimensional representations (left), post-training SCL and HSCL representations and losses at different hardness levels  (middle), post-training UCL and HUCL representations and losses at different hardness levels (right).     
    }\label{fig:synthetic}
\end{figure*}

%%%%%%%%%%%%%%%%%%%%%%%%%%%%%%%%%%%%%%%%%%%%%%%%%%%%%%%%%%%%%%%%%%%%%%%%%%%%%%%%%%%%%%%%%%%%%%%%%%%%%%%%%%%

\begin{figure*}[htb!]
    \centering
    \begin{minipage}[b]{0.48\textwidth}
        \includegraphics[trim=8 8 5 25, clip, width=\textwidth, height=0.22\textheight]{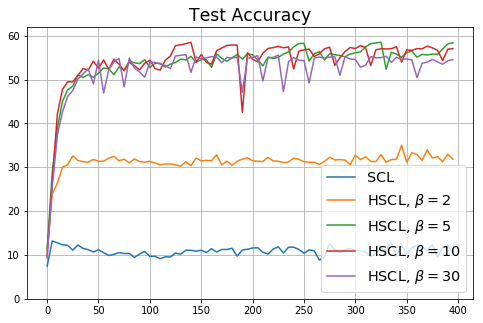}
    \end{minipage}
    \hfill
    \begin{minipage}[b]{0.48\textwidth}
        \includegraphics[trim=8 8 5 25, clip, width=\textwidth, height=0.22\textheight]{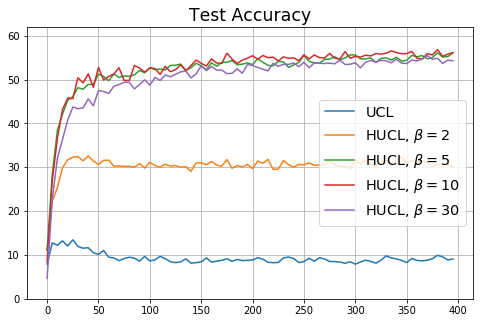}
    \end{minipage}
    \vfill
    \begin{minipage}[b]{0.48\textwidth}
        \includegraphics[trim=8 8 5 25, clip, width=\textwidth, height=0.22\textheight]{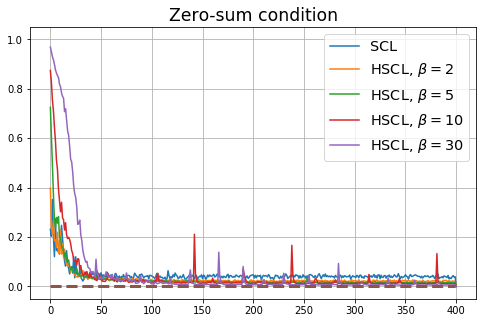}
    \end{minipage}
    \hfill
    \begin{minipage}[b]{0.48\textwidth}
        \includegraphics[trim=8 8 5 25, clip, width=\textwidth, height=0.22\textheight]{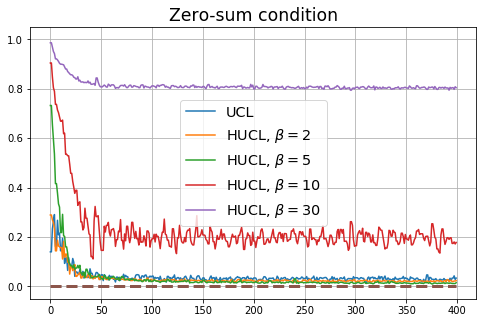}
    \end{minipage}
    \vfill
    \begin{minipage}[b]{0.48\textwidth}
        \includegraphics[trim=8 8 5 25, clip, width=\textwidth, height=0.22\textheight]{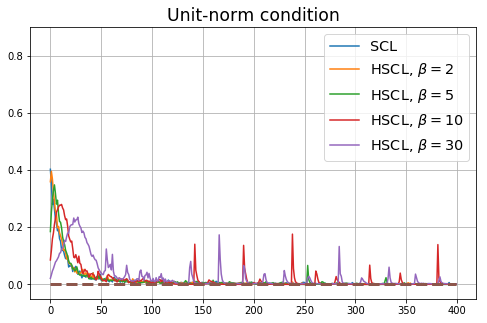}
    \end{minipage}
    \hfill
    \begin{minipage}[b]{0.48\textwidth}
        \includegraphics[trim=8 8 5 25, clip, width=\textwidth, height=0.22\textheight]{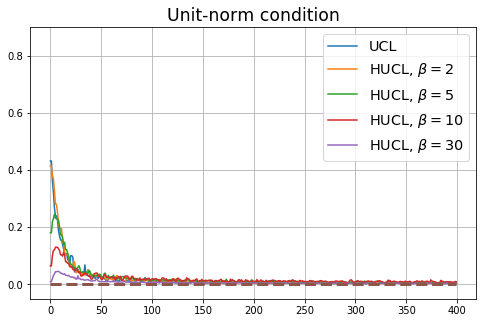}
    \end{minipage}
    \vfill
    \begin{minipage}[b]{0.48\textwidth}
        \includegraphics[trim=8 8 5 25, clip, width=\textwidth, height=0.22\textheight]{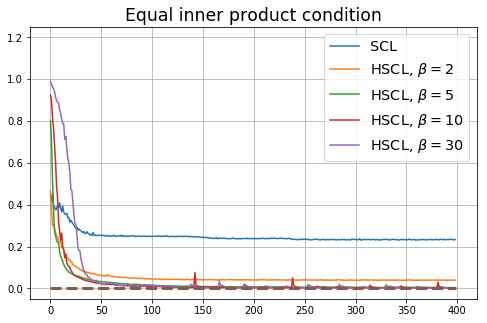}
    \end{minipage}
    \hfill
    \begin{minipage}[b]{0.48\textwidth}
        \includegraphics[trim=8 8 5 25, clip, width=\textwidth, height=0.22\textheight]{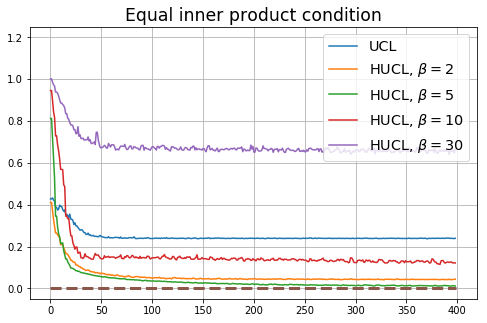}
    \end{minipage}
    \caption{Results for CIFAR100 under supervised settings (SCL, HSCL, left column) and unsupervised settings (UCL, HUCL, right column) with unit-ball normalization and random initialization. From top to bottom: Downstream Test Accuracy, Zero-sum metric, Unit-norm metric, and Equal inner-product metric, all plotted against the number of epochs.
    }\label{fig:nc_cifar100}

\end{figure*}

\subsection{Results for real data}

To validate the theoretical results on real data, in all four CL settings (UCL, SCL, HUCL, and HSCL), for a given anchor $x$, we randomly sample (without augmentation) the positive sample $x^+$ from the class conditional distribution corresponding to the class of $x$.

We first show that \textbf{hard-negative sampling improves downstream classification performance}. From the first row of Fig.~\ref{fig:nc_cifar100}, we see that with negative sampling at moderate hardness levels ($\beta = 5, 10$), the classification accuracies of HSCL and HUCL are more than $40\%$ points higher than those of SCL and UCL respectively.

\textbf{Achievability of Neural-Collapse:}
We investigate whether SCL and UCL losses trained using Adam can attain the globally optimal solution exhibiting NC. To test this, in line with properties $(i), (ii)$ and condition (\ref{eq:equaldotprodcond}) in Theorems~\ref{thm:genSCLlosslb} and \ref{thm:genUCLlosslb}, we employ the following metrics which are plotted in Fig.~\ref{fig:nc_cifar100} in rows two through four.

\begin{itemize}
\setlength \itemsep{-3pt}
   \item[1.] \textbf{   Zero-sum metric}: $\left\| \sum_{j\in \mathcal Y} \mu_j \right\|$
   \item[2.]  \textbf{   Unit-norm metric}: $\frac{1}{C} \sum_{j\in \mathcal Y} \left| \|\mu_j\| - 1 \right|$ 
   \item[3.] \textbf{   Equal inner-product metric:}  
   $ \frac{1}{C(C-1)}  \sum\limits_{\substack{j,k\in \mathcal{Y},\\ k\neq j}}     \big| \mu_j^\top \mu_k + \frac{1}{C-1} \big|$
\end{itemize}
We note that even though the equal inner-product class means condition together with unit-ball normalization implies the zero-sum and unit-norm conditions, we report these three metrics separately to gain more insight.

According to Theorems \ref{thm:genSCLlosslb} and \ref{thm:genUCLlosslb}, the optimal solutions for UCL, SCL, and HSCL are anticipated to manifest NC. However, our experimental findings reveal a gap between the theoretical expectations and the observed outcomes. Specifically, in both supervised and unsupervised settings, when leveraging the random negative sampling method, i.e., when the negative samples are sampled uniformly from the whole data in a mini-batch which may include the anchor and positive samples, NC is not exactly achieved: the zero-sum and equal inner-product metrics do not approach zero for all hardness levels (rows 2 and 4 in Fig.~\ref{fig:nc_cifar100}). This is also supported by the results in Table~\ref{table:loss_value} which shows the theoretical minimum SCL loss value and the practically observed SCL loss values after $400$ epochs for different hardness levels. While our theoretical results posit that both SCL and HSCL should have the same minimum loss value for all values of $\beta > 0$, the practically observed loss value for SCL deviates noticeably from theory. On the other hand, increased hardness in HSCL, especially at \(\beta = 5, 10\), brings the observed loss value close to the theoretical minimum.
\begin{table}[h!]
\centering
\setlength\tabcolsep{2pt} 
\resizebox{0.6 \textwidth}{!}{\begin{tabular}{|c|c|c|c|c|c|}
\hline
Theory & \multicolumn{5}{c|}{Empirical} \\
\hline
\multirow{3}{*}{0.3105} & SCL & \multicolumn{4}{c|}{HSCL} \\
\cline{2-6}
& $\beta = 0$ & $\beta = 2$ & $\beta = 5$ & $\beta = 10$ & $\beta = 30$ \\
\cline{2-6}
& 0.3384 & 0.3603 & \textbf{0.3106} & \textbf{0.3107} & 0.3222 \\
\hline
\end{tabular}}

\caption{Comparison of minimum theoretical loss and practically observed loss values after $400$ epochs for CIFAR100.}
\label{table:loss_value}

\end{table}

In addition, the manner in which the final values of NC metrics change with increasing hardness levels is qualitatively different in the supervised and unsupervised settings. Specifically, in the supervised settings, increased hardness invariably leads to improved results, and the model tends to approach NC, notably at \(\beta = 5,10,30\). However, in the unsupervised settings, there seems to be just a single optimal hardness level (\(\beta = 5\) is best among the choices tested).

\subsection{Dimensional-Collapse}\label{sec:DC}

To gain further insights, we investigate the phenomenon of Dimensional-Collapse (DC) that is known to occur in contrastive learning (see \cite{jing2021understanding}). 

\begin{definition}
\label{def:DC}
[Dimensional-Collapse (DC)]  We say that the class means $ \mu_1,\ldots,\mu_C$ suffer from DC if their empirical covariance matrix has one or more singular values that are zero or orders of magnitude smaller than the largest singular value.
\end{definition}
If $d_\Zcal = C-1$, then under Neural-Collapse (NC), the class mean vectors would have full rank $C-1$ in representation space since they form an ETF (see Definition~\ref{def:ETF}). Thus when $d_\Zcal = C-1$, $NC \Rightarrow \neg DC$. However, we note that $\neg DC \not\Rightarrow NC$ because, for example, the class means could have full rank and satisfy the equal inner-product and unit-norm conditions in Theorem~\ref{thm:genSCLlosslb} without satisfying the zero-sum condition.

We numerically assess DC by plotting the singular values of the empirical covariance matrix of the class means (at the end of training) \textit{normalized by the largest singular value} in decreasing order on a log-scale. Results for UCL, SCL, HUCL, and HSCL are shown in Fig.~\ref{fig:singular_cifar100}. In the supervised settings, (first row and first column of Fig.~\ref{fig:singular_cifar100}), the results align with our previous observations from Fig.~\ref{fig:nc_cifar100}.
However, in the unsupervised settings (first row and second column of Fig.~\ref{fig:singular_cifar100}), 
while HUCL with high hardness values deviates more from NC compared to UCL in Fig.~\ref{fig:nc_cifar100}, in Fig.~\ref{fig:singular_cifar100} we see that HUCL suffers less from DC.

We must emphasize that observing NC on the training data does not necessarily imply better performance in downstream tasks or improved generalization ability. Our experiments show that using hard-negative sampling aids the attainment of NC when using iterative methods starting at a random initialization. But more significantly, even if hard-negative sampling may not completely ensure the attainment of the NC solution (which is the optimal solution minimizing the training loss), it could prevent or reduce DC.

\subsection{Role of initialization} 
To gain further insights into the DC phenomenon, we trained a model using HSCL with $\beta = 10$ for $400$ epochs until it nearly attains NC as measured by the three NC metrics (zero-sum, unit-norm, and equal inner-product) shown in the second row of Table~\ref{tab:init}.  We call this representation mapping (or pre-trained model) the ``near-NC" representation mapping (pre-trained model).

Next, with the near-NC representation mapping as initialization, we continue to train the model for an additional 400 epochs under 10 different settings corresponding to hard supervised and hard unsupervised contrastive learning with different hardness levels. Rows 3-12 in Table~\ref{tab:init} show the final values of the three NC metrics for the 10 settings. The resulting normalized singular value plots are shown in the second row of Fig.~\ref{fig:singular_cifar100}.

\begin{table}[h!]
    \centering
    \setlength\tabcolsep{2pt}
    \begin{tabular}{|C{3cm}|C{1.5cm}|C{2cm}|C{3cm}|}
    \hline
    \textbf{Setting} & \textbf{Zero-sum} & \textbf{Unit-norm} & \textbf{Equal inner-product} \\
    \hline
    near-NC model &  0.012 & $1.8 \times 10^{-8}$ & 0.004 \\
    \hline
    SCL &  0.006 &  $2.0 \times 10^{-8}$ &  0.007 \\
    \hline
    HSCL, $\beta = 2$ & 0.005 & $2.2 \times 10^{-8}$ &  0.004 \\
    \hline
    HSCL, $\beta = 5$ & 0.007 & $1.9 \times 10^{-8}$ &  0.002 \\
    \hline
    HSCL, $\beta = 10$ & 0.007 & $2.0 \times 10^{-8}$ &  0.002 \\
    \hline
    HSCL, $\beta = 30$ & 0.004 & $1.7 \times 10^{-8}$ &  0.001 \\
    \hline
    UCL & 0.005 & $3.9\times 10^{-4}$ &  0.005 \\
    \hline
    HUCL, $\beta = 2$ & 0.005 & $6.1 \times 10^{-4}$ &  0.003 \\
    \hline
    HUCL, $\beta = 5$ & 0.004 & $1.4 \times 10^{-3}$ &  0.002 \\
    \hline
    HUCL, $\beta = 10$ & 0.093 & $1.5 \times 10^{-3}$ &  0.047 \\
    \hline
    HUCL, $\beta = 30$ & 0.761 & $7.9 \times 10^{-4}$ &  0.586 \\
    \hline
    \end{tabular}
    \caption{Post-training NC metrics for near-NC initialization in different settings. 
    }
    \label{tab:init}
\end{table}
From Table~\ref{tab:init} we note that in all $5$ supervised settings, the final representation mappings have NC metrics that are very similar to those of initial near-NC mapping. However, in the unsupervised settings, especially HUCL for $\beta = 10, 30$, the unit-norm and equal inner-product metrics of the final representation mappings are significantly larger than those of the initial near-NC mapping. This shows that mini-batch Adam optimization of CL losses exhibit dynamics that are different in the supervised and unsupervised settings and are impacted by the hardness level of the negative samples.

From the second row of Fig.~\ref{fig:singular_cifar100} we make the following observations:
\begin{itemize}
    \item SCL and HSCL trained with near-NC initialization and Adam do not exhibit DC or DC is negligible (second row and first column of Fig.~\ref{fig:singular_cifar100}).
    \item UCL trained with near-NC initialization and Adam also does not exhibit DC,
    but the behavior of HUCL depends on the hardness level $\beta$. A larger $\beta$ value appears to make DC more pronounced. This could be explained by the fact that a higher $\beta$ value increases the odds of latent-class collision.
\end{itemize}

\begin{figure}[h!]
    
    \centering
    \begin{minipage}[b]{0.48\textwidth}
        \includegraphics[trim=8 8 5 25, clip, width=\textwidth, height=0.20\textheight]{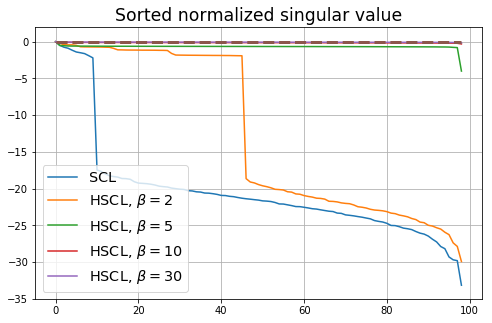}
    \end{minipage}
    \hfill
    \begin{minipage}[b]{0.48\textwidth}
        \includegraphics[trim=8 8 5 25, clip, width=\textwidth, height=0.20\textheight]{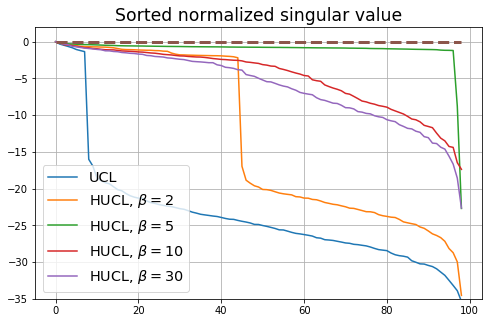}
    \end{minipage}
    \vfill
    \begin{minipage}[b]{0.48\textwidth}
        \includegraphics[trim=8 8 5 25, clip, width=\textwidth, height=0.20\textheight]{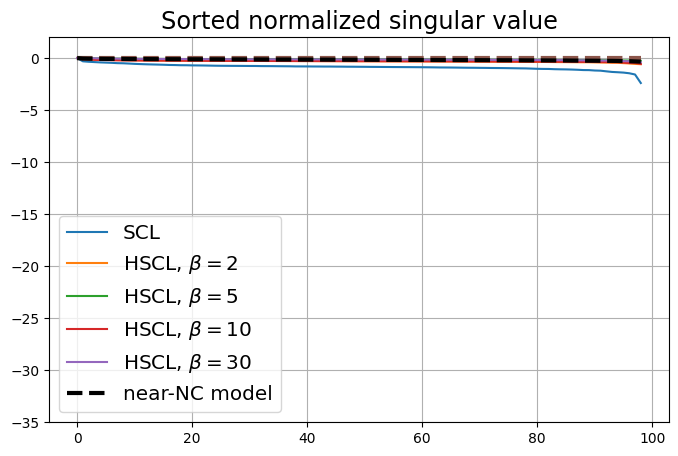}
    \end{minipage}
    \hfill
    \begin{minipage}[b]{0.48\textwidth}
        \includegraphics[trim=8 8 5 25, clip, width=\textwidth, height=0.20\textheight]{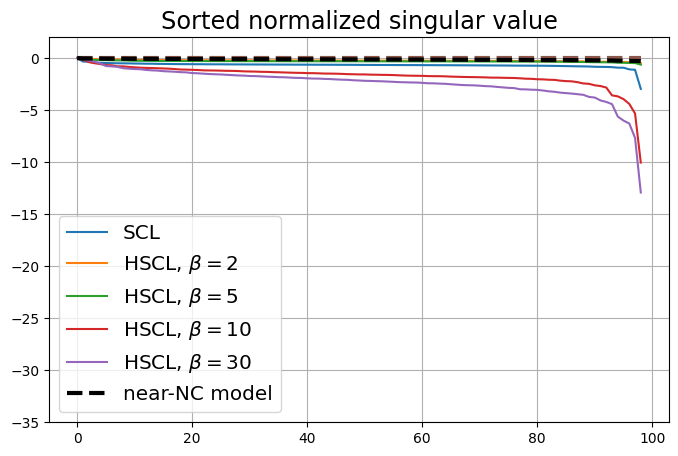}
    \end{minipage}
    \vfill
    \begin{minipage}[b]{0.48\textwidth}
        \includegraphics[trim=8 8 5 25, clip, width=\textwidth, height=0.20\textheight]{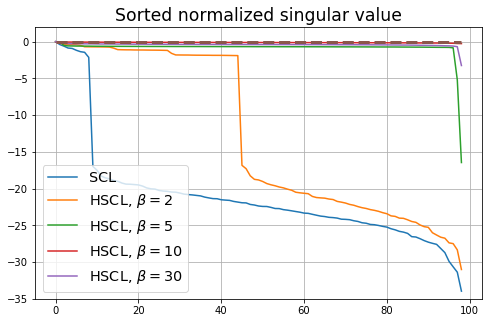}
    \end{minipage}
    \hfill
    \begin{minipage}[b]{0.48\textwidth}
        \includegraphics[trim=8 8 5 25, clip, width=\textwidth, height=0.20\textheight]{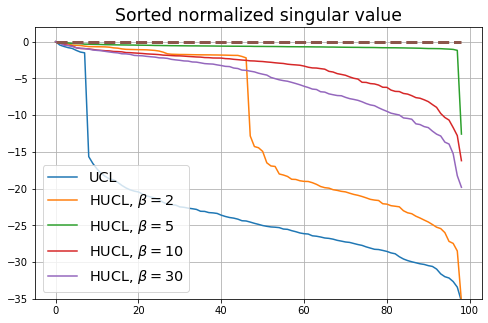}
    \end{minipage}
    \vfill
    \begin{minipage}[b]{0.48\textwidth}
        \includegraphics[trim=8 8 5 25, clip, width=\textwidth, height=0.20\textheight]{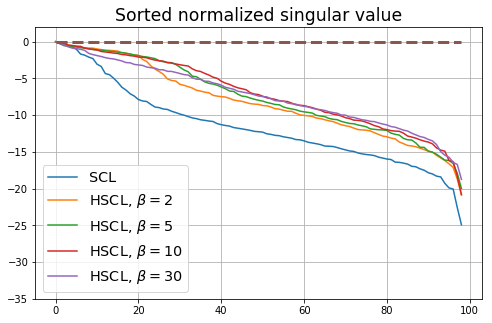}
    \end{minipage}
    \hfill
    \begin{minipage}[b]{0.48\textwidth}
        \includegraphics[trim=8 8 5 25, clip, width=\textwidth, height=0.20\textheight]{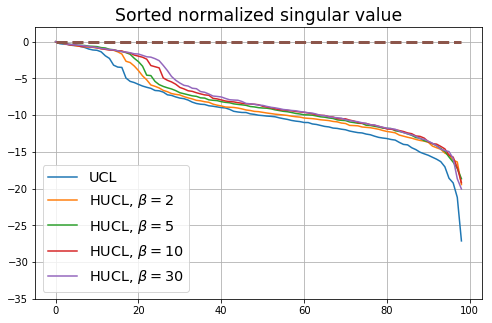}
    \end{minipage}
   
    \caption{Normalized singular values of the empirical covariance matrix of class means (in representation space) plotted in log-scale in decreasing order for CIFAR100 under supervised (left column) and unsupervised (right column) settings. The horizontal axis is the sorted index of the singular values. From top to bottom: Unit-ball normalization with random initialization, Unit-ball normalization with near-NC initialization, Unit-sphere normalization with random initialization, and un-normalized representation with random initialization. }\label{fig:singular_cifar100}
\end{figure}

%%%%%%%%%%%%%%%%%%%%%%%%%%%%%%%%%%%%%%%%%%%%%%%%%%%

\subsection{Role of normalization}
Feature normalization also plays an important role in alleviating DC. To demonstrate this, we test three normalization conditions during training: (1) unit-ball normalization, (2) unit-sphere normalization, and (3) no normalization. The resulting normalized singular value plots are shown in Fig.~\ref{fig:singular_cifar100} (rows 1, 3, and 4).
As can be observed, the behavior of unit-sphere normalization is close to that of unit-ball normalization, and with hard-negative sampling, both SCL and UCL can achieve NC (for suitable hardness levels).
Without normalization, neither random-negative nor hard-negative training methods attain NC and they suffer from DC.
We also observe that for SCL and UCL, absence of normalization leads to less DC (compare blue curves in rows 1 and 4 of Fig.~\ref{fig:singular_cifar100}). However, feature normalization could potentially reduce DC in hard-negative sampling for a range of hardness levels. 

As we discussed in the paragraph following Definition~\ref{def:DC}, one way to avoid DC is to achieve NC. As shown in Theorem~\ref{thm:genSCLlosslb} and Theorem~\ref{thm:genUCLlosslb} (and Remarks~\ref{rem:unitsphconstraint} and \ref{rem:unit-ball-relaxation} following Theorem~\ref{thm:genSCLlosslb}), one condition for NC is unit-ball or unit-sphere normalization of feature vectors. This may partially explain why feature normalization helps prevent or mitigate DC. On the other hand,
as discussed in the paragraph following Definition~\ref{def:DC}, it is not necessary to achieve NC in order to avoid DC. We leave it to future work to study more general conditions that help avoid DC.

\subsection{Impact of batch size} 
In Appendix~\ref{sec:Batchexpts}, we report results using different batch sizes. 

In Section~\ref{sec:batchempSCLloss} we showed that NC can occur in Contrastive Learning with any value of batch size if certain conditions are satisfied. Thus, in theory, NC could be observed regardless of the value of the batch size. In practice, however, it may be harder to achieve NC with a very small or a very large batch size as we explain below.

\noindent \textbf{Large batch size effects:} When the batch size is too large, the neural network may require a greater representation capacity to map a larger number of samples from the same class into the same point (class-collapse). This could be solved by increasing the capacity of the neural network by expanding its size. In our experimental results, however, we only used standard architectures and observed that NC is consistently achieved for all batch sizes ranging from 64 to 512.

\noindent \textbf{Small batch size effects:} When the batch size is decreased to 64, NC is still evident in HSCL and HUCL for certain values of $\beta$. However, when the batch size is further reduced to 32, NC is no longer observed. One potential reason for this is the presence of 100 distinct classes in our dataset. With a batch size of 32, the anchor can be compared with samples only from a very limited number of classes, reducing the diversity of negative pairs. This suggests that using larger batch sizes can contribute to more stable and robust results in the context of high class-cardinality.

\subsection{Role of hardening function} We investigated the impact of hardening functions, using a family of polynomial functions detailed in Appendix~\ref{sec:nc_dc_hard}. Results shown in Figs.~\ref{fig:cifar100_polynomial} and \ref{fig:cifar10_polynomial} of Appendix~\ref{sec:nc_dc_hard} confirm that hard-negative sampling helps prevent DC and achieve NC.

\subsection{Experiments using the SimCLR framework} 
In Appendix~\ref{sec:SimCLRexpts} we report results using the state-of-the-art SimCLR framework for contrastive learning which uses augmentations to generate samples. The results of Figs.~\ref{fig:cifar100_SimCLR} and \ref{fig:cifar100_SimCLR_2} in Appendix~\ref{sec:SimCLRexpts} are somewhat similar to those in Figs.~\ref{fig:nc_cifar100} and \ref{fig:singular_cifar100}, respectively, but a key difference is the failure to attain NC in the SimCLR sampling framework for both supervised and unsupervised settings at all hardness levels. This is primarily because the SimCLR sampling framework does not utilize label information and cannot guarantee property $(v)$ in Theorem~\ref{thm:genSCLlosslb} and Theorem~\ref{thm:genUCLlosslb} nor guarantee (for UCL) the conditional independence of anchor and positive samples given their label.

\section{Conclusion and open questions}

We proved the theoretical optimality of the NC-geometry for SCL, UCL, and notably (for the first time) HSCL losses for a very general family of CL losses and hardening functions that subsume popular choices. We empirically demonstrated the ability of hard-negative sampling to achieve global optima for CL and mitigate dimensional-collapse, in both supervised and unsupervised settings. Our theoretical and empirical results motivate a number of open questions. Firstly, a tight lower bound for HUCL remains open due to latent-class collision. It is also unclear whether the HUCL loss is minimized iff there is Neural-Collapse. 

If the number of classes is more than the dimension of the latent space plus one, then ETF is no longer the optimal solution, and the lower bound in Theorem~\ref{thm:genUCLlosslb} is not achievable. In this case, we may need other approaches based on convex optimization as described in \cite{nguyen2024neural}.

Our theoretical results for the SCL setting did not require conditional independence of anchor and positive samples given their label, but our results for the UCL setting did. A theoretical characterization of the geometry of optimal solutions for UCL in the absence of conditional independence remains open.
A difficulty with empirically observing NC in UCL and HUCL is that the number of latent classes is not known because it is, in general, implicitly tied to the properties of the sampling distribution, and this requires one to choose a sufficiently large representation dimension. Another open question is to unravel precisely how and why hard-negatives alter the optimization landscape enabling the training dynamics of Adam with random initialization to converge to the global optimum for suitable hardness levels and what are optimum  hardness levels.

\section{Acknowledgements}
This research was supported by NSF 1931978.

\clearpage 

\appendix

\section{Monotonicity of SCL and UCL loss lower bounds} \label{sec: discussion on lower bound SCL and UCL}
In this section, we discuss certain monotonicity properties of the  SCL and UCL loss lower bounds in Theorem~\ref{thm:genSCLlosslb} and Theorem~\ref{thm:genUCLlosslb}, respectively. 

The SCL loss lower bound is given by (\ref{eq:genSCLlosslb}), specifically, 
\begin{align}
    L^{(k)}_{SCL}(f) \geq  \psi_k\left(\tfrac{-C}{(C-1)},\ldots,\tfrac{-C}{(C-1)}\right). 
\end{align}
Since $\psi_k(.)$ is argument-wise non-decreasing, the SCL loss lower bound is a non-decreasing function of the number of classes $C$. 

Interestingly, the UCL loss lower bound given by (\ref{eq:genUCLlosslb}), i.e., 
\begin{align}
    L^{(k)}_{UCL}(f) \geq  \frac{1}{C^{k+1}} \sum_{y,y^-_{1:k} \in \Ycal} \psi_k\left(\tfrac{-C\,1(y^-_1 \neq y)}{(C-1)},\ldots,\tfrac{-C\,1(y^-_k \neq y)}{(C-1)}\right), 
\end{align}
is also monotonic in $C$, but in contrast to the SCL loss lower bound, it is a monotonically non-increasing function of $C$. This conclusion is not transparent since although the outer term $\frac{1}{C^{k+1}}$ is decreasing in $C$, the inner term $\sum_{y,y^-_{1:k} \in \Ycal} \psi_k\left(\tfrac{-C\,1(y^-_1 \neq y)}{(C-1)},\ldots,\tfrac{-C\,1(y^-_k \neq y)}{(C-1)}\right)$ is a non-decreasing function in $C$.

For the case  $k=1$, \textit{i.e.}, when there is a single negative sample, we prove below that the UCL loss lower bound is, indeed, a non-increasing function of $C$. We then provide numerical evidence to support our conjecture that this result is true for all $k > 1$. 

When $k=1$, we can rewrite the UCL loss lower bound as follows:
\begin{eqnarray}
\textup{Lower-bound}_{\textup{UCL}}(C) &=& \frac{1}{C^{k+1}} \sum_{y,y^-_{1:k} \in \Ycal} \psi_k\left(\tfrac{-C\,1(y^-_1 \neq y)}{(C-1)},\ldots,\tfrac{-C\,1(y^-_k \neq y)}{(C-1)}\right)\\
 &=& \tfrac{1}{C^{2}} \sum_{y,y^-_1 \in \Ycal} \psi_1 \left(\tfrac{-C 1(y^-_1 \neq y)}{C-1}\right)  \\
 &=&
 \tfrac{1}{C^{2}} \sum_{y \in \Ycal} \sum_{y^-_1 \in \Ycal}  \psi_1\left(\tfrac{-C 1(y^-_1 \neq y)}{C-1}\right)\\
 &=&  \tfrac{1}{C^{2}} C \left(  \sum_{y^-_1 \neq y} \psi_1 \left(\tfrac{-C 1(y^-_1 \neq y)}{C-1}\right) + \sum_{y^-_1 = y} \psi_1 \left(\tfrac{-C 1(y^-_1 \neq y)}{C-1}\right) \right)\\
 &=&  \tfrac{1}{C} \left(  (C-1) \psi_1 \left(\tfrac{-C}{C-1}\right) +  \psi_1\left( 0\right) \right).
\end{eqnarray}
Next, we show that $\textup{Lower-bound}_{\textup{UCL}}(C)$ is non-increasing in $C$ by showing that $\textup{Lower-bound}_{\textup{UCL}}(C) - \textup{Lower-bound}_{\textup{UCL}}(C+1) \geq 0$ for all $C>0$. Indeed,
\begin{eqnarray}
&& \textup{Lower-bound}_{\textup{UCL}}(C) - \textup{Lower-bound}_{\textup{UCL}}(C+1) \\
  &= & \tfrac{1}{C} \left(  (C-1) \psi_1 \left(\tfrac{-C}{C-1}\right) +  \psi_1 \left( 0 \right) \right) - \tfrac{1}{C+1} \left(  C \psi_1 \left(\tfrac{-(C+1)}{C}\right) +  \psi_1 \left( 0\right) \right) \\
  &=& \tfrac{1}{C(C+1)} \Big[ (C+1)  (C-1) \psi_1 \left( \tfrac{-C}{C-1} \right) +  (C+1) \psi_1 \left( 0\right)   -  C^2 \psi_1\left(\tfrac{-(C+1)}{C}\right) - C \psi_1 \left( 0\right) \Big] \\
  &=& \tfrac{1}{C(C+1)} \Big[ (C^2-1) \psi_1 \left( \tfrac{-C}{C-1} \right) +  \psi_1 \left( 0\right)   -  C^2 \psi_1\left(\tfrac{-(C+1)}{C}\right) \Big]   \\
  &=& \tfrac{C}{C+1} \Big[ \tfrac{C^2-1}{C^2}\psi_1 \left( \tfrac{-C}{C-1} \right) +  \tfrac{1}{C^2}\psi_1 \left( 0\right) -  \psi_1\left(\tfrac{-(C+1)}{C}\right) \Big] \\
  &\geq& \tfrac{C}{C+1} \Big[ \psi_1 \left( \tfrac{C^2-1}{C^2} \times \tfrac{-C}{C-1} + \tfrac{1}{C^2} \times 0 \right) - \psi_1\left(\tfrac{-(C+1)}{C}\right)\Big] \label{thuan: eq 1009}\\
  &=& \tfrac{C}{C+1} \Big[ \psi_1\left(\tfrac{-(C+1)}{C}\right) - \psi_1\left(\tfrac{-(C+1)}{C}\right)\Big]\\
  &=& 0
\end{eqnarray}where the inequality in (\ref{thuan: eq 1009}) is due to Jensen's inequality applied to the convex function $\psi_1(.)$, and other rows are just algebraic transformations. Since  $\textup{Lower-bound}_{\textup{UCL}}(C) \geq  \textup{Lower-bound}_{\textup{UCL}}(C+1)$,  it follows that
 $\textup{Lower-bound}_{\textup{UCL}}(C)$ is a non-increasing function of $C$. 

Extending this result to larger values of $k$ is algebraically more complicated. Therefore, instead of proving it in  full generality, we numerically plot the UCL loss lower bound for values of $C$ ranging from $2$ through $20$ and values of $k$ ranging from $1$ through $5$. Figure~\ref{fig: lower bound UCL with values of C and k NCE} shows plots of the UCL loss lower bound for the InfoNCE loss function whereas Fig.~\ref{fig: lower bound UCL with values of C and k triplet} shows plots for the triplet loss function.

Recall that the InfoNCE loss function corresponds to $\psi_k(t_{1:k}) = \log(\alpha + \sum_{i=1}^k e^{t_i})$, with $\alpha > 0$, and the triplet loss function (with sphere-normalized representations) is given by $\psi_k(t_{1:k}) = \sum_{i=1}^k \max\{t_i + \alpha,\; 0\}$, $\alpha > 0$ . We set $\alpha=1$ in both the InfoNCE and the triplet loss functions. 

These plots show that the UCL loss lower bound always decreases with $C$ for any given value of $k$ and for any given value of $C$ it always increases with $k$. 
 
\begin{figure*}[h]
\centering
\includegraphics[width=5 in]{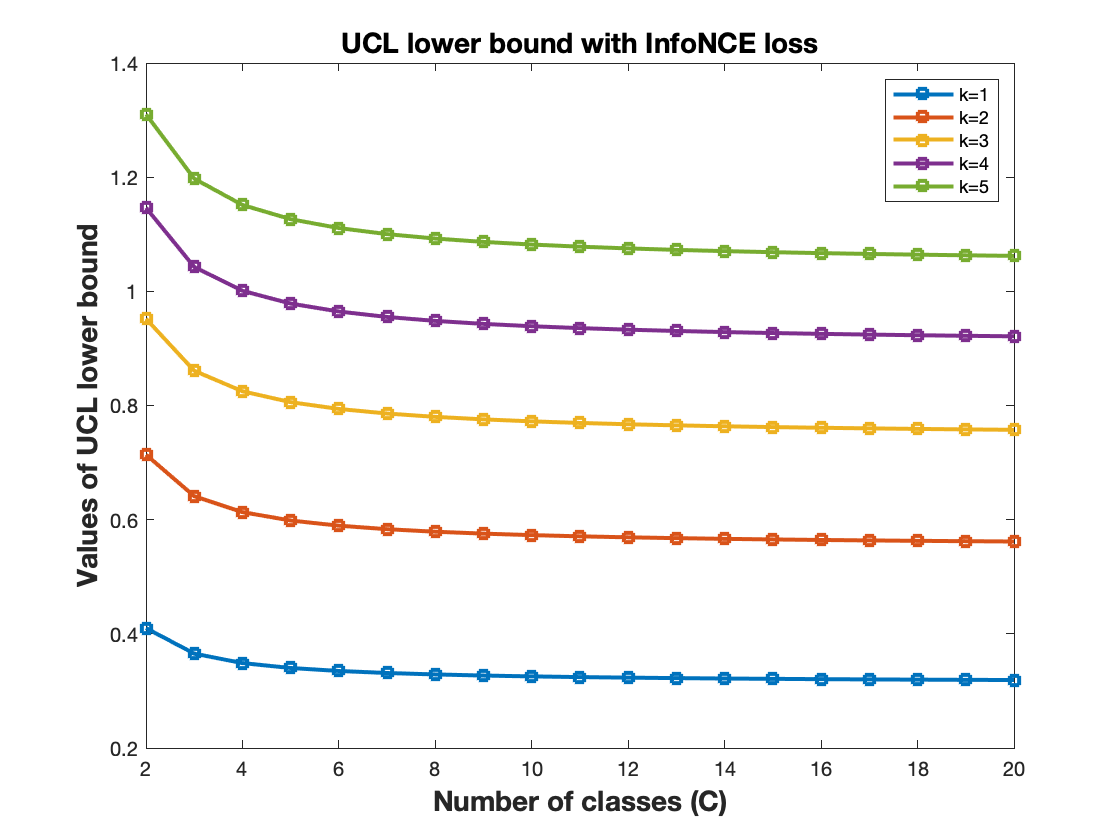}
\caption{The values of the UCL loss lower-bound in Theorem~\ref{thm:genUCLlosslb} for the InfoNCE loss function, for $k=1,2,3,4,5$, negative samples and $C = 2, 3, \ldots, 20$, latent classes. 
}
\label{fig: lower bound UCL with values of C and k NCE}
\end{figure*}

\begin{figure*}[h]
\centering
\includegraphics[width=5 in]{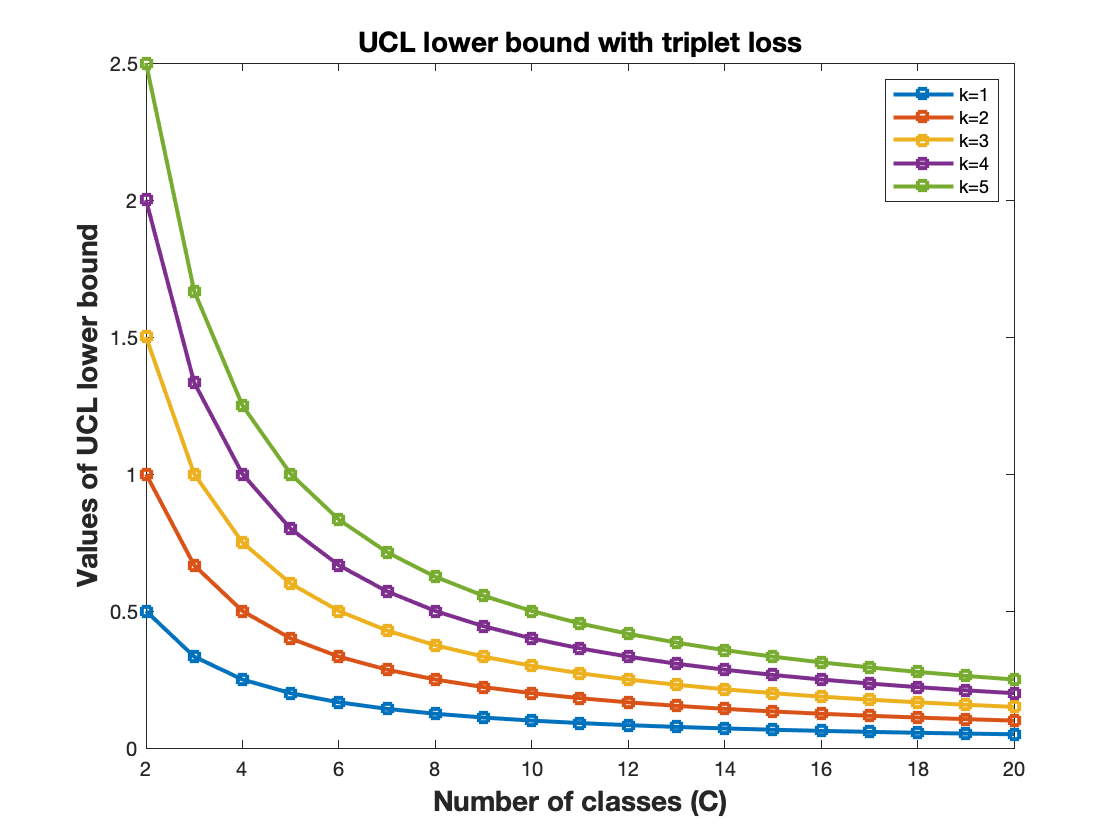}
\caption{The values of the UCL loss lower-bound in Theorem~\ref{thm:genUCLlosslb} for the triplet loss function, for $k=1,2,3,4,5$, negative samples and $C = 2, 3, \ldots, 20$, latent classes. }
\label{fig: lower bound UCL with values of C and k triplet}
\end{figure*}

\section{Compatibility of sampling model with SimCLR-like augmentations} \label{sec:thmgenSCLlosslb}

The following generative model captures the manner in which many data augmentation mechanisms, including SimCLR, generate a pair of anchor and positive samples. First, a label $y$ is sampled. The label may represent an observed class in the supervised setting or a latent (implicit, unobserved) class index in the unsupervised setting.
Then given $y$, a reference sample $x^{ref}$ is sampled with conditional distribution $p_{ref}(\cdot|y)$. Then a pair of samples $(x,x^+)$ are generated given $(x^{ref},y)$ via two independent calls to an augmentation mechanism 
whose behavior can be statistically described by a conditional probability distribution $p_{aug}(\cdot|x^{ref},y)$, i.e., $p(x,x^+|x^{ref},y) =  p_{aug}(x|x^{ref},y) \cdot p_{aug}(x^+|x^{ref},y)$. Finally, the representations $z = f(x)$ and $z^+ = f(x^+)$ are computed via a mapping $f(\cdot)$, e.g., a neural network. 
Under the setting just described, it follows that both $z|y$ and $z^+|y$ have identical conditional distributions given $y$ which we denote by $s(\cdot|y)$. This can be verified by checking that both $x|y$ and $x^+|y$ have the same conditional distribution given $y$ equal to 
\[
p(\cdot|y) = \int_{x^{ref}} p_{aug}(\cdot |x^{ref},y)p_{ref}(x^{ref}|y) dx^{ref}
\]
where the integrals will be sums in the discrete (probability mass function) setting. Note that although $x,x^+$ are conditionally IID given $(x^{ref},y)$, they need not be conditionally IID given just $y$.

\begin{algorithm}\label{Algo:CL}
\caption{Contrastive Learning Algorithm}
\begin{algorithmic}[1]
\REQUIRE Batch size $N$, data $\mathcal{X}$, label $\mathcal{Y}$, {neural-net parameters} of representation function $f$, Algorithm: SCL/ UCL/HSCL/HUCL, normalization type: unit-ball/unit-sphere/no-normalization, {the} hardening function $\eta(t_{1:k}) := \prod_{i=1}^k e^{\beta t_i}, \beta > 0$.
\STATE Define negative distribution $p^-(z^-_{1:k}|z, z^+)$ based on the chosen Algorithm, see Sec.~\ref{sec:contrastive_learning} for details.
\FOR{each sampled minibatch \( \{x_i\}^N_{i=1} \)} 
    \FOR{all \( i \in \{1, \dots, N\} \)}
        \STATE Compute \( f(x_i) \)
        \IF{{unit-ball} normalization}
            \IF{\( \Vert f(x_i) \Vert \leq 1 \)}
                \STATE \( z_i = f(x_i) \)
            \ELSE
                \STATE \( z_i = \frac{f(x_i)}{\Vert f(x_i) \Vert} \)
            \ENDIF
        \ELSIF{{unit-sphere} normalization}
            \STATE \( z_i = \frac{f(x_i)}{\Vert f(x_i) \Vert} \)
        \ELSIF{{no-normalization}}
            \STATE \( z_i = \frac{f(x_i)}{\sqrt{d}} \)
        \ENDIF
    \ENDFOR
    \FOR{all \( i \in \{1, \dots, N\} \)}
        \FOR{all \( j \in \{1, \dots, N\} \)}
            \IF{\( y(x_i) = y(x_j) \)} 
                \STATE Draw $\{z^-_{1:k}\}$ from $p^-(z^-_{1:k}|z_i, z_j)$
                \STATE \( \{v_{i,j,m}\}_{m=1}^k = \{z_i^\top z^-_m - z_i^\top z_j \}_{m=1}^{k}\)
            \STATE \( \ell_{i,j} = \log \left( 1+ \frac{1}{k} \sum_{m=1}^{k} e^{v_{i,j,m}} \right) \)
            \ELSE
                \STATE \( \ell_{i,j} =0\)
            \ENDIF
        \ENDFOR
    \STATE Compute the average loss of sample $x_i$: \( \ell_{i} = \frac{1}{|\{\ell_{ij} \neq 0\}|}  \sum_{j=1}^{N} \ell_{i,j} \)
    \ENDFOR
    \STATE Compute the average loss of minibatch: \( L = \frac{1}{N} \sum_{i=1}^{N} \ell_{i} \)
    \STATE Take one stochastic gradient step using Adam
   
\ENDFOR

\textbf{return} Encoder network \( f(\cdot) \)
\end{algorithmic}
\end{algorithm}

\section{Additional experiments}\label{sec:additionalexp}

We replicated the same experiments conducted in Sec.~\ref{sec:experiments} on CIFAR10. The results are plotted in Fig. \ref{fig:cifar10_1} and Fig.~\ref{fig:cifar10_2}. In contrast to CIFAR10 and CIFAR100 where the numerical results are provided under three different settings, namely, unit-ball normalization with random initialization, unit-ball normalization with Neural-Collapse initialization, and unit-sphere normalization with random initialization, for Tiny-ImageNet, we only conduct experiments under unit-ball normalization with random initialization. This is because the size of the Tiny-ImageNet dataset (120000 images) is much larger than the sizes of both CIFAR10 and CIFAR100 datasets (50000 images per dataset) which results in a significantly longer processing time. The results for Tiny-ImageNet are plotted in Fig.~\ref{fig:tiny_imagenet}.

\begin{figure*}[!ht]
    \centering    
    \begin{minipage}[b]{0.48\textwidth}
        \includegraphics[trim=8 8 5 25, clip, width=\textwidth]{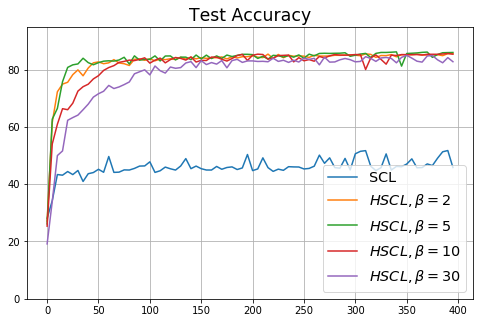}
    \end{minipage}
    \hfill
    \begin{minipage}[b]{0.48\textwidth}
        \includegraphics[trim=8 8 5 25, clip, width=\textwidth]{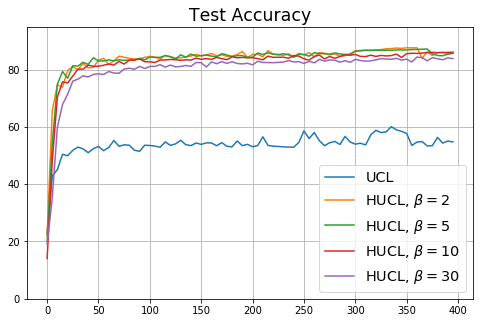}
    \end{minipage}
    \vfill
    \begin{minipage}[b]{0.48\textwidth}
        \includegraphics[trim=8 8 5 25, clip, width=\textwidth]{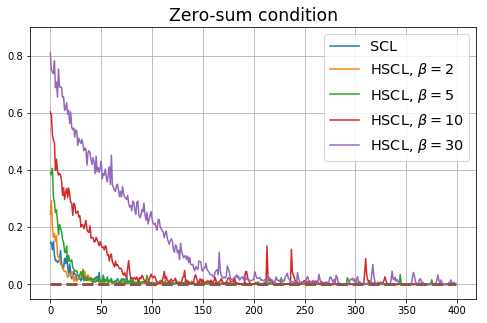}
    \end{minipage}
    \hfill
    \begin{minipage}[b]{0.48\textwidth}
        \includegraphics[trim=8 8 5 25, clip, width=\textwidth]{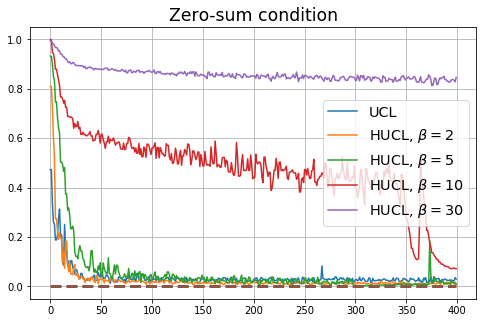}
    \end{minipage}
    \vfill
    \begin{minipage}[b]{0.48\textwidth}
        \includegraphics[trim=8 8 5 25, clip, width=\textwidth]{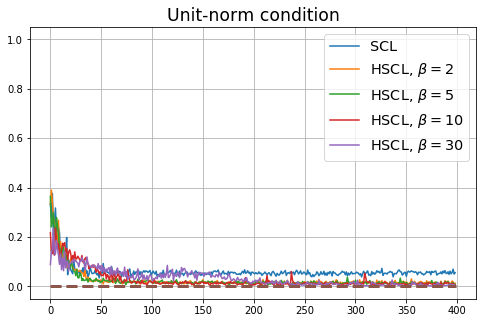}
    \end{minipage}
    \hfill
    \begin{minipage}[b]{0.48\textwidth}
        \includegraphics[trim=8 8 5 25, clip, width=\textwidth]{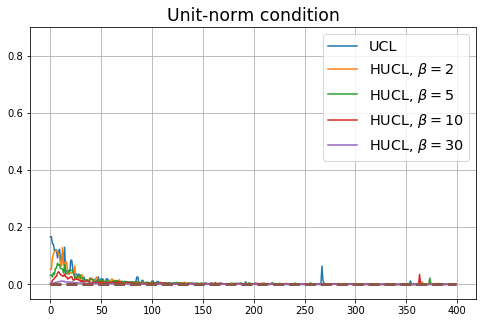}
    \end{minipage}
    \vfill
    \begin{minipage}[b]{0.48\textwidth}
        \includegraphics[trim=8 8 5 25, clip, width=\textwidth]{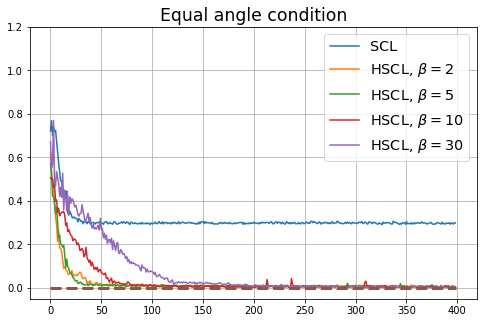}
    \end{minipage}
    \hfill
    \begin{minipage}[b]{0.48\textwidth}
        \includegraphics[trim=8 8 5 25, clip, width=\textwidth]{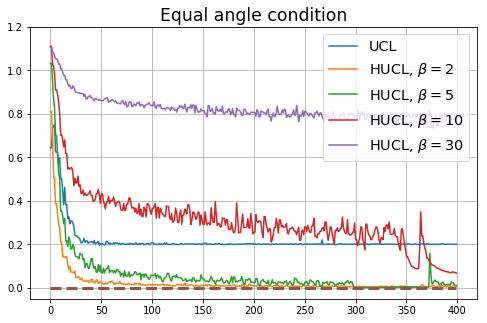}
    \end{minipage}

    \caption{Results for CIFAR10 under supervised settings (SCL, HSCL, left column) and unsupervised settings (UCL, HUCL, right column) with unit-ball normalization and random initialization. From top to bottom: Downstream Test Accuracy, Zero-sum metric, Unit-norm metric, and Equal inner-product metric, all plotted against the number of epochs.}
    \label{fig:cifar10_1}
\end{figure*}

\begin{figure*}[h!]
    \centering
    \begin{minipage}[b]{0.48\textwidth}
        \includegraphics[trim=8 8 5 25, clip, width=\textwidth]{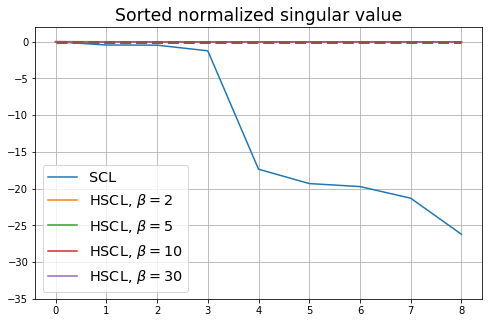}
    \end{minipage}
    \hfill
    \begin{minipage}[b]{0.48\textwidth}
        \includegraphics[trim=8 8 5 25, clip, width=\textwidth]{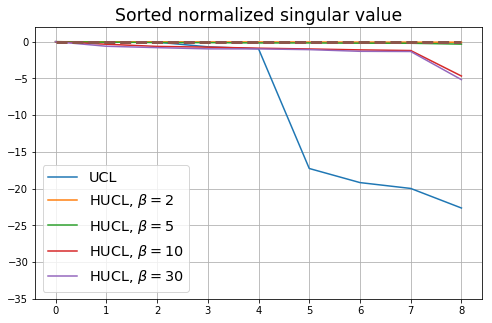}
    \end{minipage}
    \vfill
    \begin{minipage}[b]{0.48\textwidth}
        \includegraphics[trim=8 8 5 25, clip, width=\textwidth]{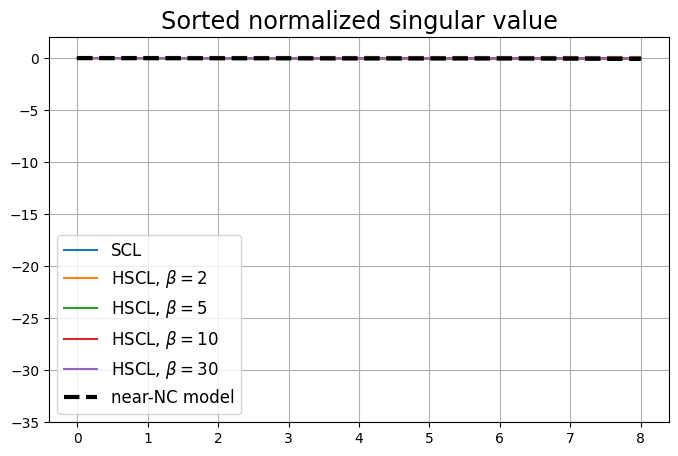}
    \end{minipage}
    \hfill
    \begin{minipage}[b]{0.48\textwidth}
        \includegraphics[trim=8 8 5 25, clip, width=\textwidth]{Cifar100_UCL_ball_init.png}
    \end{minipage}
    \vfill
    \begin{minipage}[b]{0.48\textwidth}
        \includegraphics[trim=8 8 5 25, clip, width=\textwidth]{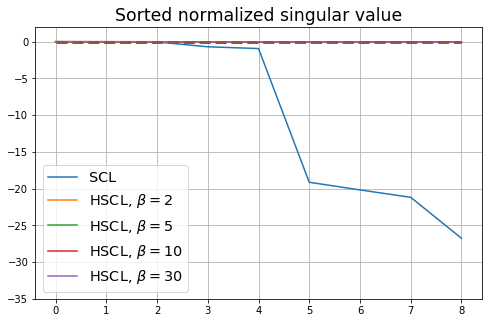}
    \end{minipage}
    \hfill
    \begin{minipage}[b]{0.48\textwidth}
        \includegraphics[trim=8 8 5 25, clip, width=\textwidth]{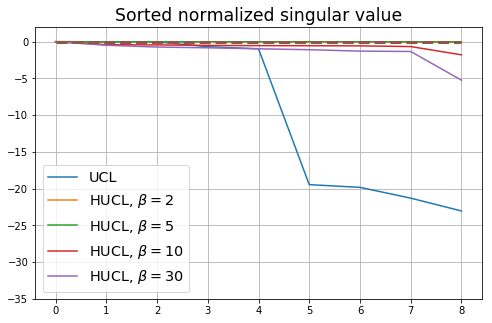}
    \end{minipage}
    \vfill
    \begin{minipage}[b]{0.48\textwidth}
        \includegraphics[trim=8 8 5 25, clip, width=\textwidth]{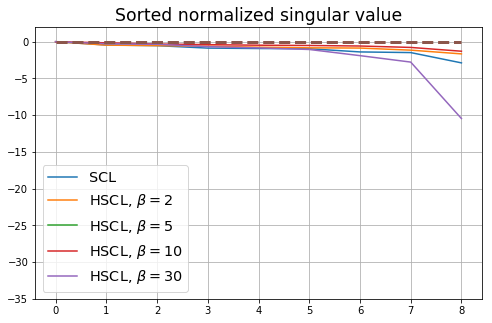}
    \end{minipage}
    \hfill
    \begin{minipage}[b]{0.48\textwidth}
        \includegraphics[trim=8 8 5 25, clip, width=\textwidth]{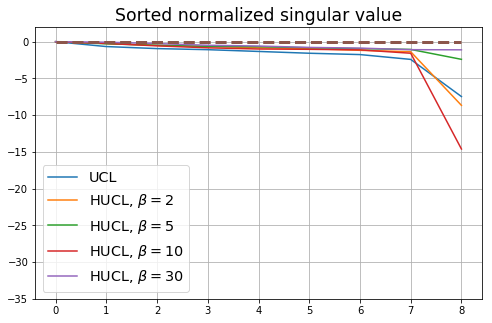}
    \end{minipage}
    \caption{Sorted normalized singular values of the empirical covariance matrix of class means (in representation space) in the last epoch plotted in log-scale for CIFAR10 under supervised (left column) and unsupervised (right column) settings. From top to bottom: Unit-ball normalization with random initialization, Unit-ball normalization with near-NC initialization, Unit-sphere normalization with random initialization, and un-normalized representation with random initialization.}
    \label{fig:cifar10_2}
\end{figure*}

\begin{figure*}[ht]
    \centering
    \begin{minipage}[b]{0.48\textwidth}
        \includegraphics[trim=8 8 5 25, clip, width=\textwidth]{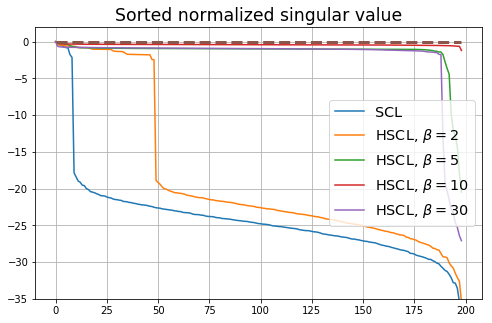}
    \end{minipage}
    \hfill
    \begin{minipage}[b]{0.48\textwidth}
        \includegraphics[trim=8 8 5 25, clip, width=\textwidth]{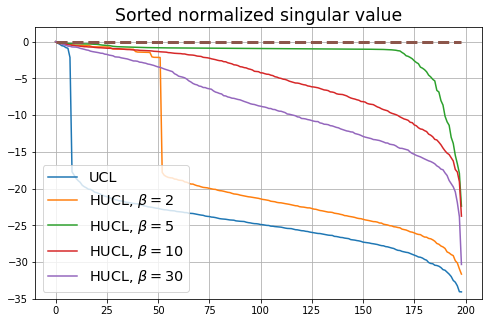}
    \end{minipage}
    \vfill
    \begin{minipage}[b]{0.48\textwidth}
        \includegraphics[trim=8 8 5 25, clip, width=\textwidth]{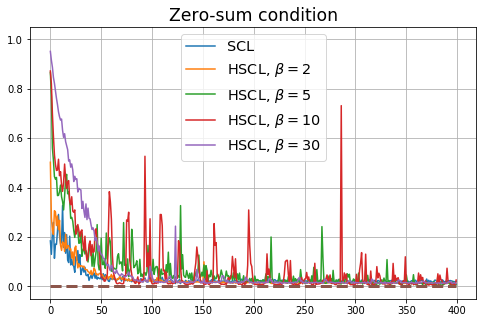}
    \end{minipage}
    \hfill
    \begin{minipage}[b]{0.48\textwidth}
        \includegraphics[trim=8 8 5 25, clip, width=\textwidth]{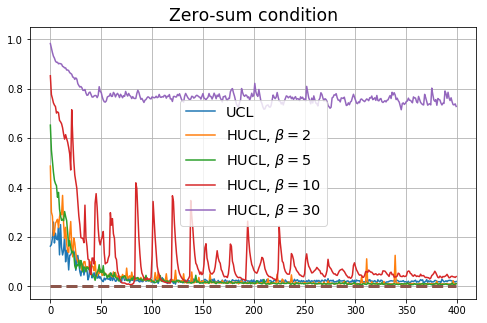}
    \end{minipage}
    \vfill
    \begin{minipage}[b]{0.48\textwidth}
        \includegraphics[trim=8 8 5 25, clip, width=\textwidth]{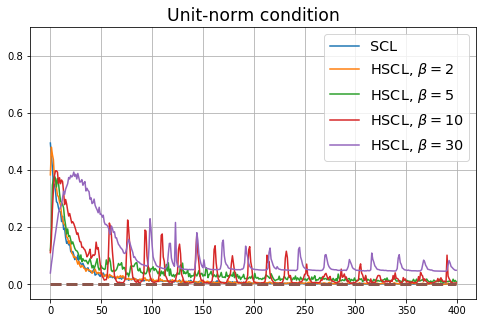}
    \end{minipage}
    \hfill
    \begin{minipage}[b]{0.48\textwidth}
        \includegraphics[trim=8 8 5 25, clip, width=\textwidth]{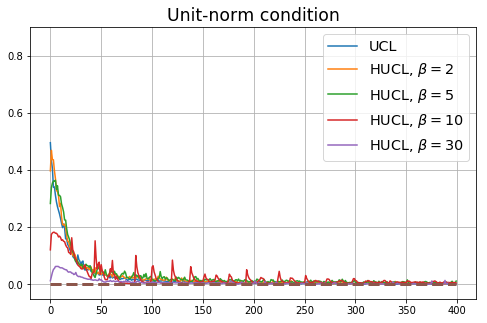}
    \end{minipage}
    \vfill
    \begin{minipage}[b]{0.48\textwidth}
        \includegraphics[trim=8 8 5 25, clip, width=\textwidth]{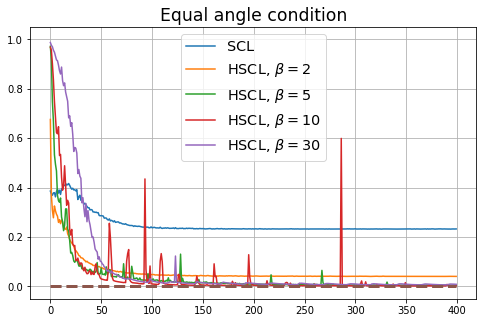}
    \end{minipage}
    \hfill
    \begin{minipage}[b]{0.48\textwidth}
        \includegraphics[trim=8 8 5 25, clip, width=\textwidth]{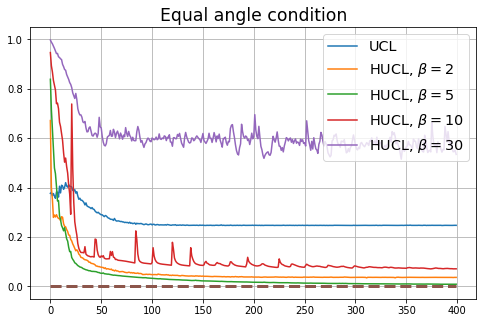}
    \end{minipage}
    \caption{Results for Tiny-ImageNet under supervised settings (SCL, HSCL, left column) and unsupervised settings (UCL, HUCL, right column) with unit-ball normalization and random initialization. Top row: sorted normalized singular values of the empirical covariance matrix of class means (in representation space) in the last epoch plotted in log-scale. Rows 2--4: Zero-sum metric, Unit-norm metric, and Equal inner-product metric, all plotted against the number of epochs.}
    \label{fig:tiny_imagenet}
\end{figure*}

\subsection{Neural-Collapse and Dimensional-Collapse}\label{nc_dc_exp_hard}
For CIFAR10, from Fig.~\ref{fig:cifar10_1} and Fig.~\ref{fig:cifar10_2}, we observe similar phenomena as those for CIFAR100. As before, we note that while Theorems~\ref{thm:genSCLlosslb} and \ref{thm:genUCLlosslb} suggest that Neural-Collapse should occur in both the supervised and unsupervised settings when using the random negative sampling method, one may not be able to observe Neural-Collapse in unsupervised settings in practice. 
For the supervised case in CIFAR10, any degree of hardness propels the representation towards Neural-Collapse. This may be due to the small number of classes in CIFAR10.

For Tiny-ImageNet, from Fig.~\ref{fig:tiny_imagenet}, we observe that when $\beta = 5,10,30$, the geometry of the learned representation closely aligns with that of Neural-Collapse. However, in HUCL, a high degree of hardness can be harmful. At $\beta = 5$, the geometry most closely approximates Neural-Collapse for both CIFAR10 and Tiny-ImageNet. However, increasing the degree of hardness further, for example at $\beta=30$, causes the center of class means to deviate from the origin and the equal inner-product condition is heavily violated. 

Furthermore, from the the normalized singular values for CIFAR10 in Fig.~\ref{fig:cifar10_2} and for Tiny-ImageNet in Fig.~\ref{fig:tiny_imagenet}, we observe that random negative sampling without any hardening ($\beta = 0$) suffers from DC whereas hard-negative sampling consistently mitigates DC. The supervised case benefits more from a higher degree of hardness, since in the unsupervised cases there are higher chances of (latent) class collisions.

\subsection{Effects of initialization and normalization} \label{sec:nc_dc_ini_norm}
The normalized singular value plots for CIFAR10 are shown in Fig.~\ref{fig:cifar10_2}. Compared to CIFAR 100 (see Fig.~\ref{fig:singular_cifar100}), the phenomenon of DC in CIFAR10 is far less pronounced. This may be because CIFAR10 has a smaller number of classes compared to CIFAR100 (10 \textit{vs.} 100). However, the effects of initialization and normalization on the learned representation geometry are similar to that for CIFAR100:  

\noindent \textbf{Effects of initialization:} 1) SCL and HSCL trained with near-NC initialization and Adam do not exhibit DC and  2) UCL trained with near-NC initialization and Adam also does not exhibit DC, but the behavior of HUCL depends on the hardness level $\beta$. 

\noindent \textbf{Effects of normalization:} The behavior of unit-sphere normalization is close to that of unit-ball normalization, and with hard-negative sampling (and suitable hardness levels), both SCL and UCL can achieve NC. Without normalization, neither regular nor hard-negative training methods attain NC and they suffer from DC. We also observe that with random-negative sampling, un-normalized representations lead to reduced DC in both SCL and UCL. However, hard-negative sampling benefits more from feature normalization and its absence leads to more severe DC. 

\subsection{Experiments with different batch sizes} \label{sec:Batchexpts}
To investigate the effect of batch size on the outcomes, we conducted experiments with varying batch sizes. All previous experiments were performed with a batch size of 512. In this section, we present results for batch sizes of 64 and 32 in Fig.~\ref{fig:cifar100_batch64} and Fig.~\ref{fig:cifar100_batch32}, respectively.

We observe that when the batch size is reduced to 64, Neural-Collapse is still nearly achieved in both HSCL ($\beta = 5,10,30$) and HUCL ($\beta = 5$). However, with a further reduction of batch size to 32, Neural-Collapse is only achieved in HSCL ($\beta = 30$), and it fails to occur in HUCL for any value of $\beta$.

\begin{figure*}[h]
    \centering    
    \begin{minipage}[b]{0.48\textwidth}
        \includegraphics[trim=8 8 5 25, clip, width=\textwidth]{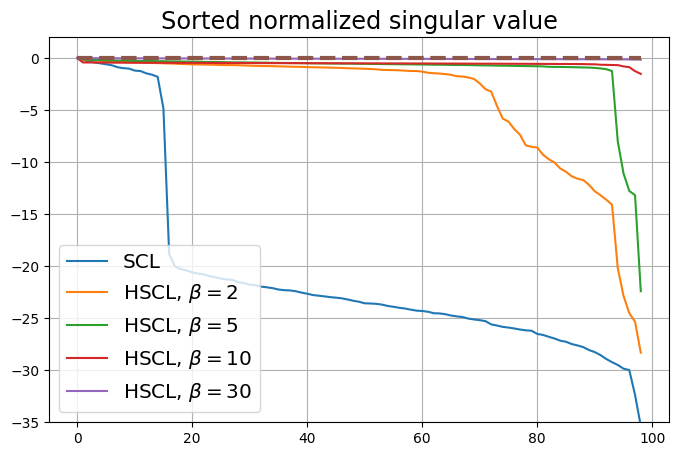}
    \end{minipage}
    \hfill
    \begin{minipage}[b]{0.48\textwidth}
        \includegraphics[trim=8 8 5 25, clip, width=\textwidth]{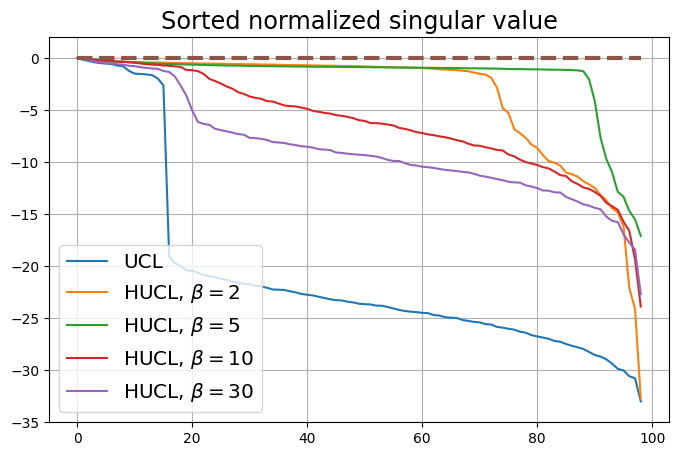}
    \end{minipage}
    \vfill
    \begin{minipage}[b]{0.48\textwidth}
        \includegraphics[trim=8 8 5 25, clip, width=\textwidth]{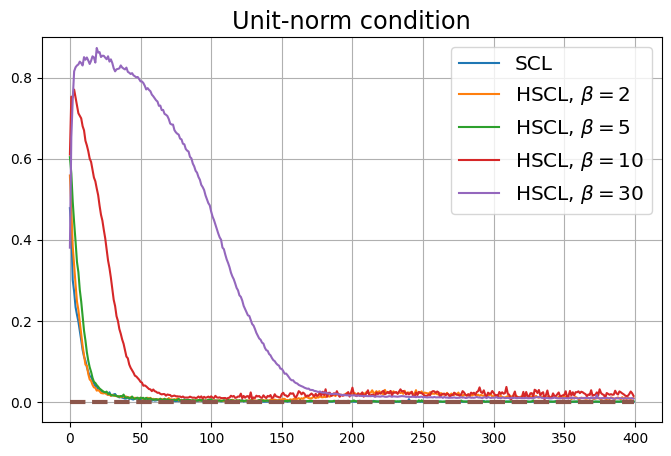}
    \end{minipage}
    \hfill
    \begin{minipage}[b]{0.48\textwidth}
        \includegraphics[trim=8 8 5 25, clip, width=\textwidth]{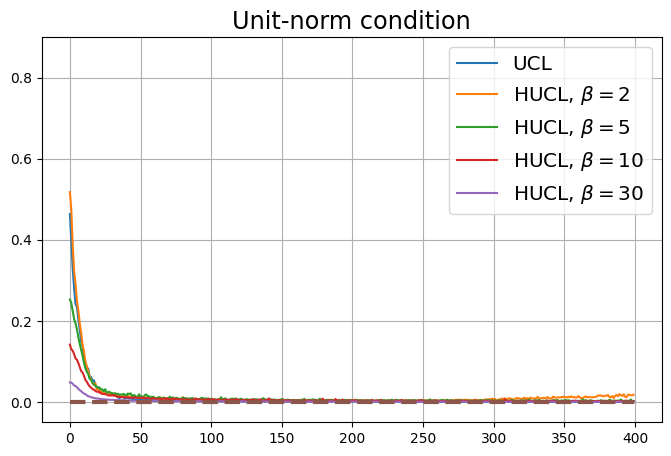}
    \end{minipage}
    \vfill
    \begin{minipage}[b]{0.48\textwidth}
        \includegraphics[trim=8 8 5 25, clip, width=\textwidth]{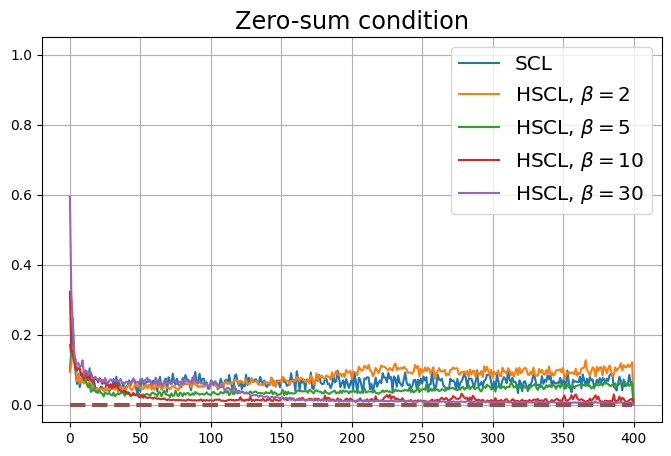}
    \end{minipage}
    \hfill
    \begin{minipage}[b]{0.48\textwidth}
        \includegraphics[trim=8 8 5 25, clip, width=\textwidth]{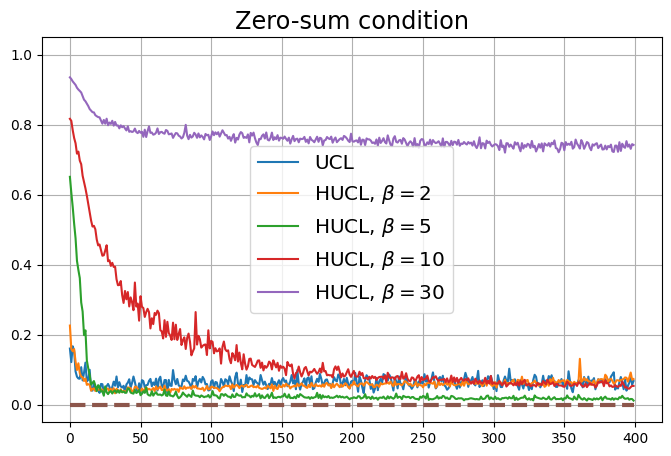}
    \end{minipage}
    \vfill
    \begin{minipage}[b]{0.48\textwidth}
        \includegraphics[trim=8 8 5 25, clip, width=\textwidth]{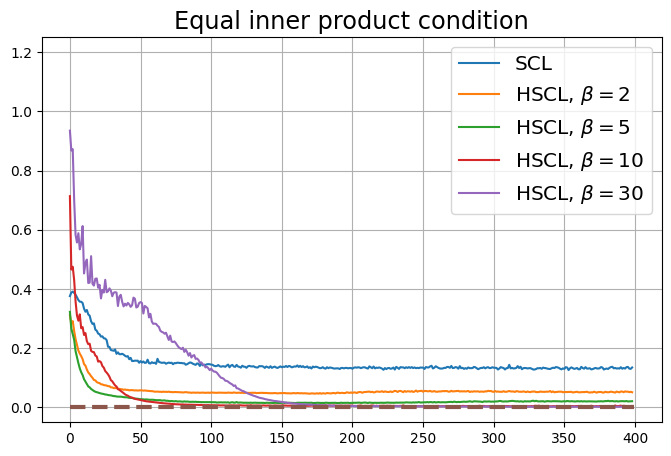}
    \end{minipage}
    \hfill
    \begin{minipage}[b]{0.48\textwidth}
        \includegraphics[trim=8 8 5 25, clip, width=\textwidth]{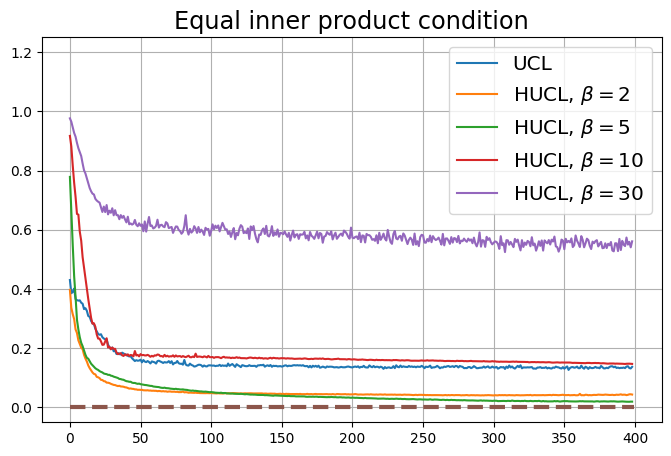}
    \end{minipage}

    \caption{Results for CIFAR100 under supervised settings (SCL, HSCL, left column) and unsupervised settings (UCL, HUCL, right column) with unit-ball normalization and random initialization when batch size is equal to 64. Top row: sorted normalized singular values of the empirical covariance matrix of class means (in representation space) in the last epoch plotted in log-scale. Rows 2--4: Zero-sum metric, Unit-norm metric, and Equal inner-product metric, all plotted against the number of epochs.}
    \label{fig:cifar100_batch64}
\end{figure*}

\begin{figure*}[h]
    \centering    
    \begin{minipage}[b]{0.48\textwidth}
        \includegraphics[trim=8 8 5 25, clip, width=\textwidth]{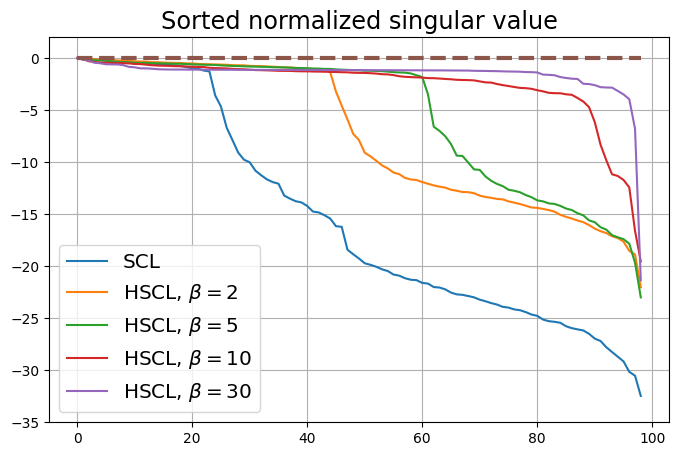}
    \end{minipage}
    \hfill
    \begin{minipage}[b]{0.48\textwidth}
        \includegraphics[trim=8 8 5 25, clip, width=\textwidth]{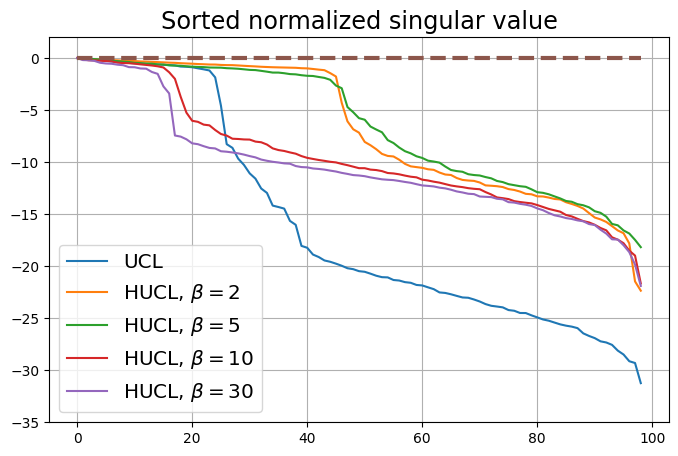}
    \end{minipage}
    \vfill
    \begin{minipage}[b]{0.48\textwidth}
        \includegraphics[trim=8 8 5 25, clip, width=\textwidth]{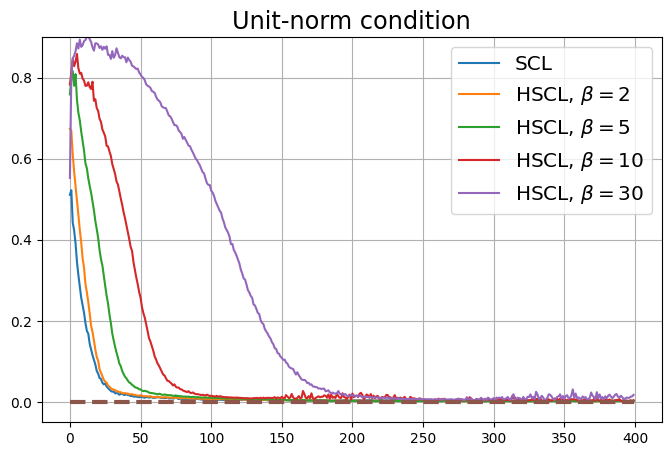}
    \end{minipage}
    \hfill
    \begin{minipage}[b]{0.48\textwidth}
        \includegraphics[trim=8 8 5 25, clip, width=\textwidth]{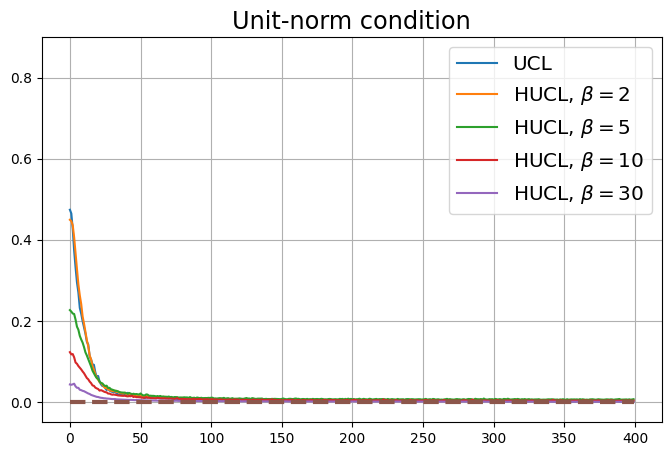}
    \end{minipage}
    \vfill
    \begin{minipage}[b]{0.48\textwidth}
        \includegraphics[trim=8 8 5 25, clip, width=\textwidth]{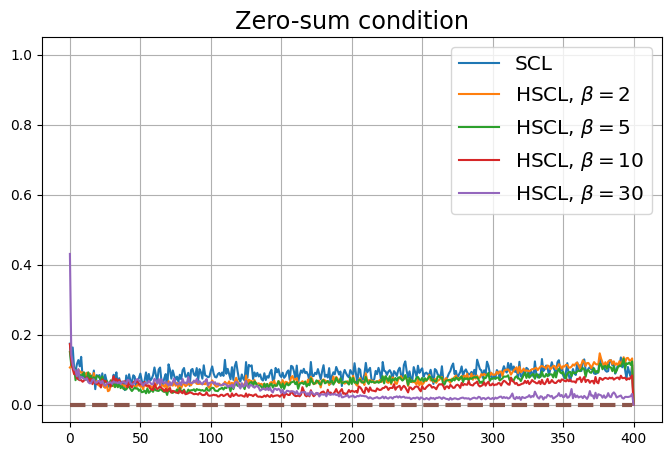}
    \end{minipage}
    \hfill
    \begin{minipage}[b]{0.48\textwidth}
        \includegraphics[trim=8 8 5 25, clip, width=\textwidth]{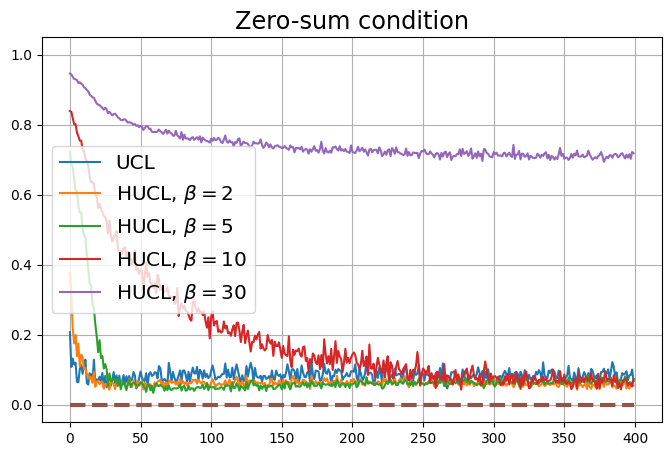}
    \end{minipage}
    \vfill
    \begin{minipage}[b]{0.48\textwidth}
        \includegraphics[trim=8 8 5 25, clip, width=\textwidth]{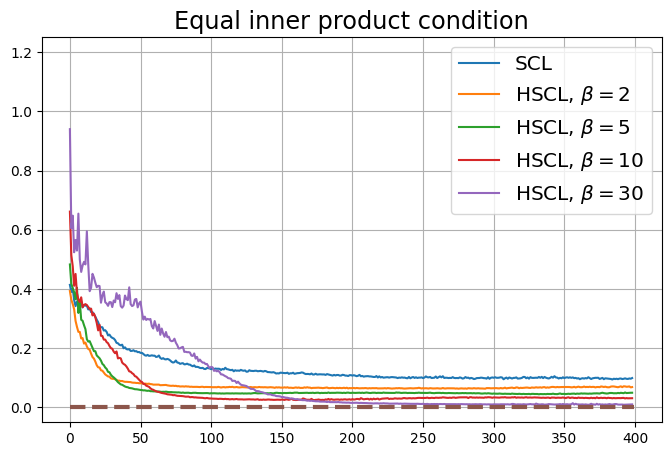}
    \end{minipage}
    \hfill
    \begin{minipage}[b]{0.48\textwidth}
        \includegraphics[trim=8 8 5 25, clip, width=\textwidth]{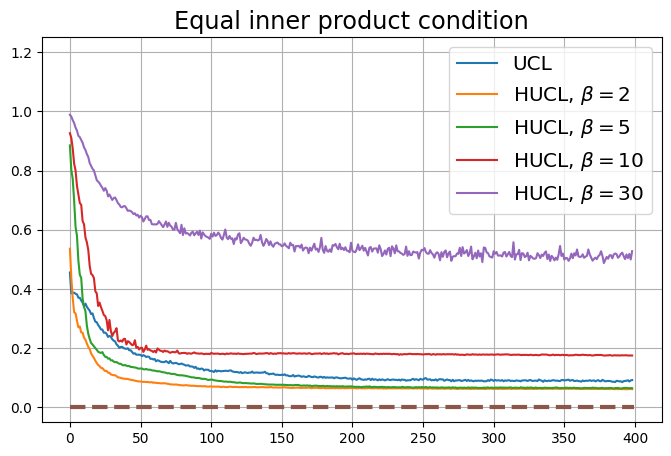}
    \end{minipage}

    \caption{Results for CIFAR100 under supervised settings (SCL, HSCL, left column) and unsupervised settings (UCL, HUCL, right column) with unit-ball normalization and random initialization when batch size is equal to 32. Top row: sorted normalized singular values of the empirical covariance matrix of class means (in representation space) in the last epoch plotted in log-scale. Rows 2--4: Zero-sum metric, Unit-norm metric, and Equal inner-product metric, all plotted against the number of epochs.}
    \label{fig:cifar100_batch32}
\end{figure*}

\subsection{Experiments with a different hardening function}\label{sec:nc_dc_hard}
To explore whether similar results can be achieved with a different hardening function, we investigated the impact that changing the hardening function has on NC and DC under unit-ball normalization. We conducted experiments using the CIFAR10 and CIFAR100 datasets adopting the setup consistent with previous experiments but used a new family of hardening functions having the following polynomial form: $\eta(t_{1:k}):= \prod_{i=1}^k {(\max\{t_i+1,0\})^{\epsilon}}, \epsilon > 0$, for the following set of hardness values $\epsilon = 3, 5, 10, 20$. We note that this family of hardening functions decays at a significantly slower rate compared to the exponential hardening function we used in all our previous experiments.

Results for CIFAR100 and CIFAR10 are plotted in Figs.~\ref{fig:cifar100_polynomial} and \ref{fig:cifar10_polynomial}, respectively. We observe phenomena similar to those in Figs.~\ref{fig:nc_cifar100} and \ref{fig:cifar10_1}. By selecting an appropriate hardening parameter $\epsilon$, we can achieve, or nearly achieve, NC in both supervised and unsupervised settings. Consequently, we can draw conclusions that are qualitatively similar to those in Sec.~\ref{nc_dc_exp_hard}. Specifically, hard-negative sampling in both supervised and unsupervised settings can mitigate DC while achieving NC.

\begin{figure*}[h!]
    \centering    
    \begin{minipage}[b]{0.48\textwidth}
        \includegraphics[trim=8 8 5 25, clip, width=\textwidth]{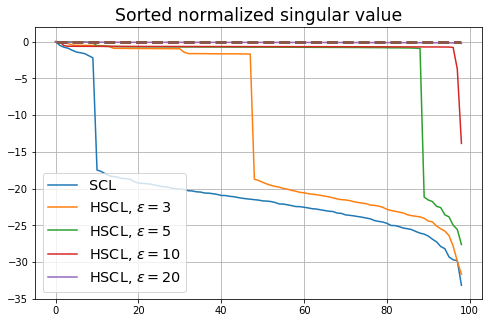}
    \end{minipage}
    \hfill
    \begin{minipage}[b]{0.48\textwidth}
        \includegraphics[trim=8 8 5 25, clip, width=\textwidth]{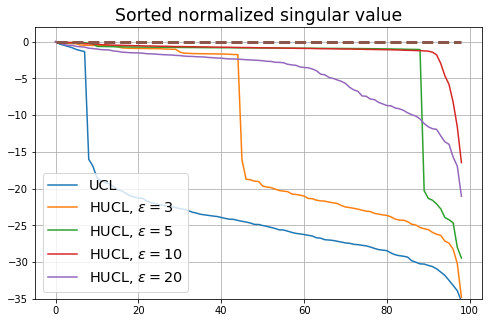}
    \end{minipage}
    \vfill
    \begin{minipage}[b]{0.48\textwidth}
        \includegraphics[trim=8 8 5 25, clip, width=\textwidth]{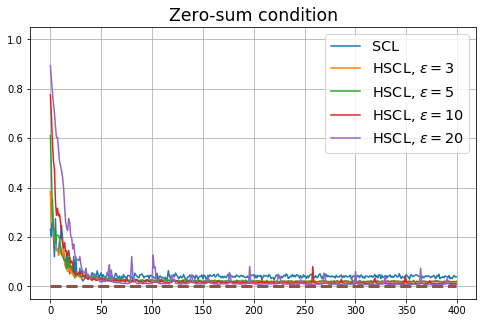}
    \end{minipage}
    \hfill
    \begin{minipage}[b]{0.48\textwidth}
        \includegraphics[trim=8 8 5 25, clip, width=\textwidth]{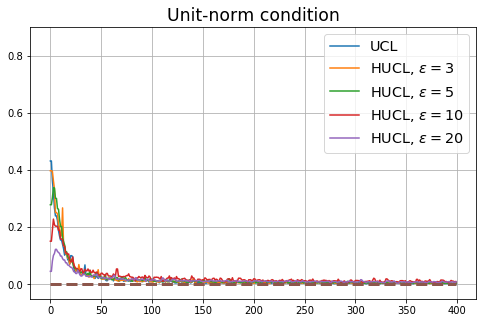}
    \end{minipage}
    \vfill
    \begin{minipage}[b]{0.48\textwidth}
        \includegraphics[trim=8 8 5 25, clip, width=\textwidth]{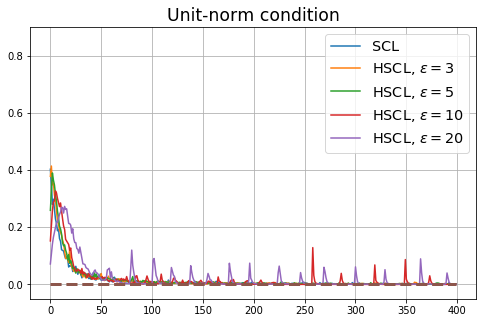}
    \end{minipage}
    \hfill
    \begin{minipage}[b]{0.48\textwidth}
        \includegraphics[trim=8 8 5 25, clip, width=\textwidth]{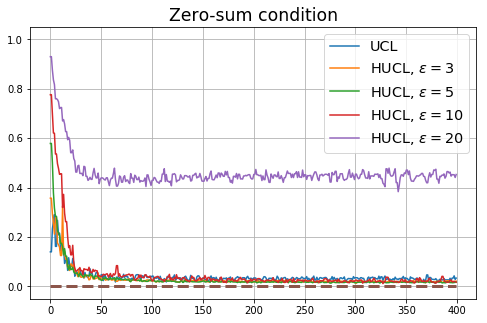}
    \end{minipage}
    \vfill
    \begin{minipage}[b]{0.48\textwidth}
        \includegraphics[trim=8 8 5 25, clip, width=\textwidth]{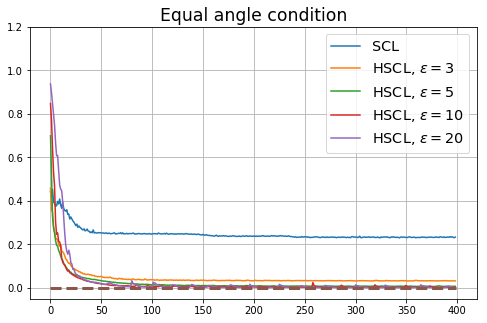}
    \end{minipage}
    \hfill
    \begin{minipage}[b]{0.48\textwidth}
        \includegraphics[trim=8 8 5 25, clip, width=\textwidth]{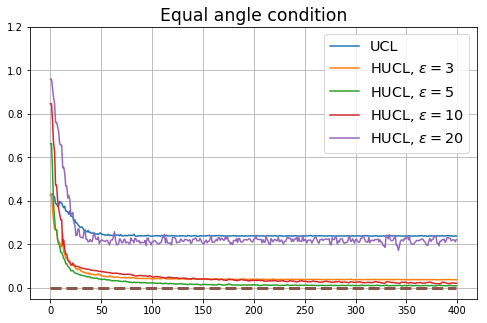}
    \end{minipage}

    \caption{Results for CIFAR100 with polynomial hardening function, under supervised settings (SCL, HSCL, left column) and unsupervised settings (UCL, HUCL, right column) with unit-ball normalization and random initialization. Top row: sorted normalized singular values of the empirical covariance matrix of class means (in representation space) in the last epoch plotted in log-scale. Rows 2--4: Zero-sum metric, Unit-norm metric, and Equal inner-product metric, all plotted against the number of epochs.}
    \label{fig:cifar100_polynomial}
\end{figure*}

\begin{figure*}[h!]
    \centering    
    \begin{minipage}[b]{0.48\textwidth}
        \includegraphics[trim=8 8 5 25, clip, width=\textwidth]{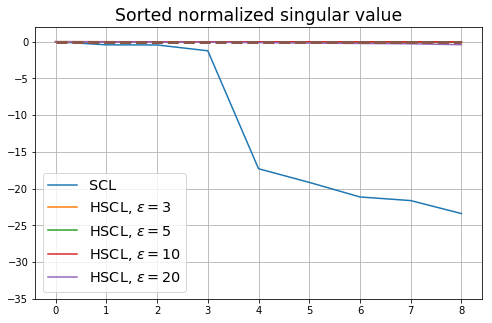}
    \end{minipage}
    \hfill
    \begin{minipage}[b]{0.48\textwidth}
        \includegraphics[trim=8 8 5 25, clip, width=\textwidth]{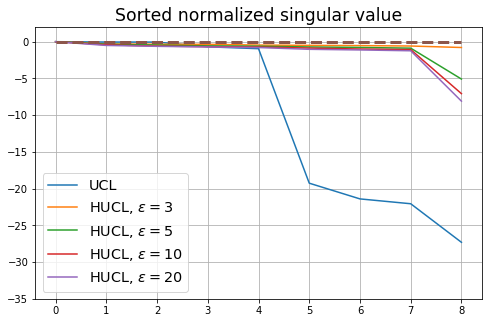}
    \end{minipage}
    \vfill
    \begin{minipage}[b]{0.48\textwidth}
        \includegraphics[trim=8 8 5 25, clip, width=\textwidth]{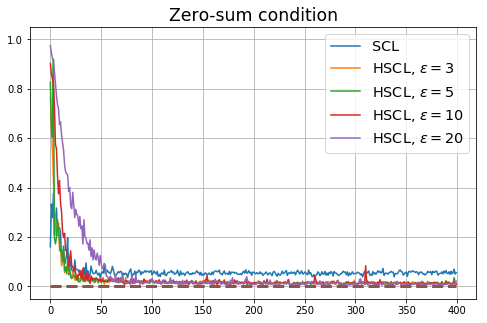}
    \end{minipage}
    \hfill
    \begin{minipage}[b]{0.48\textwidth}
        \includegraphics[trim=8 8 5 25, clip, width=\textwidth]{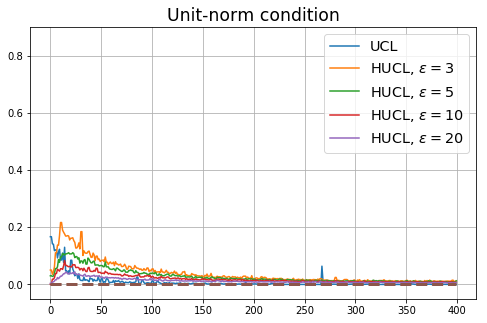}
    \end{minipage}
    \vfill
    \begin{minipage}[b]{0.48\textwidth}
        \includegraphics[trim=8 8 5 25, clip, width=\textwidth]{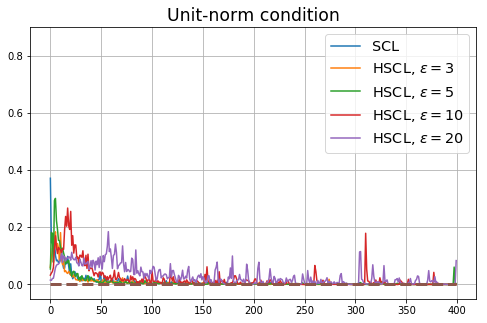}
    \end{minipage}
    \hfill
    \begin{minipage}[b]{0.48\textwidth}
        \includegraphics[trim=8 8 5 25, clip, width=\textwidth]{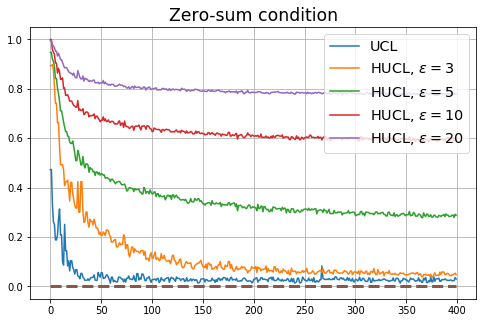}
    \end{minipage}
    \vfill
    \begin{minipage}[b]{0.48\textwidth}
        \includegraphics[trim=8 8 5 25, clip, width=\textwidth]{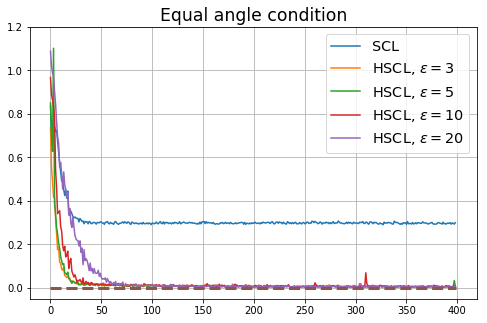}
    \end{minipage}
    \hfill
    \begin{minipage}[b]{0.48\textwidth}
        \includegraphics[trim=8 8 5 25, clip, width=\textwidth]{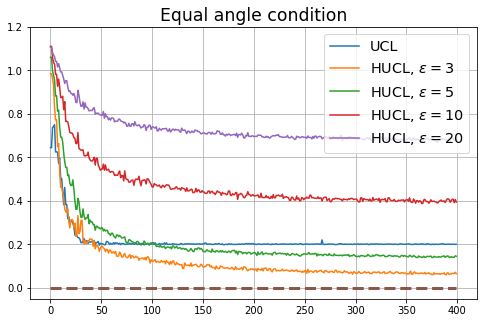}
    \end{minipage}

    \caption{Results for CIFAR10 with polynomial hardening function, under supervised settings (SCL, HSCL, left column) and unsupervised settings (UCL, HUCL, right column) with unit-ball normalization and random initialization. Top row: sorted normalized singular values of the empirical covariance matrix of class means (in representation space) in the last epoch plotted in log-scale. Rows 2--4: Zero-sum metric, Unit-norm metric, and Equal inner-product metric, all plotted against the number of epochs.}
    \label{fig:cifar10_polynomial}
\end{figure*}

\subsection{Experiments with the SimCLR framework} \label{sec:SimCLRexpts}
We conducted additional experiments using the state-of-the-art SimCLR framework for Contrastive Learning. Instead of sampling positive samples from the same class directly, we follow the setting in SimCLR which uses two independent augmentations of a reference sample to create the positive pair. No label information is used to generate the anchor-positive pair. 

Figure~\ref{fig:cifar100_SimCLR} shows results for SimCLR sampling using the loss function proposed in SimCLR which is the large-$k$ asymptotic form of the InfoNCE loss: 
\begin{align}
{L}^{SimCLR}(f) = \mathbb E_{p(x,x^+)}\Bigg[\log\Bigg(1 + \frac{Q\,\mathbb E_{p(x^-|x)}[e^{f(x)^\top f(x^-)}]}{e^{f(x)^\top\,f(x^+)}}\Bigg)\Bigg] \label{SimCLR_loss}
\end{align}
where $Q$ is a weighting hyper-parameter that is set to batch size minus two.
Compared to our previous experiments, all the NC metrics deviate significantly away from zero in both supervised and unsupervised settings and all hardness levels. Still, the high-level conclusions for DC are qualitatively similar to those from our previous experiments, specifically that hard-negative sampling can mitigate DC in both supervised and unsupervised settings. 
\begin{figure*}[h]
    \centering    
    \begin{minipage}[b]{0.48\textwidth}
        \includegraphics[trim=8 8 5 25, clip, width=\textwidth]{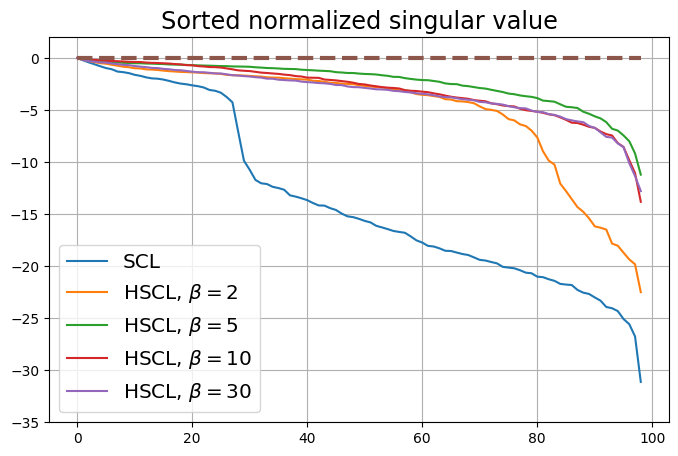}
    \end{minipage}
    \hfill
    \begin{minipage}[b]{0.48\textwidth}
        \includegraphics[trim=8 8 5 25, clip, width=\textwidth]{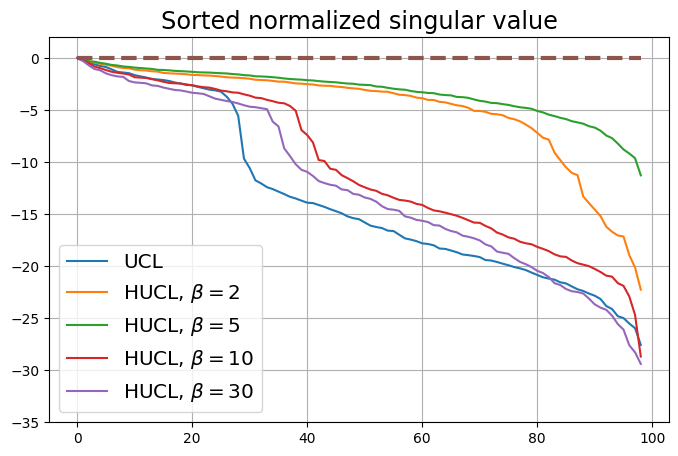}
    \end{minipage}
    \vfill
    \begin{minipage}[b]{0.48\textwidth}
        \includegraphics[trim=8 8 5 25, clip, width=\textwidth]{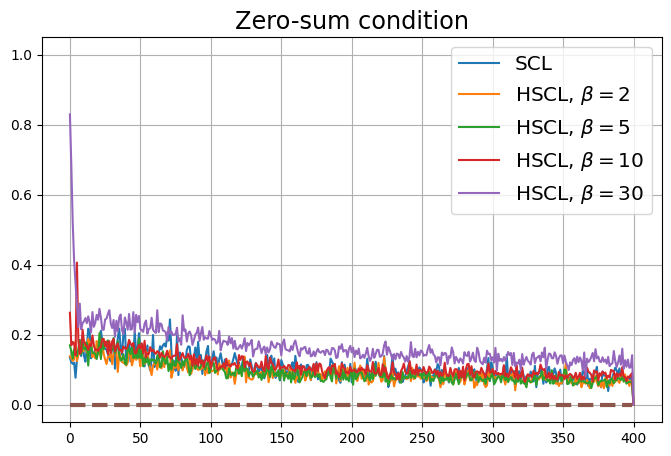}
    \end{minipage}
    \hfill
    \begin{minipage}[b]{0.48\textwidth}
        \includegraphics[trim=8 8 5 25, clip, width=\textwidth]{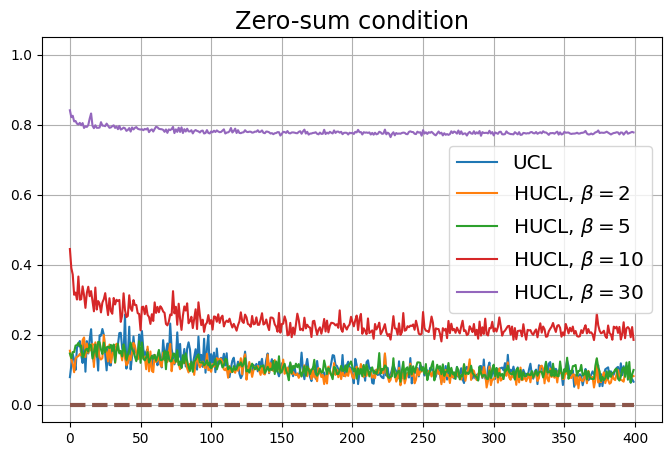}
    \end{minipage}
    \vfill
    \begin{minipage}[b]{0.48\textwidth}
        \includegraphics[trim=8 8 5 25, clip, width=\textwidth]{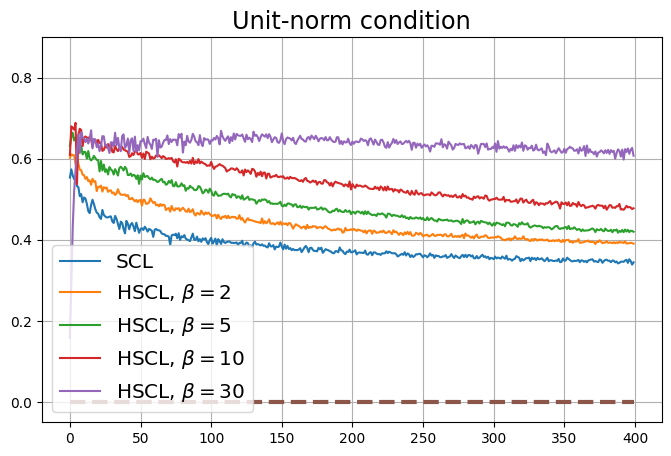}
    \end{minipage}
    \hfill
    \begin{minipage}[b]{0.48\textwidth}
        \includegraphics[trim=8 8 5 25, clip, width=\textwidth]{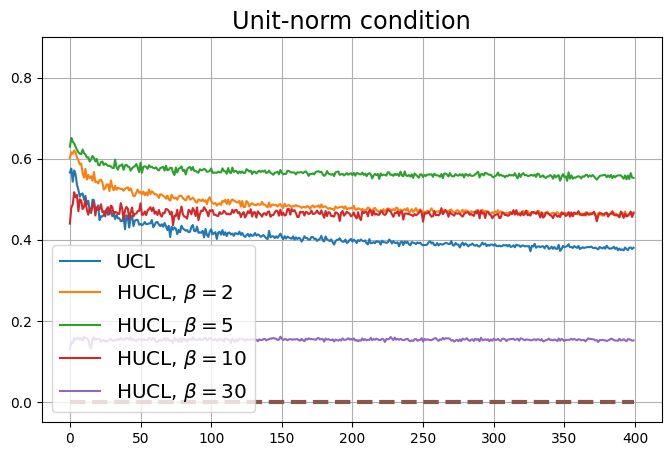}
    \end{minipage}
    \vfill
    \begin{minipage}[b]{0.48\textwidth}
        \includegraphics[trim=8 8 5 25, clip, width=\textwidth]{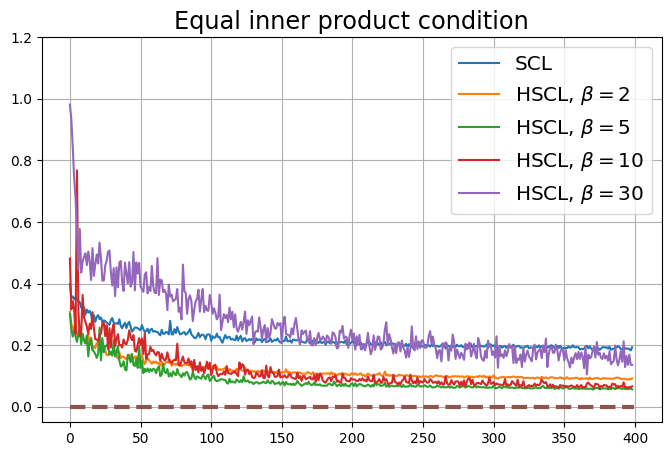}
    \end{minipage}
    \hfill
    \begin{minipage}[b]{0.48\textwidth}
        \includegraphics[trim=8 8 5 25, clip, width=\textwidth]{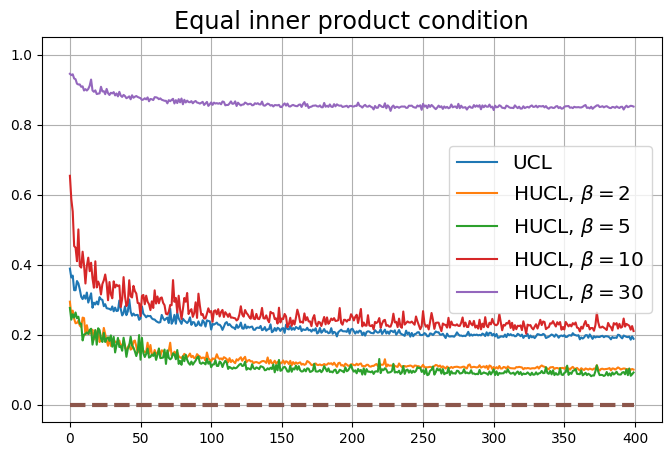}
    \end{minipage}

    \caption{Results for CIFAR100 with SimCLR sampling with SimCLR loss (Eq.44), under supervised settings (SCL, HSCL, left column) and unsupervised settings (UCL, HUCL, right column) with unit-ball normalization and random initialization. Top row: sorted normalized singular values of the empirical covariance matrix of class means (in representation space) in the last epoch plotted in log-scale. Rows 2--4: Zero-sum metric, Unit-norm metric, and Equal inner-product metric, all plotted against the number of epochs.}
    \label{fig:cifar100_SimCLR}
\end{figure*}

Figure~\ref{fig:cifar100_SimCLR_2} shows results for SimCLR sampling using the InfoNCE loss with $k=256$ and a positive distribution that only relies on the augmentation method.  The results are very similar to those in Fig.~\ref{fig:cifar100_SimCLR}.
Since the results in Figs.~\ref{fig:cifar100_SimCLR} and ~\ref{fig:cifar100_SimCLR_2} share the same SimCLR sampling framework but different losses, it follows that the failure to attain NC is not due to the particular loss function used, but the SimCLR sampling framework itself which does not utilize label information to generate samples. From the unit-norm metric plots in both figures it is clear that the final representations are mostly distributed \textit{within} the unit-ball than on its surface.

\begin{figure*}[h]
    \centering    
    \begin{minipage}[b]{0.48\textwidth}
        \includegraphics[trim=8 8 5 25, clip, width=\textwidth]{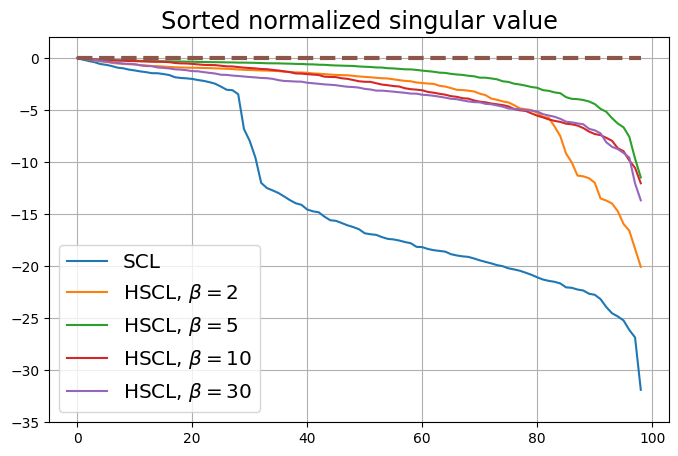}
    \end{minipage}
    \hfill
    \begin{minipage}[b]{0.48\textwidth}
        \includegraphics[trim=8 8 5 25, clip, width=\textwidth]{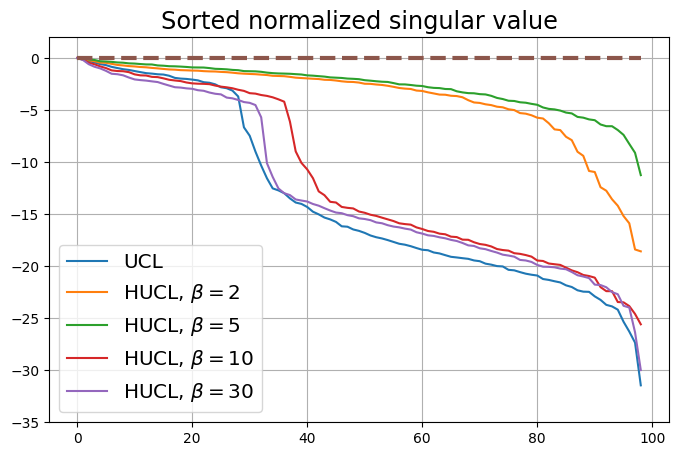}
    \end{minipage}
    \vfill
    \begin{minipage}[b]{0.48\textwidth}
        \includegraphics[trim=8 8 5 25, clip, width=\textwidth]{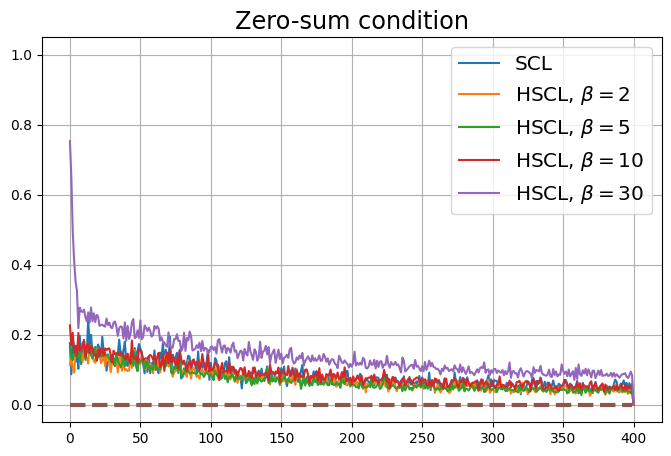}
    \end{minipage}
    \hfill
    \begin{minipage}[b]{0.48\textwidth}
        \includegraphics[trim=8 8 5 25, clip, width=\textwidth]{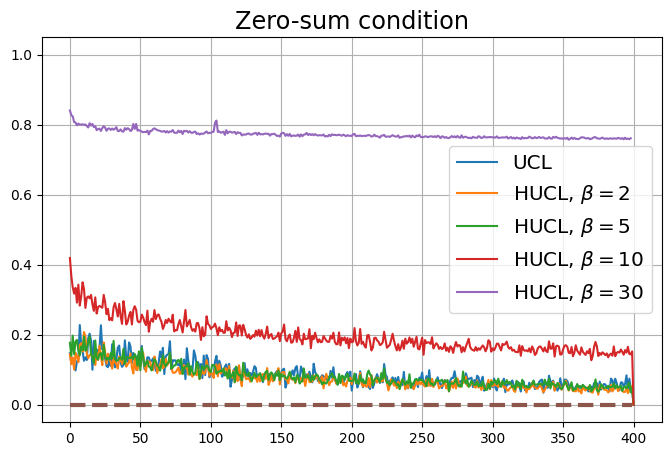}
    \end{minipage}
    \vfill
    \begin{minipage}[b]{0.48\textwidth}
        \includegraphics[trim=8 8 5 25, clip, width=\textwidth]{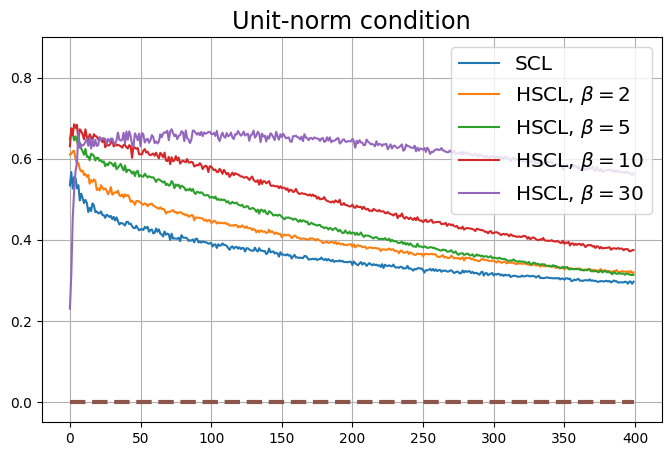}
    \end{minipage}
    \hfill
    \begin{minipage}[b]{0.48\textwidth}
        \includegraphics[trim=8 8 5 25, clip, width=\textwidth]{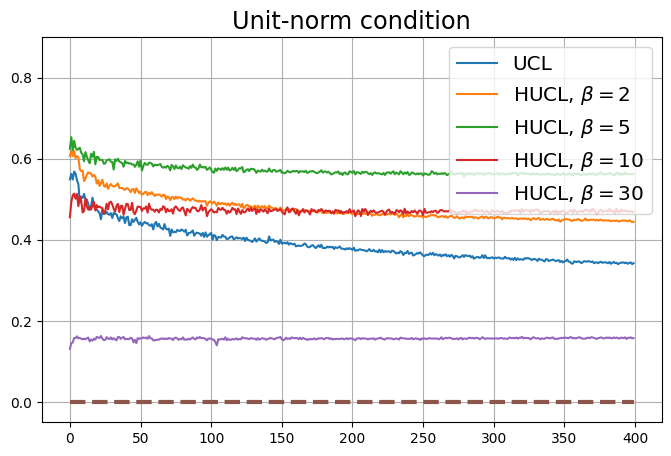}
    \end{minipage}
    \vfill
    \begin{minipage}[b]{0.48\textwidth}
        \includegraphics[trim=8 8 5 25, clip, width=\textwidth]{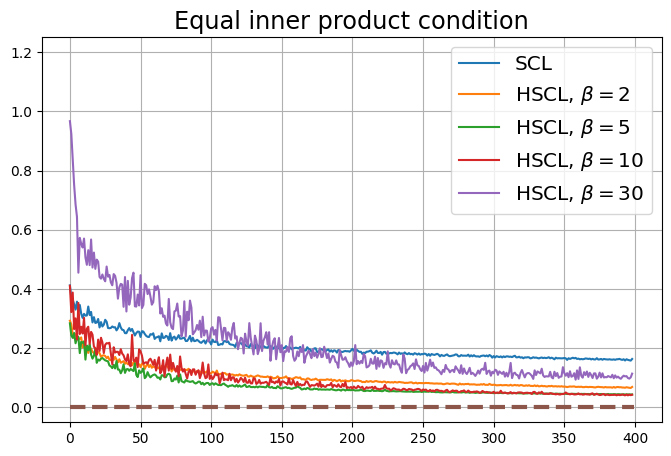}
    \end{minipage}
    \hfill
    \begin{minipage}[b]{0.48\textwidth}
        \includegraphics[trim=8 8 5 25, clip, width=\textwidth]{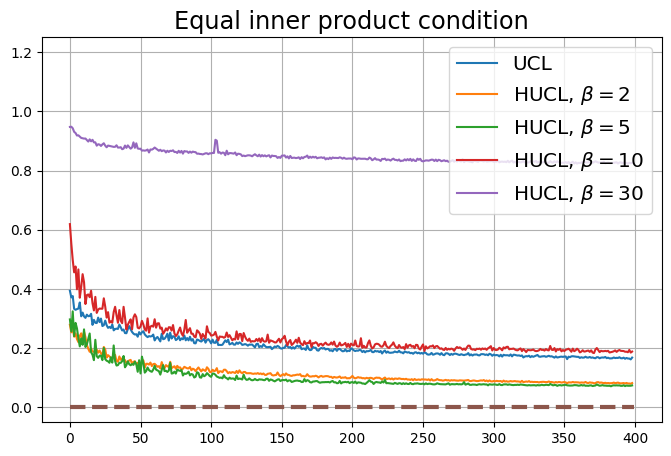}
    \end{minipage}

    \caption{Results for CIFAR100 with SimCLR sampling using InfoNCE loss, under supervised settings (SCL, HSCL, left column) and unsupervised settings (UCL, HUCL, right column) with unit-ball normalization and random initialization. Top row: sorted normalized singular values of the empirical covariance matrix of class means (in representation space) in the last epoch plotted in log-scale. Rows 2--4: Zero-sum metric, Unit-norm metric, and Equal inner-product metric, all plotted against the number of epochs.}
    \label{fig:cifar100_SimCLR_2}
\end{figure*}
 
\vskip 0.2in

\bibliographystyle{plain}
\bibliography{sample}

\end{document}